\definecolor{limegreen}{rgb}{0.2, 0.8, 0.2}
\definecolor{cherryblossompink}{rgb}{1.0, 0.72, 0.77}
\definecolor{darkpastelblue}{rgb}{0.47, 0.62, 0.8}
\definecolor{darkpastelpurple}{rgb}{0.59, 0.44, 0.84}
\definecolor{columbiablue}{rgb}{0.61, 0.87, 1.0}
\newcommand{\proposed}[1]{\textsc{Base-#1 Graph}}
\newcommand{\Proposed}[1]{\textsc{Base-#1 Graph}}
\newcommand{\simpleProposed}[1]{\textsc{Simple Base-#1 Graph}}
\newcommand{\SimpleProposed}[1]{\textsc{Simple Base-#1 Graph}}
\newcommand{\hyperhypercube}[1]{\textsc{#1-peer Hyper-hypercube Graph}}
\newtheorem{theorem}{Theorem}
\newtheorem{lemma}{Lemma}
\newtheorem{corollary}{Corollary}
\newtheorem{definition}{Definition}
\newtheorem{assumption}{Assumption}
\def\eqref#1{Eq.~(\ref{#1})}
\def\ceil#1{\lceil #1 \rceil}
\def\1{\bm{1}}
\def\vx{{\bm{x}}}
\def\mW{{\bm{W}}}
\def\mX{{\bm{X}}}
\DeclareMathAlphabet{\mathsfit}{\encodingdefault}{\sfdefault}{m}{sl}
\SetMathAlphabet{\mathsfit}{bold}{\encodingdefault}{\sfdefault}{bx}{n}
\title{Beyond Exponential Graph:\\ Communication-Efficient Topologies for Decentralized Learning via Finite-time Convergence}
\author{%
  Yuki Takezawa$^{1,2}$\thanks{Equal Contribution}, Ryoma Sato$^{1,2*}$, Han Bao$^{1,2}$, Kenta Niwa$^{3}$, Makoto Yamada$^{2}$ \\
  $^1$Kyoto University, $^2$OIST, $^3$NTT Communication Science Laboratories
}
\begin{document}
\maketitle

\begin{abstract}
Decentralized learning has recently been attracting increasing attention for its applications in parallel computation and privacy preservation.
Many recent studies stated that the underlying network topology with a faster consensus rate (a.k.a. spectral gap) leads to a better convergence rate and accuracy for decentralized learning.
However, a topology with a fast consensus rate, e.g., the exponential graph, generally has a large maximum degree,
which incurs significant communication costs.
Thus, seeking topologies with both a fast consensus rate and small maximum degree is important.
In this study, we propose a novel topology combining both a fast consensus rate and small maximum degree called the \proposed{$\left(k+1\right)$}.
Unlike the existing topologies,
the \proposed{$(k+1)$} enables all nodes to reach the exact consensus after a finite number of iterations for any number of nodes and maximum degree $k$.
Thanks to this favorable property, 
the \proposed{$(k+1)$} endows Decentralized SGD (DSGD) with both a faster convergence rate and more communication efficiency than the exponential graph.
We conducted experiments with various topologies,
demonstrating that the \proposed{$(k+1)$} enables various decentralized learning methods to achieve higher accuracy with better communication efficiency than the existing topologies.
Our code is available at \url{https://github.com/yukiTakezawa/BaseGraph}.
\end{abstract}

\section{Introduction}
Distributed learning, which allows training neural networks in parallel on multiple nodes,
has become an important paradigm owing to the increased utilization of privacy preservation and large-scale machine learning.
In a centralized fashion, such as All-Reduce and Federated Learning \cite{kairouz2021adcanves,karimireddy2020scaffold,tian2020federated,mcmahan2017communication,murata2021bias},
all or some selected nodes update their parameters by using their local dataset
and then compute the average parameter of these nodes,
although computing the average of many nodes is the major bottleneck in the training time \cite{lian2017can,lian2018asynchronous,lu2020moniqua}.
To reduce communication costs, decentralized learning gains significant attention \cite{koloskova2020unified,lian2017can,lu2021optimal}.
Because decentralized learning allows nodes to exchange parameters only with a few neighbors in the underlying network topology,
decentralized learning is more communication efficient than All-Reduce and Federated Learning.

While decentralized learning can improve communication efficiency,
it may degrade the convergence rate and accuracy due to its sparse communication characteristics \cite{koloskova2020unified,zhu2022topology}.
Specifically, the smaller the maximum degree of an underlying network topology is, the fewer the communication cost becomes \cite{song2022communication,wang2019matcha}; 
meanwhile, the faster the consensus rate (a.k.a. spectral gap) of a topology is, the faster the convergence rate of decentralized learning becomes \cite{koloskova2020unified}.
Thus, developing a topology with both a fast consensus rate and small maximum degree is essential for decentralized learning.
Table \ref{table:comparison_of_graphs} summarizes the properties of various topologies.
For instance, the ring and exponential graph are commonly used \cite{assran2019stochastic,cyffers2022muffliato,koloskova2019decentralized,lu2020moniqua}.
The ring is a communication-efficient topology because its maximum degree is two 
but its consensus rate quickly deteriorates as the number of nodes $n$ increases \cite{nedic2018network}.
The exponential graph has a fast consensus rate, which does not deteriorate much as $n$ increases,
but it incurs significant communication costs 
because its maximum degree increases as $n$ increases \cite{ying2021exponential}.
Thus, these topologies sacrifice either communication efficiency or consensus rate.

Recently, the $1$-peer exponential graph \cite{ying2021exponential} and $1$-peer hypercube graph \cite{shi2016finite} were proposed as topologies that combine both a small maximum degree and fast consensus rate (see Sec.~\ref{sec:1_peer_exp_and_1_peer_hypercube} for examples).
As Fig.~\ref{fig:consensus_5000} shows,
in the ring and exponential graph, node parameters only reach the consensus asymptotically by repeating exchanges of parameters with neighbors.
Contrarily, in the $1$-peer exponential and $1$-peer hypercube graphs, 
parameters reach the exact consensus after a finite number of iterations when $n$ is a power of $2$
(see Fig.~\ref{fig:consensus_with_power_of_two} in Sec.~\ref{sec:other_consensns_optimization}).
Thanks to this property of finite-time convergence,
the $1$-peer exponential and $1$-peer hypercube graphs
enable Decentralized SGD (DSGD) \cite{lian2017can} to converge
at the same convergence rate as the exponential graph when $n$ is a power of $2$,
even though the maximum degree of the $1$-peer exponential and $1$-peer hypercube graphs is only one \cite{ying2021exponential}.
\begin{wrapfigure}{r}[0pt]{0.45\textwidth}
\vskip - 0.15 in
 \centering
   \includegraphics[width=0.45\textwidth]{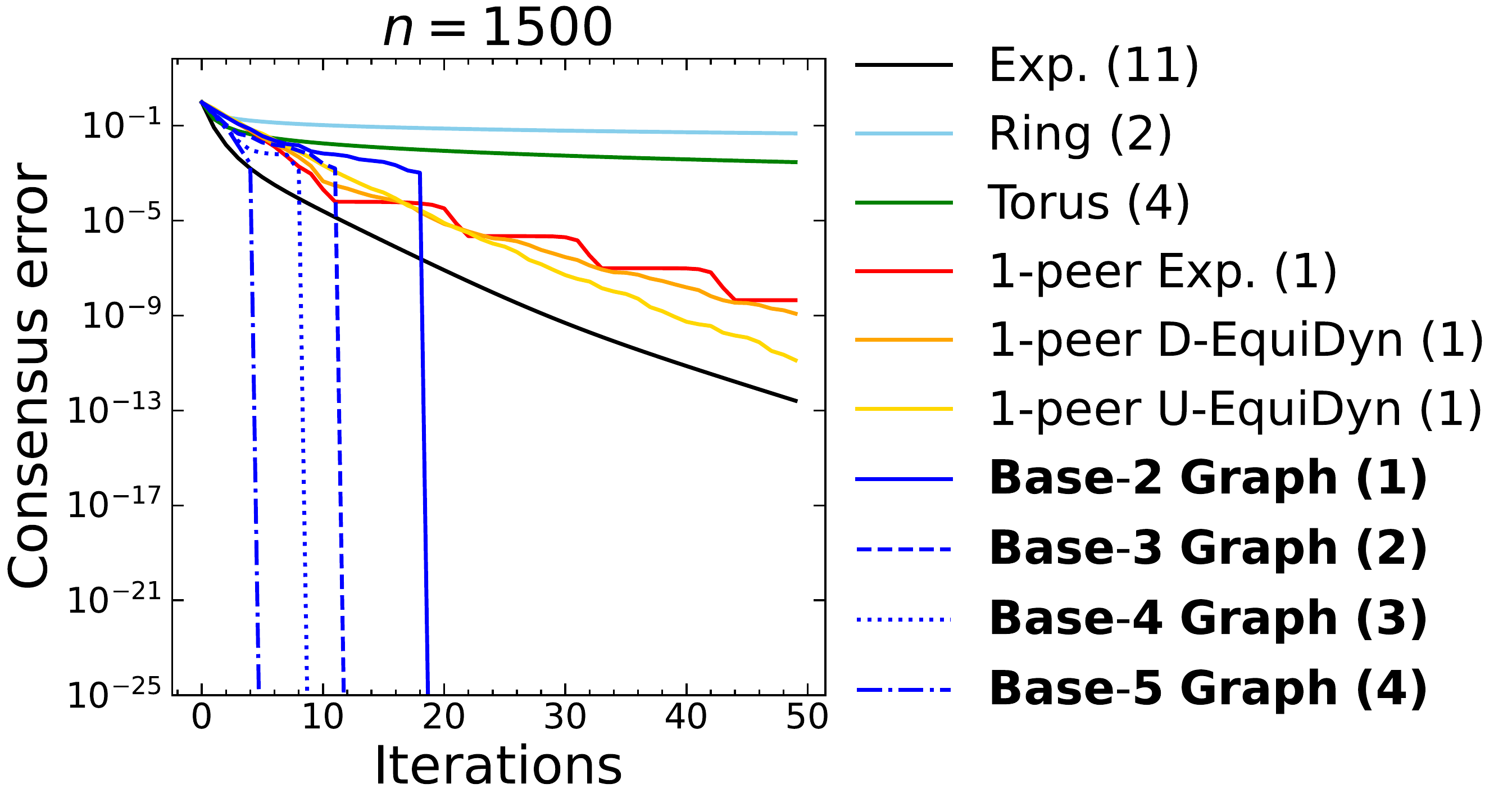}
   \vskip - 0.05 in
 \caption{Comparison of consensus rate. See Sec.~\ref{sec:experiment} for detailed experimental settings. The number in the bracket is the maximum degree.}
\label{fig:consensus_5000}
\vskip - 0.2 in
\end{wrapfigure}
However, this favorable property only holds when $n$ is a power of $2$.
When $n$ is not a power of $2$, 
the $1$-peer hypercube graph cannot be constructed,
and the $1$-peer exponential graph only reaches the consensus asymptotically as well as the ring and exponential graph, as Fig.~\ref{fig:consensus_5000} illustrates.
Thus, the $1$-peer exponential and $1$-peer hypercube graphs cannot enable DSGD to converge as fast as the exponential graph when $n$ is not a power of $2$.
Moreover, even if $n$ is a power of $2$, the $1$-peer hypercube and $1$-peer exponential graphs still cannot enable DSGD to converge faster than the exponential graph.

In this study, we ask the following question: \textit{Can we construct topologies that provide DSGD with both a faster convergence rate and better communication efficiency than the exponential graph for any number of nodes?} 
Our work provides the affirmative answer by proposing the \proposed{$(k+1)$},\footnote{Note that the maximum degree of the \proposed{$(k+1)$} is not $k+1$, but at most $k$.}
which is finite-time convergence for any number of nodes $n$ and maximum degree $k$ (see Fig.~\ref{fig:consensus_5000}).
Thanks to this favorable property,
the \proposed{$2$} enables DSGD to converge faster than the ring and torus and as fast as the exponential graph for any $n$,
while the \proposed{$2$} is more communication-efficient than the ring, torus, and exponential graph because its maximum degree is only one.
Furthermore, when $2 \leq k < \ceil{\log_2 (n)}$, 
the \proposed{$(k+1)$} enables DSGD to converge faster with fewer communication costs than the exponential graph
because the maximum degree of the \proposed{$(k+1)$} is still less than that of the exponential graph.
Experimentally, we compared the \proposed{$(k+1)$} with various existing topologies,
demonstrating that the \proposed{$(k+1)$} enables various decentralized learning methods to more successfully reconcile accuracy and communication efficiency than the existing topologies.

\begin{table}[t!]
\vskip - 0.3 in
\centering
\caption{Comparison among different topologies with $n$ nodes. The definition of the consensus rate and finite-time convergence is shown in Sec.~\ref{sec:preliminary}.}
\label{table:comparison_of_graphs}
\resizebox{\linewidth}{!}{
\begin{tabular}{lcccc}
\toprule
\textbf{Topology} & \textbf{Consensus Rate} & \textbf{Connection} & \textbf{Maximum Degree} & \textbf{\#Nodes $n$} \\
\midrule
Ring \cite{nedic2018network} &  $1 - O(n^{-2})$                 & Undirected &     $2$                             & $\forall n \in \mathbb{N}$ \\
Torus \cite{nedic2018network} &  $1 - O(n^{-1})$                & Undirected &     $4$                             & $\forall n \in \mathbb{N}$ \\
Exp. \cite{ying2021exponential} & $1 - O((\log_2 (n))^{-1})$    & Directed & $\ceil{\log_2 (n)}$          & $\forall n \in \mathbb{N}$ \\
1-peer Exp. \cite{ying2021exponential}  & $O(\log_2 (n))$-finite time conv.  & Directed   & $1$                             & A power of 2 \\
1-peer Hypercube \cite{shi2016finite} & $O(\log_2 (n))$-finite time conv.    & Undirected & $1$ & A power of 2 \\
\textbf{Base-$(k+1)$ Graph (ours)} & $O(\log_{k+1} (n))$-finite time conv.   & Undirected & $k$ & $\forall n \in \mathbb{N}$ \\
\bottomrule
\end{tabular}}
\vskip - 0.25 in
\end{table}

\section{Related Work}
\textbf{Decentralized Learning.}
The most widely used decentralized learning methods are DSGD \cite{lian2017can} and its adaptations \cite{assran2019stochastic,chen2021accelerating,lian2018asynchronous}.
Many researchers have improved DSGD and proposed DSGD with momentum \cite{gao2020periodic,lin2021quasi,yu2019on,yuan202decentlam}, communication compression methods \cite{horvath2021better,koloskova2020decentralized,lu2020moniqua,tang2018communication,vogels2020practical}, etc.
While DSGD is a simple and efficient method, DSGD is sensitive to data heterogeneity \cite{tang2018d2}.
To mitigate this issue, various methods have been proposed, which can eliminate the effect of data heterogeneity on the convergence rate, including gradient tracking \cite{lorenzo2016next,nedic2017achieving,pu2018distributed,takezawa2022momentum,xin2020distributed,zhao2022beer}, 
$D^2$ \cite{tang2018d2}, etc. \cite{li2019decentralized,vogels2021relaysum,yuan2019exact}.

\textbf{Effect of Topologies.}
Many prior studies indicated that topologies with a fast consensus rate improve the accuracy of decentralized learning \cite{koloskova2020unified,kong2021consensus,marfoq2020throughput,wang2019matcha}.
For instance, DSGD and gradient tracking converge faster as the topology has a faster consensus rate \cite{lian2017can,takezawa2022momentum}.
\citet{zhu2022topology} revealed that topology with a fast consensus rate improves the generalization bound of DSGD.
Especially when the data distributions are statistically heterogeneous, 
the topology with a fast consensus rate prevents the parameters of each node from drifting away and can improve accuracy \cite{koloskova2020unified,takezawa2022momentum}.
However, communication costs increase as the maximum degree increases \cite{song2022communication,ying2021exponential}.
Thus, developing a topology with a fast consensus rate and small maximum degree is important for decentralized learning.

\section{Preliminary and Notation}
\label{sec:preliminary}

\textbf{Notation.}
A graph $G$ is represented by $(V, E)$ where $V$ is a set of nodes and $E$ is a set of edges.
If $G$ is a graph, $V(G)$ (resp. $E(G)$) denotes the set of nodes (resp. edges) of $G$.
For any $a_1, \cdots, a_n$, $(a_1, \cdots, a_n)$ denotes the ordered set.
An empty (ordered) set is denoted by $\emptyset$.
For any $n \in \mathbb{N}$, let $[n] \coloneqq \{1, \cdots, n\}$.
For any $n, a \in \mathbb{N}$,
$\text{mod}(a, n)$ is the remainder of dividing $a$ by $n$.
$\| \cdot \|_F$ denotes Frobenius norm,
and $\mathbf{1}_n$ denotes an $n$-dimensional vector with all ones.

\textbf{Topology.}
Let $G$ be an underlying network topology with $n$ nodes,
and $\mW \in [0,1]^{n\times n}$ be a mixing matrix associated with $G$.
That is, $W_{ij}$ is the weight of the edge $(i,j)$, and $W_{ij} > 0$ if and only if $(i,j) \in E(G)$.
Most of the decentralized learning methods require $\mW$ to be doubly stochastic (i.e., $\mW \mathbf{1}_n = \mathbf{1}_n$ and $\mW^\top \mathbf{1}_n = \mathbf{1}_n$) \cite{lian2017can,lin2021quasi,nedic2017achieving,tang2018d2}.
Then, the consensus rate of $G$ is defined below.
\begin{definition}
\label{definition:consensus_rate}
Let $\mW$ be a mixing matrix associated with a graph $G$ with $n$ nodes. 
Let $\vx_i \in \mathbb{R}^d$ be a parameter that node $i$ has.
Let $\mX \coloneqq (\vx_1, \cdots, \vx_n) \in \mathbb{R}^{d \times n}$ and $\bar{\mX} \coloneqq \tfrac{1}{n} \mX \mathbf{1}_n \mathbf{1}^\top_n$.
The consensus rate $\beta \in [0, 1)$ is the smallest value that satisfies $\left\| \mX \mW - \bar{\mX} \right\|^2_F \leq \beta^2 \left\| \mX - \bar{\mX} \right\|^2_F$ for any $\mX$.
\end{definition}

Thanks to $\beta \in [0,1)$, $\vx_i$ asymptotically converge to consensus $\tfrac{1}{n} \sum_{i=1}^n \vx_i$ 
by repeating parameter exchanges with neighbors.
However, this does not mean that all nodes reach the exact consensus within a finite number of iterations except when $\beta=0$, that is, when $G$ is fully connected. 
Then, utilizing time-varying topologies, \citet{ying2021exponential} and \citet{shi2016finite} aimed to obtain sequences of graphs that can make all nodes reach the exact consensus within finite iterations and proposed the $1$-peer exponential and $1$-peer hypercube graphs respectively
(see Sec.~\ref{sec:1_peer_exp_and_1_peer_hypercube} for illustrations).

\begin{definition}
\label{def:finite_time_convergence}
Let $(G^{(1)}, \cdots, G^{(m)})$ be a sequence of graphs with the same set of nodes (i.e., $V(G^{(1)})=\cdots=V(G^{(m)})$).
Let $n$ be the number of nodes. 
Let $\mW^{(1)}, \cdots, \mW^{(m)}$ be mixing matrices associated with $G^{(1)}, \cdots, G^{(m)}$, respectively.
Suppose that $\mW^{(1)}, \cdots, \mW^{(m)}$ satisfy $\mX \mW^{(1)} \mW^{(2)} \cdots \mW^{(m)} = \bar{\mX}$ for any $\mX \in \mathbb{R}^{d \times n}$, where $\bar{\mX} = \tfrac{1}{n} \mX \mathbf{1}_n \mathbf{1}_n^\top$.
Then, $(G^{(1)}, \cdots, G^{(m)})$ is called $m$-finite time convergence or an $m$-finite time convergent sequence of graphs.
\end{definition}
Because Definition \ref{def:finite_time_convergence} assumes that $V(G^{(1)}) = \cdots = V(G^{(m)})$ holds,
we often write a sequence of graphs $(G^{(1)}, \cdots, G^{(m)})$ as $(E(G^{(1)}), \cdots, E(G^{(m)}))$
using a slight abuse of notation.
Additionally, in the following section, we often abbreviate the weights of self-loops because they are uniquely determined due to the condition that the mixing matrix is doubly stochastic.

\begin{figure}[b!]
    \vskip - 0.25 in
    \centering
    \begin{subfigure}{0.375\hsize}
        \includegraphics[width=\hsize]{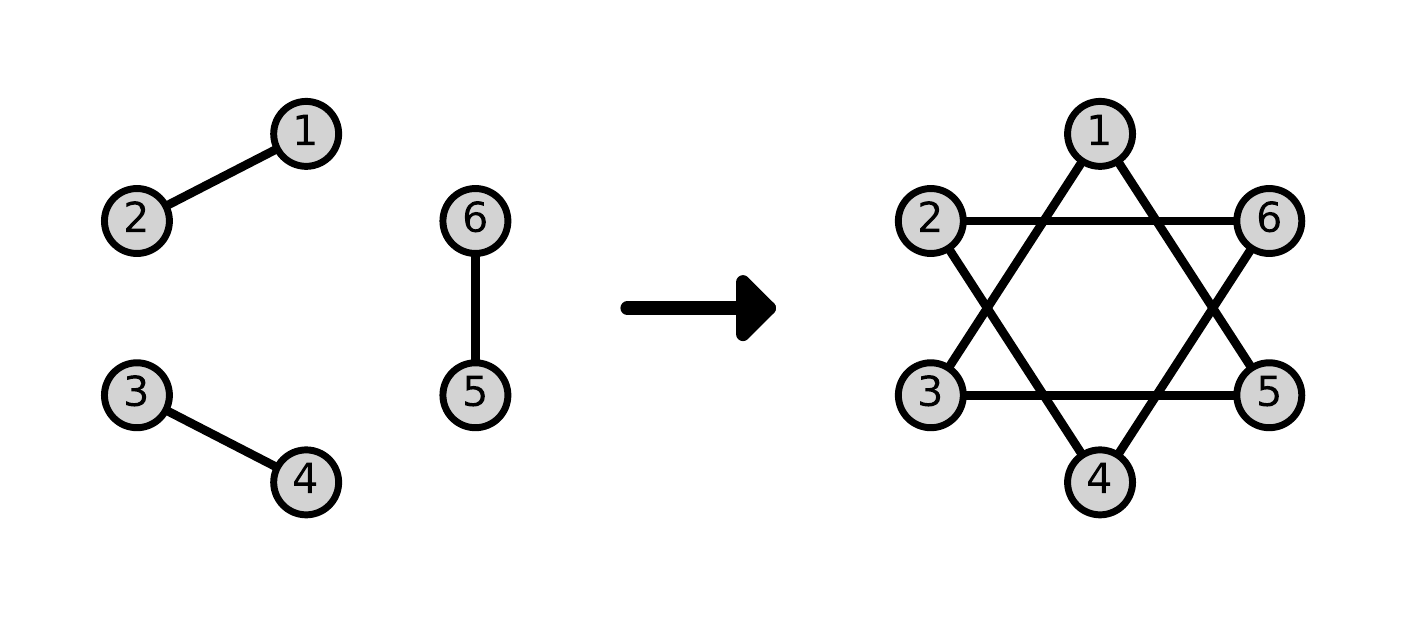}
        \vskip - 0.15 in
        \caption{$n=6 (= 2\times 3)$}
        \label{fig:2_peer_hyperhypercube_6}
    \end{subfigure}
    \hfill
    \begin{subfigure}{0.575\hsize}
        \includegraphics[width=\hsize]{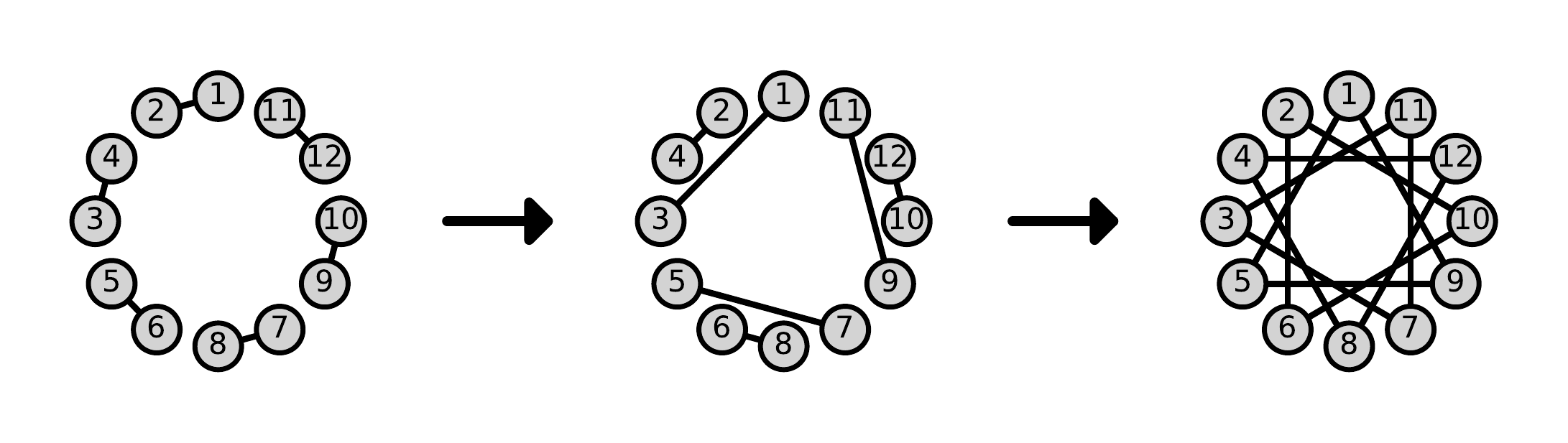}
        \vskip - 0.15 in
        \caption{$n=12 (= 2 \times 2 \times 3)$}
        \label{fig:2_peer_hyperhypercube_12}
    \end{subfigure}
    \vskip - 0.05 in
    \caption{Illustration of the \hyperhypercube{$2$}.}
    \label{fig:hyperhypercube}
\vskip - 0.2 in
\end{figure}

\section{Construction of Finite-time Convergent Sequence of Graphs}
In this section, 
we propose the \proposed{$(k+1)$},
which is finite-time convergence for any number of nodes $n \in \mathbb{N}$ and maximum degree $k \in [n-1]$.
Specifically, we consider the setting where node $i$ has a parameter $\vx_i$ and propose a graph sequence whose maximum degree is at most $k$ that makes all nodes reach the exact consensus $\tfrac{1}{n} \sum_{i=1}^n \vx_i$.
To this end, we first propose the \hyperhypercube{$k$},
which is finite-time convergence when $n$ does not have prime factors larger than $k+1$.
Using it, 
we propose the \simpleProposed{$(k+1)$} and \proposed{$(k+1)$},
which are finite-time convergence for any $n$.

\subsection{$k$-peer Hyper-hypercube Graph}
\label{sec:hyperhypercube}
\begin{algorithm}[t!]
\caption{\textsc{$k$-peer Hyper-hypercube Graph} $\mathcal{H}_k (V)$}
\label{alg:hyperhypercube}
\begin{algorithmic}[1]
\State \textbf{Input:} the set of nodes $V \coloneqq \{v_1, \cdots, v_n\}$ and number of nodes $n$.
\State Decompose $n$ as $n = n_1 \times \cdots \times n_L$ with minimum $L$ such that $n_l \in [k+1]$ for all $l \in [L]$.
\For{$l \in [L]$}
\State Initialize $b_i$ to zero for all $i \in [n]$ and $E^{(l)}$ to $\emptyset$.
\For{$i \in [n]$}
\For{$ m \in [n_{l}]$}
\State $j \leftarrow \text{mod}(i + m \times \prod_{l^\prime=1}^{l-1} n_{l^\prime}-1, n) + 1$.
\If{$b_i < n_{l} - 1$ and $b_j < n_{l} - 1$}
\State Add edge $(v_i, v_j)$ with weight $\tfrac{1}{n_l}$ to $E^{(l)}$ and $b_i \leftarrow b_i + 1$.
\EndIf
\EndFor
\EndFor
\EndFor
\State \Return $(E^{(1)}, E^{(2)}, \cdots, E^{(L)})$.
\end{algorithmic}
\end{algorithm}
\setlength{\textfloatsep}{0pt}
Before proposing the \proposed{$(k+1)$}, we first extend the $1$-peer hypercube graph \cite{shi2016finite} to the $k$-peer setting
and propose the \hyperhypercube{$k$},
which is finite-time convergence when the number of nodes $n$ does not have prime factors larger than $k+1$
and is used as a component in the \proposed{$(k+1)$}.
Let $V$ be a set of $n$ nodes.
We assume that all prime factors of $n$ are less than or equal to $k+1$.
That is, there exists $n_1, n_2, \cdots, n_L \in [k+1]$ such that $n = n_1 \times \cdots \times n_L$.
In this case, we can construct the $L$-finite time convergent sequence of graphs whose maximum degree is at most $k$.
Using Fig.~\ref{fig:2_peer_hyperhypercube_6},
we explain how all nodes reach the exact consensus.
Let $G^{(1)}$ and $G^{(2)}$ denote the graphs in Fig.~\ref{fig:2_peer_hyperhypercube_6} from left to right, respectively.
After the nodes exchange parameters with neighbors in $G^{(1)}$,
nodes $1$ and $2$, nodes $3$ and $4$, and nodes $5$ and $6$ have the same parameter respectively.
Then, after exchanging parameters in $G^{(2)}$,
all nodes reach the exact consensus.
We present the complete algorithms for constructing the \hyperhypercube{$k$} in Alg.~\ref{alg:hyperhypercube}.

\subsection{Simple Base-$(k+1)$ Graph}
\label{sec:k_peer_simple_FAIRY}
As described in Sec.~\ref{sec:hyperhypercube}, when $n$ do not have prime factors larger than $k+1$, we can easily make all nodes reach the exact consensus by the \hyperhypercube{$k$}.
However, when $n$ has prime factors larger than $k+1$, e.g., when $(k,n)=(1,5)$, 
the \hyperhypercube{$k$} cannot be constructed. 
In this section, we extend the \hyperhypercube{$k$} and propose the \simpleProposed{$(k+1)$},
which is finite-time convergence for any number of nodes $n$ and maximum degree $k$. 
Note that the maximum degree of the \simpleProposed{$(k+1)$} is not $k+1$, but at most $k$.

We present the pseudo-code for constructing the \simpleProposed{$(k+1)$} in Alg.~\ref{alg:simple_k_peer_FAIRY}.
For simplicity, here we explain only the case when the maximum degree $k$ is one.
The case with $k \geq 2$ is explained in Sec.~\ref{sec:simple_base_with_large_k}.
The \simpleProposed{$(k+1)$} mainly consists of the following four steps.
The key idea is that splitting $V$ into disjoint subsets to which the \hyperhypercube{$k$} is applicable.
\begin{description}
    \item[Step 1.] As in the base-$2$ number of $n$, we decompose $n$ as $n = 2^{p_1} + \cdots + 2^{p_L}$ in line 1, and then split $V$ into disjoint subsets $V_1, \cdots, V_L$ such that $|V_l| = 2^{p_l}$ for all $l \in [L]$ in line $3$.\footnote{Splitting $V_l$ into $V_{l,1}, \cdots, V_{l,a_l}$ becomes crucial when $k \geq 2$ (see Sec.~\ref{sec:simple_base_with_large_k}).}
    \item[Step 2.] For all $l \in [L]$, we make all nodes in $V_l$ obtain the average of parameters in $V_l$ using the \hyperhypercube{$1$} $\mathcal{H}_1 (V_l)$ in line 11. Then, we initialize $l^\prime$ as one.
    \item[Step 3.] Each node in $V_{l^\prime + 1} \cup \cdots \cup V_L$ exchanges parameters with one node in $V_{l^\prime}$ such that the average in $V_{l^\prime}$ becomes equivalent to the average in $V$. We increase $l^\prime$ by one and repeat step $3$ until $l^\prime = L$. This procedure corresponds to line 15.
    \item[Step 4.] For all $l \in [L]$, we make all nodes in $V_l$ obtain the average in $V_l$ using the \hyperhypercube{$1$} $\mathcal{H}_1 (V_l)$. Because the average in $V_l$ is equivalent to the average in $V$ after step $3$, all nodes can reach the exact consensus. This procedure corresponds to line 25.
\end{description}

\begin{figure}[b!]
    \centering
    \includegraphics[width=1.0\hsize]{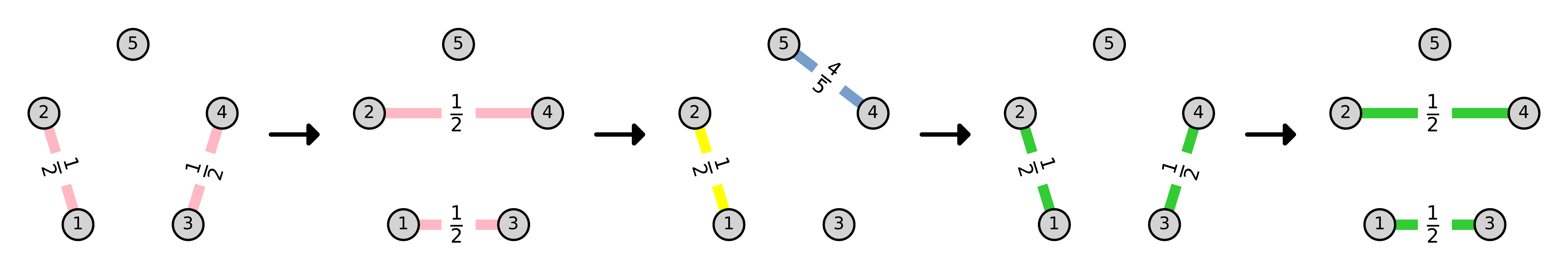}
    \vskip - 0.1 in
    \caption{\SimpleProposed{$2$} with $n=5(=2^2 + 1)$. The value on the edge is the edge weight, and the edges are colored in the same color as the line in Alg.~\ref{alg:simple_k_peer_FAIRY} where they were added.}
    \label{fig:1_peer_simple_FAIRY_5}
    \vskip - 0.2 in
\end{figure}

\begin{algorithm}[b!]
\caption{\textsc{Simple Base-$(k+1)$ Graph} $\mathcal{A}_k^{\text{simple}} (V)$}
\label{alg:simple_k_peer_FAIRY}
\begin{algorithmic}[1]
\State \textbf{Input:} the set of nodes $V$ and number of nodes $n (= a_1 (k+1)^{p_1} + a_2 (k+1)^{p_2} + \cdots + a_L (k+1)^{p_L})$ such that $p_1 > p_2 > \cdots > p_L \geq 0$ and $a_l \in [k]$ for all $l \in [L]$.
\State \textbf{If} all prime factors of $n$ are less than or equal to $k+1$ \textbf{then} \Return $\mathcal{H}_k (V)$.
\State Split $V$ into disjoint subsets $V_1, \cdots, V_{L}$ such that $|V_{l}| = a_l (k+1)^{p_l}$ for all $l \in [L]$. Then, for all $l \in [L]$, split $V_l$ into disjoint subsets $V_{l, 1}, \cdots, V_{l,a_l}$ such that $|V_{l,a}| = (k+1)^{p_l}$ for all $a \in [a_l]$.
\State Construct \hyperhypercube{$k$} $\mathcal{H}_k (V_l)$ for all $l \in [L]$ and $m_1 = |\mathcal{H}_k (V_1)|$.
\State Construct \hyperhypercube{$k$} $\mathcal{H}_k (V_{l,a})$ for all $l \in [L]$ and $a \in [a_l]$.
\State Initialize $b_{l}$ as zero for all $l \in [L]$, and initialize $m$ as zero.
\While{$b_{1} < |\mathcal{H}_k (V_{1,1})|$}
\State $m \leftarrow m+1$ and $E^{(m)} \leftarrow \emptyset$.
\For{$l \in \{ L, L-1, \cdots, 1 \}$}
\If{$m \leq m_1$}
\State \fcolorbox{cherryblossompink}{cherryblossompink}{Add $E(\mathcal{H}_k(V_l)^{(m^\prime)})$ to $E^{(m)}$ where $m^\prime = \text{mod}(m-1, |\mathcal{H}_k (V_l)|) + 1$.}
\ElsIf{$m < m_1 + l$}
\For{$v \in V_{l}$}
\State Select isolated node $u_1, \cdots, u_{a_{m-m_1}}$ from $V_{m - m_1, 1}, \cdots, V_{m - m_1, a_{m - m_1}}$.
\State \fcolorbox{darkpastelblue}{darkpastelblue}{Add edges $(v,u_1), \cdots, (v, u_{a_{m-m_1}})$ with weight $\tfrac{|V_{m - m_1}|}{a_{m-m_1} \sum_{l^\prime = m - m_1}^{L} |V_{l^\prime}|}$ to $E^{(m)}$.}
\EndFor
\ElsIf{$m = m_1 + l$ and $l \not = L$}
\While{There are two or more isolated nodes in $V_l$}
\State $c \leftarrow \text{the number of isolated nodes in $V_l$}$.
\State Select $\min\{ k+1, c\}$ isolated nodes $V^\prime$ in $V_l$.
\State \fcolorbox{yellow}{yellow}{Add edges with weights $\tfrac{1}{|V^\prime|}$ to $E^{(m)}$ such that $V^\prime$ compose the complete graph.}
\EndWhile
\Else 
\State $b_{l} \leftarrow b_{l} + 1$.
\If{$p_l \not = 0$}
\For{$a \in [a_l]$}
\State \fcolorbox{limegreen}{limegreen}{Add $E(\mathcal{H}_k (V_{l,a})^{(m^\prime)})$ to $E^{(m)}$ where $m^\prime = \text{mod}(b_{l}-1, |\mathcal{H}_k (V_{l,a})|) + 1$.}
\EndFor
\Else
\State \fcolorbox{columbiablue}{columbiablue}{Add $E(\mathcal{H}_k (V_{l})^{(m^\prime)})$ to $E^{(m)}$ where $m^\prime = \text{mod}(b_{l}-1, |\mathcal{H}_k (V_{l})|) + 1$.}
\EndIf
\EndIf
\EndFor
\EndWhile
\State \Return $(E^{(1)}, E^{(2)}, \cdots, E^{(m)})$.
\end{algorithmic}
\end{algorithm}
\setlength{\textfloatsep}{0pt}

Using the example presented in Fig.~\ref{fig:1_peer_simple_FAIRY_5},
we explain Alg.~\ref{alg:simple_k_peer_FAIRY} in more detail.
Let $G^{(1)}, \cdots, G^{(5)}$ denote the graphs in Fig.~\ref{fig:1_peer_simple_FAIRY_5} from left to right, respectively.
In step $1$, we split $V \coloneqq \{1, \cdots, 5\}$ into $V_1 \coloneqq \{1, \cdots, 4\}$ and $V_2 \coloneqq \{ 5 \}$.
In step $2$, nodes in $V_1$ obtain the average in $V_1$
by exchanging parameters in $G^{(1)}$ and $G^{(2)}$.
In step $3$, 
the average in $V_1$ becomes equivalent to the average in $V$ by exchanging parameters in $G^{(3)}$.
In step $4$, nodes in $V_1$ can get the average in $V$ by exchanging parameters in $G^{(4)}$ and $G^{(5)}$.
Because node $5$ also obtains the average in $V$ after exchanging parameters in $G^{(3)}$,
all nodes reach the exact consensus after exchanging parameters in $G^{(5)}$.

Note that edges added in lines 20 and 27 are not necessary if we only need to make all nodes reach the exact consensus. 
Nonetheless, these edges are effective in keeping the parameters of nodes close in value to each other in decentralized learning
because the parameters are updated by gradient descent before the parameter exchange with neighbors.
For instance, edge $(1,2)$ in $G^{(3)}$, which is added in line 20, 
is not necessary for finite-time convergence
because nodes $1$ and $2$ already have the same parameter after exchanging parameters in $G^{(1)}$ and $G^{(2)}$.
We provide more examples in Sec.~\ref{sec:illustration}.

\subsection{Base-$(k+1)$ Graph}
\label{sec:k_peer_FAIRY}
\begin{algorithm}[t!]
\caption{\proposed{$(k+1)$} $\mathcal{A}_k (V)$}
\label{alg:k_peer_FAIRY}
\begin{algorithmic}[1]
\State \textbf{Input:} the set of nodes $V$ and number of nodes $n$.
\State Decompose $n$ as $n = p \times q$ such that $p$ is a multiple of $2,3,\cdots,(k+1)$ and $q$ is prime to $2,3,\cdots,(k+1)$.
\State Split $V$ into disjoint subsets $V_1, \cdots, V_{p}$ such that $|V_l| = q$ for all $l$.
\State Construct \simpleProposed{$(k+1)$} $\mathcal{A}_k^{\text{simple}} (V_l)$ for all $l \in [p]$. 
\For{$m \in \{ 1, 2, \cdots, |\mathcal{A}_k^\text{simple} (V_1)|\}$}
\State $E^{(m)} \leftarrow \bigcup_{l\in[p]} E(\mathcal{A}_k^\text{simple} (V_l)^{(m)})$.
\EndFor
\State Split $V$ into disjoint subsets $U_1, \cdots, U_{q}$ such that $|U_l| = p$ and $|U_l \cap V_{l^\prime}|=1$ for all $l, l^\prime$.
\State Construct \hyperhypercube{$k$} $\mathcal{H}_k (U_l)$ for all $l \in [q]$.
\For{$m \in \{1, 2, \cdots, |\mathcal{H}_k(U_1)| \}$}
\State $E^{(m + |\mathcal{A}_k^\text{simple} (V_1)|)} \leftarrow \bigcup_{l\in [q]} E(\mathcal{H}_k (U_l)^{(m)})$.
\EndFor
\State $\mathcal{E} \leftarrow (E^{(1)}, E^{(2)}, \cdots, E^{(|\mathcal{A}_k^\text{simple} (V_1)| + |\mathcal{H}_k (U_1) |)})$.
\State \textbf{If} $|\mathcal{A}_k^{\text{simple}}(V)| < |\mathcal{E}|$ \textbf{then} \Return $\mathcal{A}_k^{\text{simple}}(V)$ \textbf{else} \Return $\mathcal{E}$.
\end{algorithmic}
\end{algorithm}
\begin{figure}[t!]
    \vskip - 0.15 in
    \centering
    \begin{subfigure}{0.8\hsize}
        \includegraphics[width=\hsize]{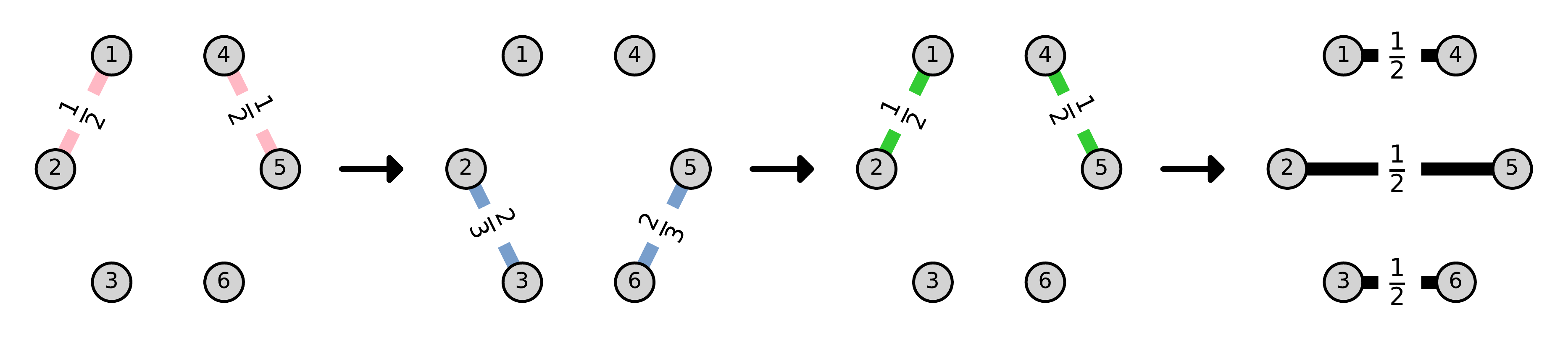}
        \vskip - 0.1 in
        \caption{\Proposed{$2$} with $n=6 (=2 \times (2 + 1))$}
        \label{fig:1_peer_FAIRY_6}
    \end{subfigure}
    \begin{subfigure}{\hsize}
        \includegraphics[width=\hsize]{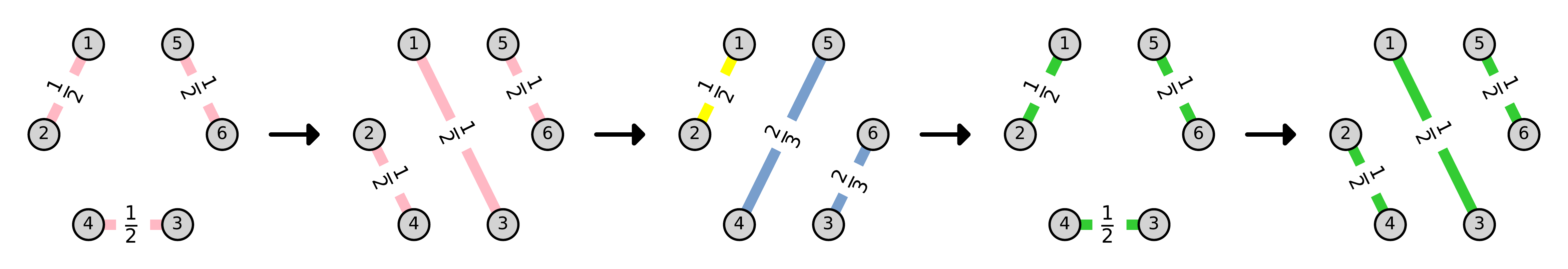}
        \vskip - 0.1 in
        \caption{\SimpleProposed{$2$} with $n=6 (=2^2 + 2)$}
        \label{fig:1_peer_SimpleADIC_6}
    \end{subfigure}
    \caption{Comparison of \SimpleProposed{$2$} and \Proposed{$2$} with $n = 6$. The value on the edge indicates the edge weight. The edges added in line 10 in Alg.~\ref{alg:k_peer_FAIRY} are colored black, and the edges added in line 6 are colored the same color as the line in Alg.~\ref{alg:simple_k_peer_FAIRY} where they are added.}
\end{figure}

The \simpleProposed{$(k+1)$} is finite-time convergence for any $n$ and $k$, 
while the \simpleProposed{$(k+1)$} contains graphs that are not necessary for the finite-time convergence and becomes a redundant sequence of graphs in some cases, e.g., $(k,n) = (1,6)$ (see the example in Fig.~\ref{fig:1_peer_SimpleADIC_6} and a detailed explanation in Sec.~\ref{sec:ullustrative_comparison}). 
To remove this redundancy,
this section proposes the \proposed{$(k+1)$} that can make all nodes reach the exact consensus after fewer iterations than the \simpleProposed{$(k+1)$}.

The pseudo-code for constructing the \proposed{$(k+1)$} is shown in Alg.~\ref{alg:k_peer_FAIRY}.
The \proposed{$(k+1)$} consists of the following three steps.
\begin{description}
    \item[Step 1.] We decompose $n$ as $p \times q$ such that $p$ is a multiple of $2, \cdots, (k+1)$ and $q$ is prime to $2, \cdots, (k+1)$, and split $V$ into disjoint subsets $V_1, \cdots, V_p$ such that $|V_l|=q$ for all $l \in [p]$.
    \item[Step 2.] For all $l \in [p]$, we make all nodes in $V_l$ reach the average in $V_l$ by the \simpleProposed{$(k+1)$} $\mathcal{A}^{\text{simple}}_k (V_l)$. Then, we take $p$ nodes from $V_1, \cdots, V_{p}$ respectively and construct a set $U_1$. Similarly, we construct $U_2, \cdots, U_q$ such that $U_1, \cdots, U_q$ are disjoint sets.
    \item[Step 3.] For all $l \in [q]$, we make all nodes in $U_{l}$ reach the average in $U_{l}$ by the \hyperhypercube{$k$} $\mathcal{H}_k (U_l)$. Because the average in $U_{l}$ is equivalent to the average in $V$ after step $2$, all nodes reach the exact consensus.
\end{description}

Using the example in Fig.~\ref{fig:1_peer_FAIRY_6}, we explain the \proposed{$(k+1)$} in more detail.
Let $G^{(1)}, \cdots, G^{(4)}$ denote the graphs in Fig.~\ref{fig:1_peer_FAIRY_6} from left to right, respectively.
In step $1$, we split $V$ into $V_1 \coloneqq \{1, 2, 3\}$ and $V_2 \coloneqq \{4, 5, 6\}$.
In step $2$, 
nodes in $V_1$ and nodes in $V_2$ have the same parameter respectively by exchanging parameters on $G^{(1)}, \cdots, G^{(3)}$
because the subgraphs composed on $V_1$ and $V_2$ are same as the \simpleProposed{$(k+1)$} (see Fig.~\ref{fig:1_peer_simple_FAIRY_3}).
Then, we construct $U_1 \coloneqq \{1,4\}$, $U_2 \coloneqq \{2, 5\}$, and $U_3 \coloneqq \{3, 6\}$.
Finally, in step $3$, all nodes reach the exact consensus by exchanging parameters in $G^{(4)}$.

\begin{wrapfigure}{r}[0pt]{0.35\textwidth}
\vskip - 0.2 in
\centering
   \includegraphics[width=0.35\textwidth]{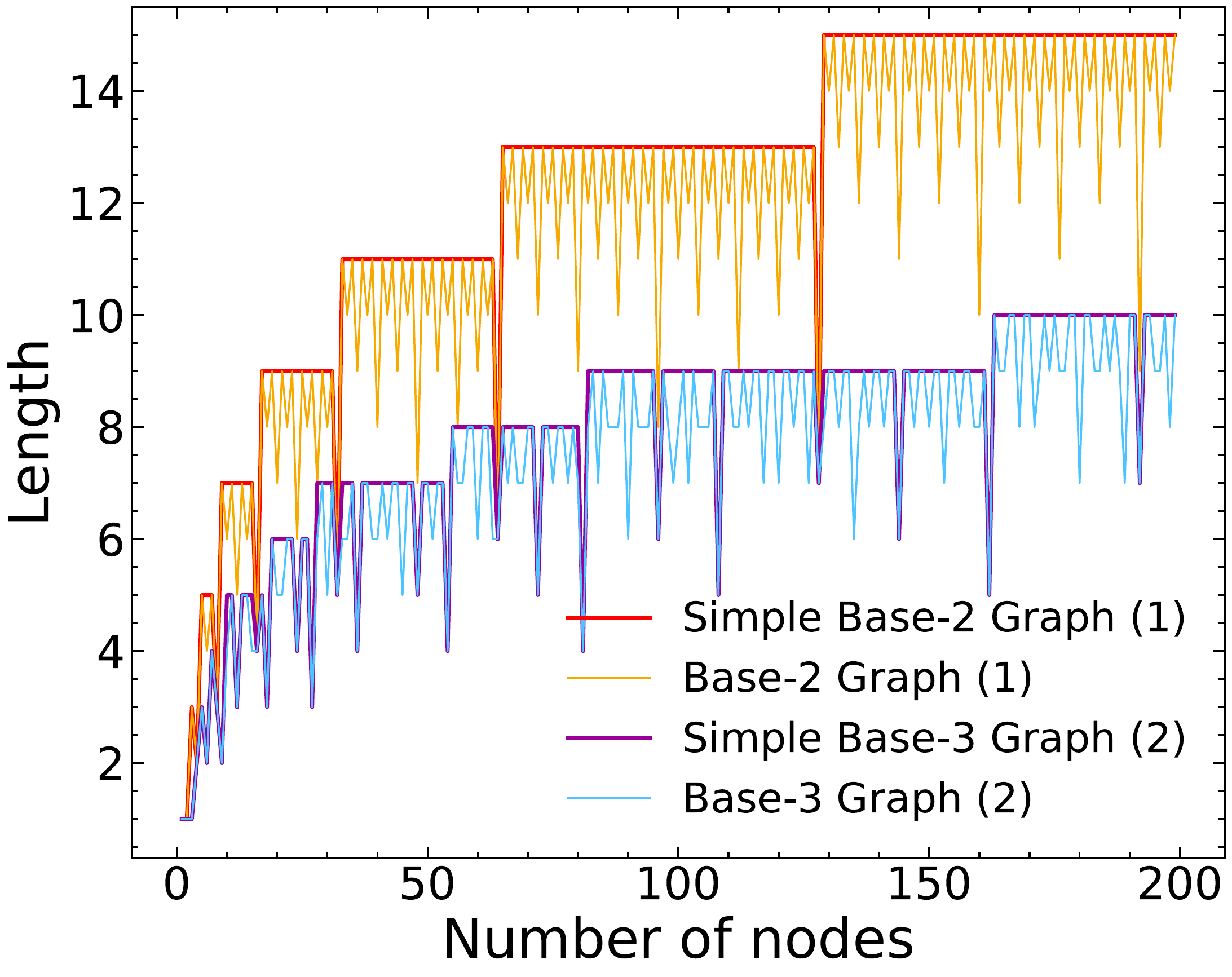}
   \vskip - 0.075 in
   \caption{Comparison of length.}
\label{fig:1_peer_length}
\vskip - 0.2 in
\end{wrapfigure}
Fig.~\ref{fig:1_peer_length} and Sec.~\ref{sec:comparison_of_FAIRY_and_simple_FAIRY} compare the \proposed{$(k+1)$} with the \simpleProposed{$(k+1)$},
demonstrating that the length of the \proposed{$(k+1)$} is less than that of the \simpleProposed{$(k+1)$} in many cases.
Moreover, Theorem \ref{theorem:length} show the upper bound of the length of the \simpleProposed{$(k+1)$} and \proposed{$(k+1)$}.
The proof is provided in Sec.~\ref{sec:proof_of_length}.

\begin{theorem}
\label{theorem:length}
For any number of nodes $n \in \mathbb{N}$ and maximum degree $k \in [n-1]$,
the length of the \simpleProposed{$(k+1)$} and \proposed{$(k+1)$} is less than or equal to $2 \log_{k+1} (n) + 2$.
\end{theorem}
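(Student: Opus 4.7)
The strategy is to bound the two constructions separately by direct inspection of their pseudocode.

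For the $k$-peer Simple ADIC graph, I would trace the outer while loop of Alg.~\ref{alg:simple_k_peer_adic}, whose termination is governed by $b_1$ reaching $|\mathcal{H}_k(V_{1,1})|$. For the index $l=1$ the iterations fall into exactly three consecutive phases: an initial phase of duration $m_1 = |\mathcal{H}_k(V_1)|$ (lines 12--14); a transition step contributing at most one iteration (lines 21--24), which is vacuous when $L=1$ since that case is handled separately by the early return on lines 2--4; and a final phase of duration $|\mathcal{H}_k(V_{1,1})|$ (lines 26--35). Since $|V_1| = a_1(k+1)^{p_1}$ with $a_1 \in \{1,\ldots,k\}$, the HyperHyperCube factorization yields $m_1 \leq p_1 + 1$, while $|V_{1,1}| = (k+1)^{p_1}$ yields $|\mathcal{H}_k(V_{1,1})| = p_1$. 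Summing the three contributions bounds the length by $2p_1 + 2$, and $(k+1)^{p_1} \leq n$ from the $(k+1)$-adic representation of $n$ gives $p_1 \leq \log_{k+1}(n)$, establishing the desired bound. A side check confirms that for every $l \geq 2$ the corresponding index completes its own work within the same iteration budget, using $p_l \leq p_1 - (l-1)$ since the $p_l$ are strictly decreasing nonnegative integers.

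For the $k$-peer ADIC graph, its length equals $|\mathcal{A}_k^{\text{simple}}(V_1)| + |\mathcal{H}_k(U_1)|$ by the return statement on line 14 of Alg.~\ref{alg:k_peer_adic}, with $|V_1| = q$ and $|U_1| = p$. The Simple ADIC bound just proved, applied to $V_1$, gives $|\mathcal{A}_k^{\text{simple}}(V_1)| \leq 2\log_{k+1}(q) + 2$. For the second term I would exhibit a factorization of $p$ into factors lying in $\{2,\ldots,k+1\} \cap (\sqrt{k+1},\, k+1]$; any such factorization of length $L$ satisfies $p \geq (\sqrt{k+1})^L$, hence $L \leq 2\log_{k+1}(p)$. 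Adding the two pieces yields $2\log_{k+1}(p) + 2\log_{k+1}(q) + 2 = 2\log_{k+1}(n) + 2$, which is what we want (and the two trivial cases $p=1$ or $q=1$ in Alg.~\ref{alg:k_peer_adic} reduce directly to the Simple ADIC bound).

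The main obstacle is producing this factorization of $p$ with every factor strictly above $\sqrt{k+1}$. The key leverage is that $p$ is divisible by each integer in $\{2,\ldots,k+1\}$, i.e.\ by $\mathrm{lcm}(2,\ldots,k+1)$, so for each prime $r \leq k+1$ the valuation satisfies $v_r(p) \geq j_r := \lfloor \log_r(k+1) \rfloor$. The inequality $r^{j_r+1} > k+1$ together with $j_r \geq 1$ gives $r^{2j_r} \geq r^{j_r+1} > k+1$, so $r^{j_r} \in (\sqrt{k+1},\, k+1]$. Thus for each $r$ we can group the $v_r(p)$ copies of $r$ into chunks of size $j_r$, each producing an admissible factor $r^{j_r}$. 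The delicate combinatorial point is how to absorb the leftover copies of size $< j_r$ from each prime into adjacent factors—either by merging with an existing $r^{j_r}$ block while respecting the upper bound $k+1$, or by combining leftovers across different primes—without any resulting factor dropping below $\sqrt{k+1}$. This case analysis, driven by the divisibility guarantees on $p$, is the routine but slightly tedious part of the argument.
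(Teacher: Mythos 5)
Your treatment of the $k$-peer Simple ADIC graph is essentially the paper's own argument: the three-phase trace of Alg.~\ref{alg:simple_k_peer_adic} (length $m_1 + 1 + |\mathcal{H}_k(V_{1,1})| \leq (p_1+1) + 1 + p_1$ together with $p_1 \leq \log_{k+1}(n)$) is exactly how the paper proves that half of the statement, and that part of your plan is sound. The problem is the $k$-peer ADIC half. Your proof hinges on producing a factorization of $p$ into factors lying in $\{2,\dots,k+1\}\cap(\sqrt{k+1},\,k+1]$, and such a factorization simply need not exist. Take $k=3$ and $p=24$: the admissible factors are $\{3,4\}$, and $3^a4^b = 2^{2a\cdot 0 + 2b}3^a$ can never equal $2^3\cdot 3$ because the exponent of $2$ in any product of $3$'s and $4$'s is even. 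Note that $24$ is a multiple of $\mathrm{lcm}(2,3,4)=12$, so the divisibility guarantee you invoke does not save you. Concretely, the ``delicate combinatorial point'' you defer --- absorbing a leftover copy of $2$ --- is a dead end here: merging the leftover $2$ with a block $4$ gives $8 > k+1$, merging it with a $3$ gives $6 > k+1$, and there are no other leftovers to combine it with. This is not tedious case analysis; it is a step that fails.

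The failure matters because your additive bookkeeping has no slack: you need $|\mathcal{H}_k(U_1)| \leq 2\log_{k+1}(p)$ \emph{exactly}, with no additive constant, so that $2\log_{k+1}(p) + 2\log_{k+1}(q) + 2 = 2\log_{k+1}(n)+2$. The paper's own bound on the HyperHyperCube length (Lemma~\ref{lemma:hyperhypercube}), proved by the weaker observation that no two factors in a minimal factorization can multiply to $\leq k+1$, gives only $2\log_{k+2}(p)+1$, and $2\log_{k+2}(p)+1 \leq 2\log_{k+1}(p)$ fails for small $p$ (e.g.\ $k=1$, $p=2$), so you cannot substitute it in. The target inequality $|\mathcal{H}_k(U_1)| \leq 2\log_{k+1}(p)$ may well be true for the minimal factorization (it holds in the $p=24$, $k=3$ example, where the minimal length is $3 \leq 2\log_4 24$), but you need a different argument to establish it --- for instance, one exploiting that at most one factor of a minimal factorization can be $\leq \sqrt{k+1}$, combined with the integrality of the length, or a sharper accounting that trades the leftover factor against the $+2$ already present in the Simple ADIC bound for $q$. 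As it stands, the ADIC case of the theorem is not proved by your proposal.
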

\begin{corollary}
\label{cor:finite_time_convergence}
For any number of nodes $n \in \mathbb{N}$ and maximum degree $k \in [n-1]$,
the \simpleProposed{$(k+1)$} and \proposed{$(k+1)$} are $\mathcal{O}(\log_{k+1} (n))$-finite time convergence.
\end{corollary}
Therefore, the \proposed{$(k+1)$} is a powerful extension of the $1$-peer exponential \cite{ying2021exponential} and $1$-peer hypercube graphs \cite{shi2016finite}
because they are $\mathcal{O}(\log_2 (n))$-finite time convergence only if $n$ is a power of 2 and their maximum degree cannot be set to any number other than $1$.

\section{Decentralized SGD on Base-$(k+1)$ Graph}
In this section, we verify the effectiveness of the \proposed{$(k+1)$} for decentralized learning,
demonstrating that the \proposed{$(k+1)$} can endow DSGD with both a faster convergence rate and fewer communication costs than the existing topologies, including the ring, torus, and exponential graph.
We consider the following decentralized learning problem:
\begin{align*}
    \min_{\vx \in \mathbb{R}^d} \left[ f (\vx) \coloneqq \frac{1}{n} \sum_{i=1}^n f_i (\vx) \right], \;\; 
    f_i (\vx) \coloneqq \mathbb{E}_{\xi_i \sim \mathcal{D}_i} \left[ F_i (\vx ; \xi_i) \right],
\end{align*}
where $n$ is the number of nodes,
$f_i$ is the loss function of node $i$, $\mathcal{D}_i$ is the data distribution held by node $i$,
$F_i (\vx ; \xi_i)$ is the loss of node $i$ at data sample $\xi_i$,
and $\nabla F_i (\vx ; \xi_i)$ denotes the stochastic gradient.
Then, we assume that the following hold, 
which are commonly used for analyzing decentralized learning methods \cite{lian2017can,lin2021quasi,lu2020moniqua,ying2021exponential}.

\begin{assumption}
\label{assumption:lower_bound}
There exists $f^\star > - \infty$ that satisfies $f(\vx) \geq f^\star$ for any $\vx \in \mathbb{R}^d$.
\end{assumption}
\begin{assumption}
\label{assumption:smoothness}
$f_i$ is $L$-smooth for all $i\in[n]$.
\end{assumption}
\begin{assumption}
\label{assumption:stochastic}
There exists $\sigma^2$ that satisfies $\mathbb{E}_{\xi_i \sim \mathcal{D}_i} \| \nabla F_i (\vx ; \xi_i) - \nabla f_i (\vx) \|^2 \leq \sigma^2$ for all $\vx \in \mathbb{R}^d$.
\end{assumption}
\begin{assumption}
\label{assumption:heterogeneity}
There exists $\zeta^2$ that satisfies $\frac{1}{n} \sum_{i=1}^n \| \nabla f_i (\vx) - \nabla f (\vx)\|^2 \leq \zeta^2$ for all $\vx \in \mathbb{R}^d$.
\end{assumption}
We consider the case when DSGD \cite{lian2017can}, the most widely used decentralized learning method, is used as an optimization method.
Let $\mW^{(1)}, \cdots, \mW^{(m)}$ be mixing matrices of the \proposed{$(k+1)$}.
In DSGD on the \proposed{$(k+1)$},
node $i$ updates its parameter $\vx_i$ as follows:
\begin{align}
\label{eq:dsgd_on_FAIRY}
    \vx_i^{(r+1)} = \sum_{j=1}^n W_{ij}^{(1+\text{mod}(r, m))} \left( \vx_j^{(r)} - \eta \nabla F_j (\vx_j^{(r)}; \xi_j^{(r)} )\right),
\end{align}
where $\eta$ is the learning rate.
In this case, thanks to the property of finite-time convergence,
DSGD on the \proposed{$(k+1)$} converges at the following convergence rate.

\begin{theorem}
\label{theorem:convergence_rate}
Suppose that Assumptions \ref{assumption:lower_bound}-\ref{assumption:heterogeneity} hold.
Then, for any number of nodes $n \in \mathbb{N}$ and maximum degree $k \in [n-1]$, there exists $\eta$ such that $\bar{\vx} \coloneqq \frac{1}{n} \sum_{i=1}^n \vx_i$ generated by \eqref{eq:dsgd_on_FAIRY} satisfies $\frac{1}{R+1} \sum_{r=0}^{R} \mathbb{E} \left\| \nabla f (\bar{\vx}^{(r)}) \right\|^2 \leq \epsilon$ after
\begin{align}
\label{eq:convergence_rate}
    R = \mathcal{O} \left( \frac{\sigma^2}{n \epsilon^2}
    + \frac{\zeta \log_{k+1} (n) + \sigma \sqrt{\log_{k+1} (n)}}{\epsilon^{3/2}}
    + \frac{\log_{k+1} (n)}{\epsilon} \right) \cdot L F_0
\end{align}
iterations, where $F_0 \coloneqq f (\bar{\vx}^{(0)}) - f^\star$.
\end{theorem}
The above theorem follows immediately from Theorem 2 stated in \citet{koloskova2020unified} and Corollary \ref{cor:finite_time_convergence}.
The convergence rates of DSGD over commonly used topologies are summarized in Sec.~\ref{sec:convergence_rate_of_various_topologies}.
From Theorem \ref{theorem:convergence_rate} and Sec.~\ref{sec:convergence_rate_of_various_topologies}, 
we can conclude that
for any number of nodes $n$,
the \proposed{$2$} enables DSGD to converge faster than the ring and torus and as fast as the exponential graph,
although the maximum degree of the \proposed{$2$} is only one.
Moreover, if we set the maximum degree $k$ to the value between $2$ to $\ceil{\log_2 (n)}$,
the \proposed{$(k+1)$} enables DSGD to converge faster than the exponential graph,
even though the maximum degree of the \proposed{$(k+1)$} remains less than that of the exponential graph.
It is worth noting that if we increase the maximum degree of the $1$-peer exponential and $1$-peer hypercube graphs (i.e., $k$-peer exponential and $k$-peer hypercube graphs with $k \geq 2$),
these topologies cannot enable DSGD to converge faster than the exponential graph
because these topologies are no longer finite-time convergence even when the number of nodes is a power of $2$.

\section{Experiments}
\label{sec:experiment}
In this section, we validate the effectiveness of the \proposed{$(k+1)$}.
First, we experimentally verify that 
the \proposed{$(k+1)$} is finite-time convergence for any number of nodes in Sec.~\ref{sec:consensus},
and we verify the effectiveness of the \proposed{$(k+1)$} for decentralized learning in Sec.~\ref{sec:decentralized_learning}.
\subsection{Consensus Rate}
\label{sec:consensus}
\begin{figure}[b!]
    \centering
    \vskip 0.1 in
    \includegraphics[width=\hsize]{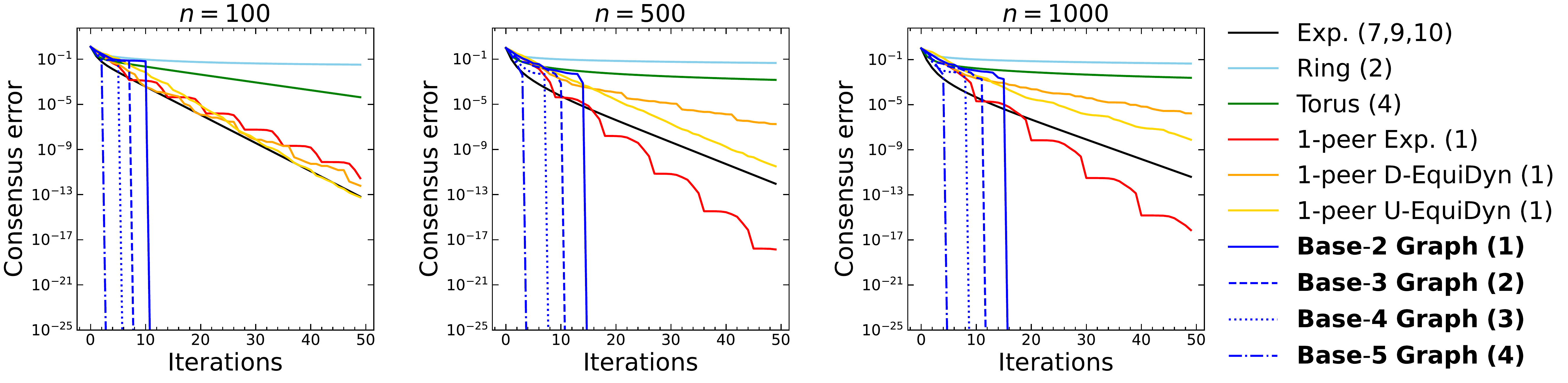}
    \vskip - 0.1 in
    \caption{Comparison of consensus rates among various topologies. The number in the bracket indicates the maximum degree of a topology. Because the maximum degree of the exponential graph depends on $n$, the three numbers in the bracket indicate the maximum degree for each $n$.}
    \label{fig:consensus}
    \vskip - 0.1 in
\end{figure}
\begin{figure}[b!]
    \centering
    \vskip - 0.1 in
    \begin{subfigure}{\hsize}
        \centering
        \includegraphics[width=0.263\hsize]{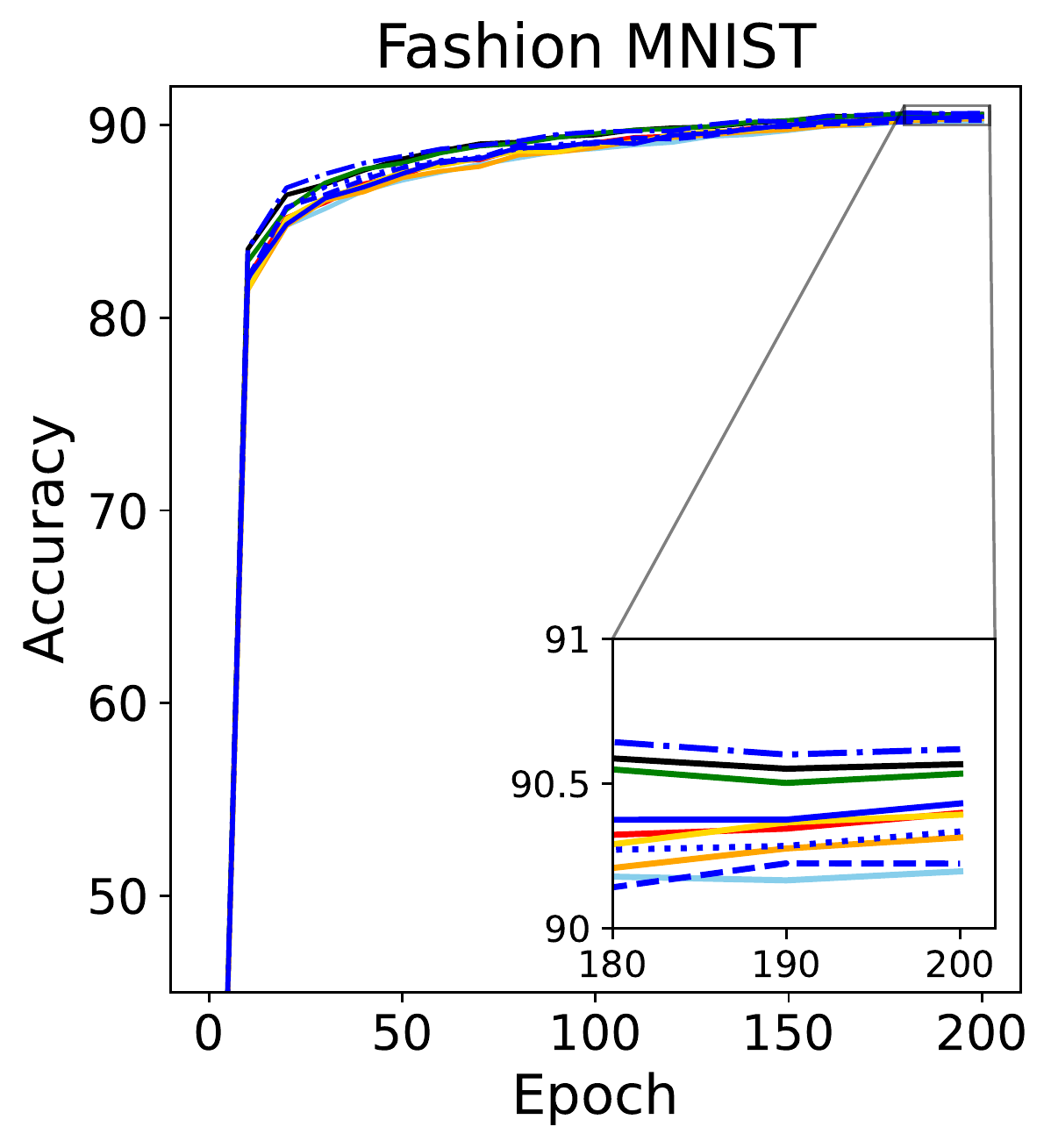}
        \includegraphics[width=0.263\hsize]{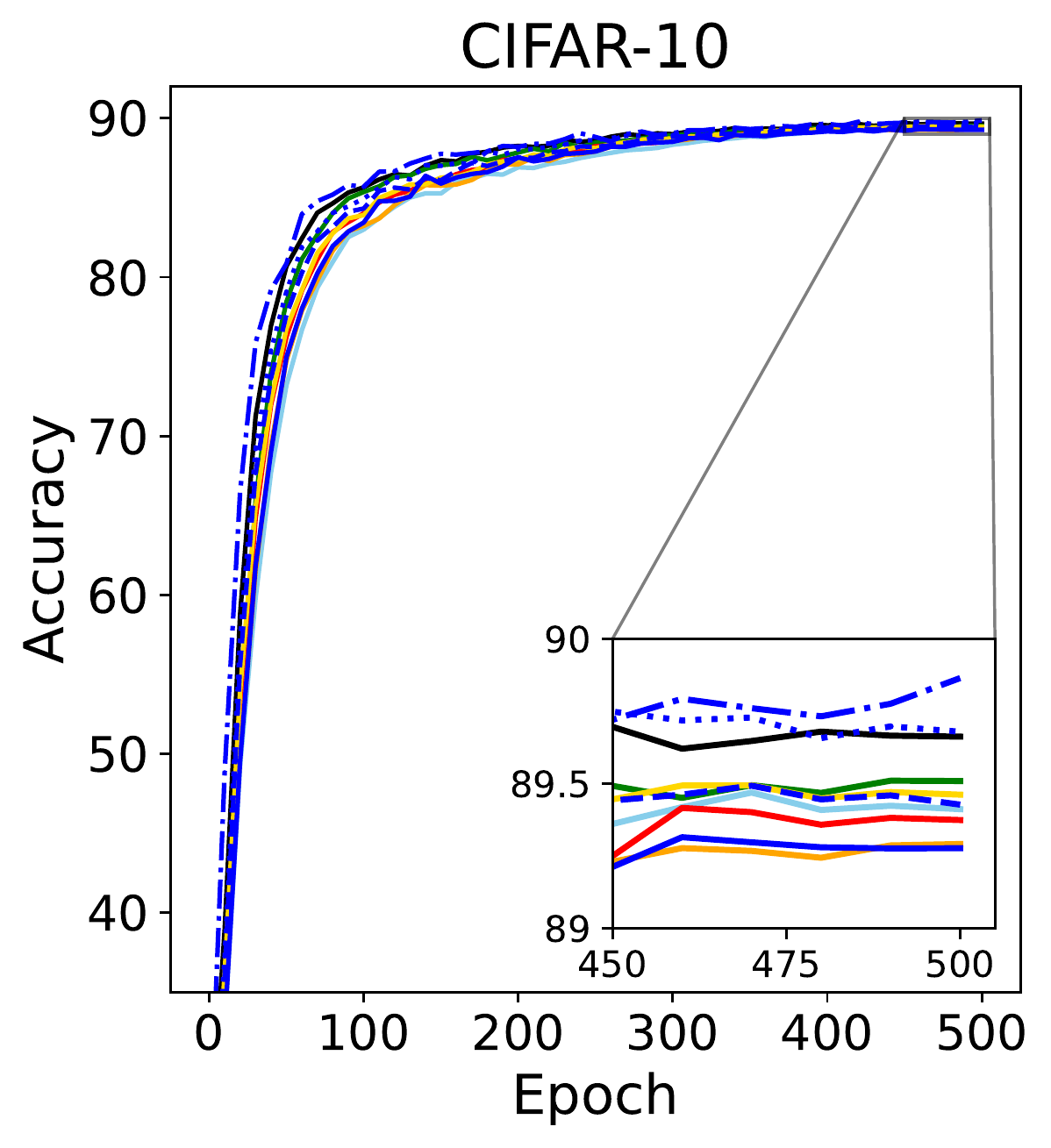}
        \includegraphics[width=0.46\hsize]{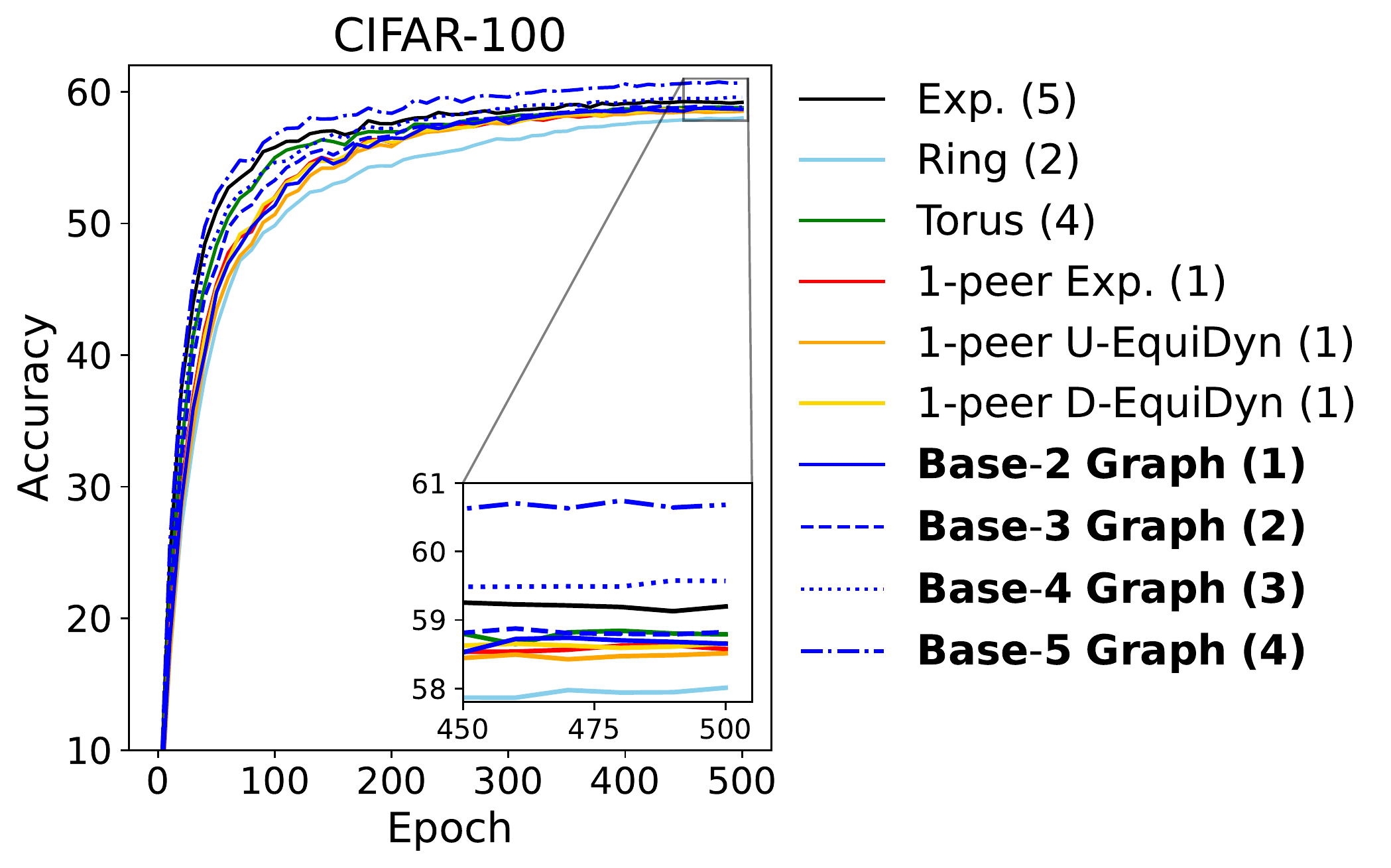}
        \vskip - 0.1 in
        \caption{$\alpha = 10$}
        \label{fig:learning_curves_iid}
    \end{subfigure}
    \begin{subfigure}{\hsize}
        \centering
            \includegraphics[width=0.263\hsize]{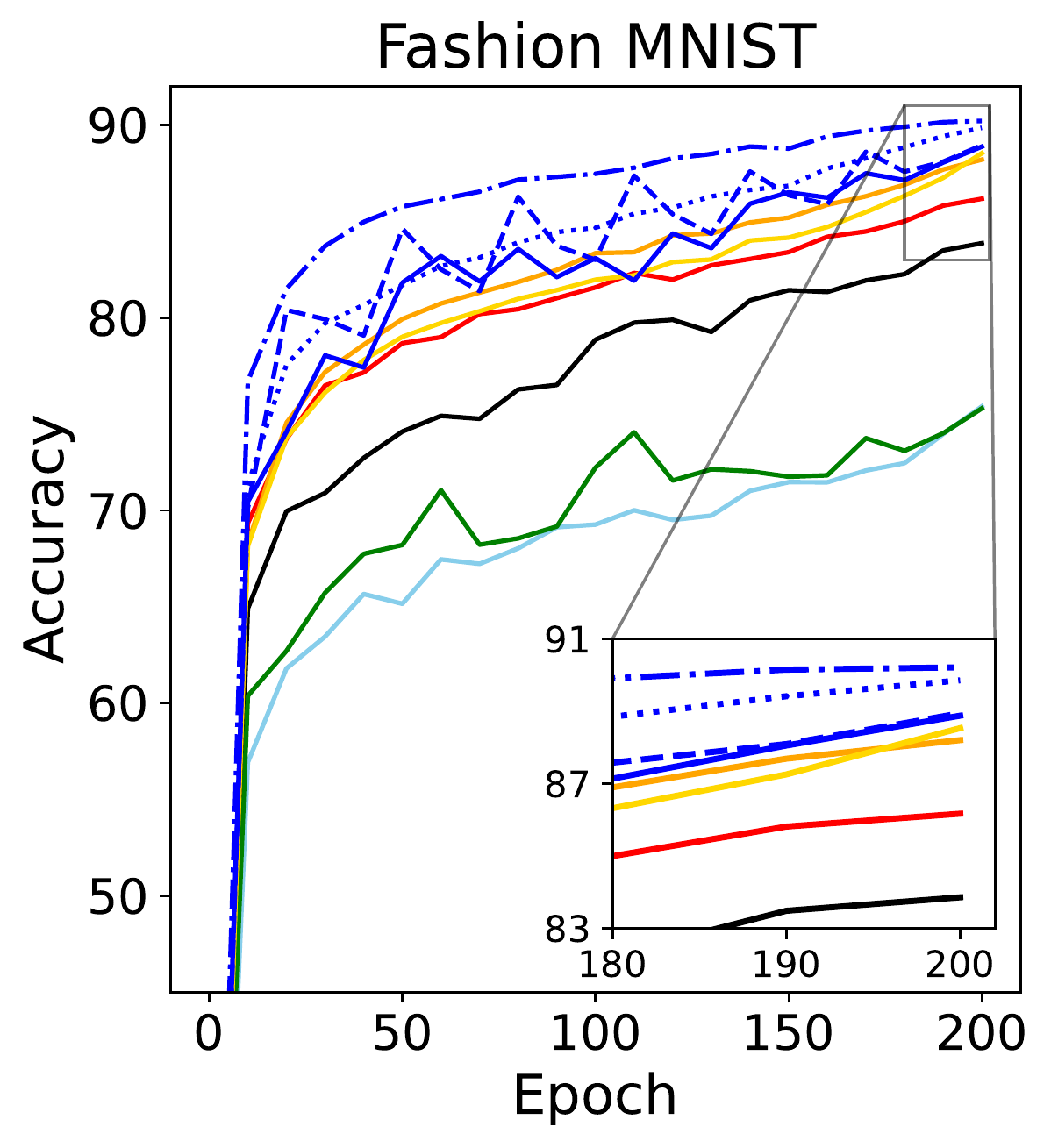}
            \includegraphics[width=0.263\hsize]{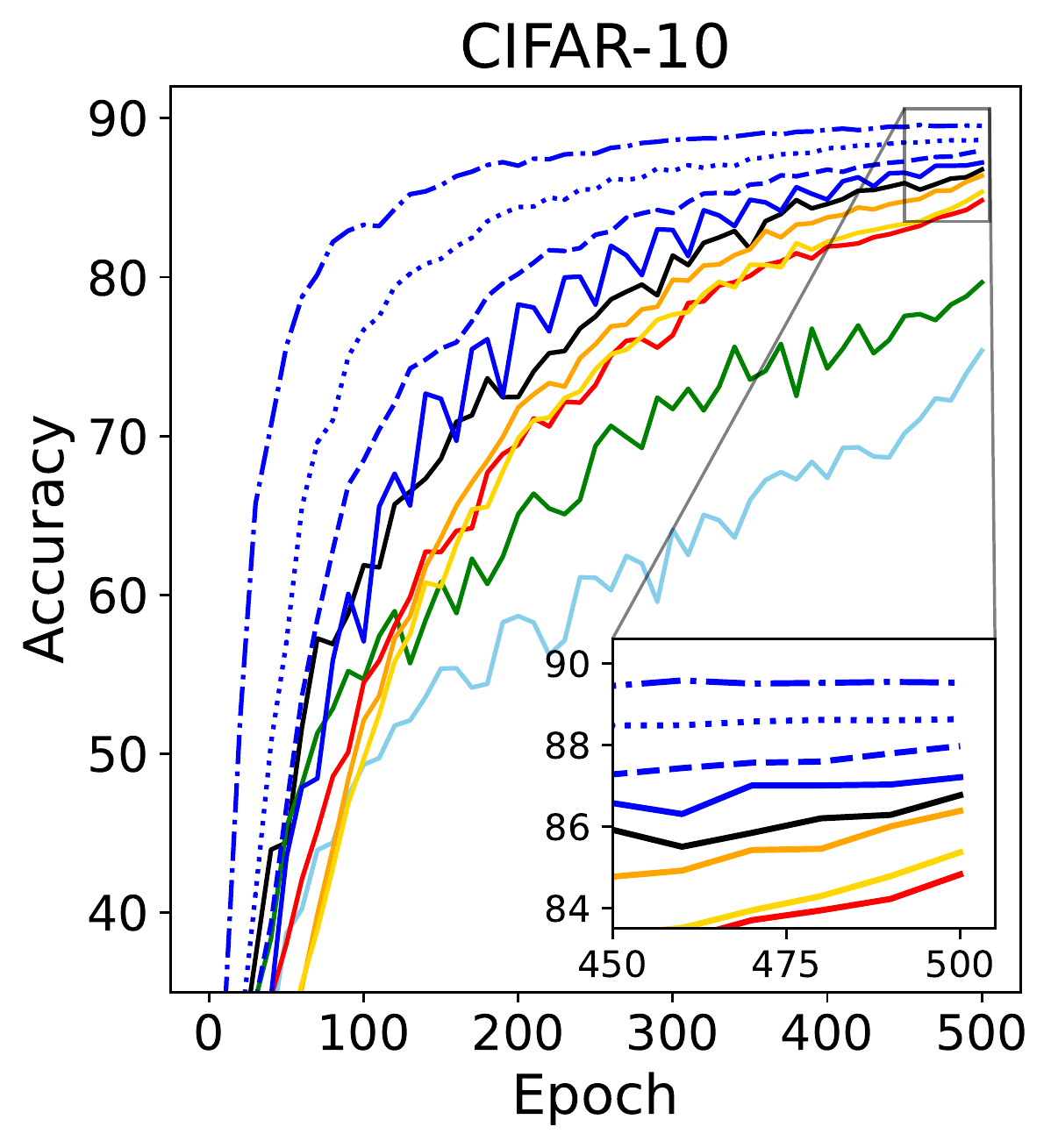}
            \includegraphics[width=0.46\hsize]{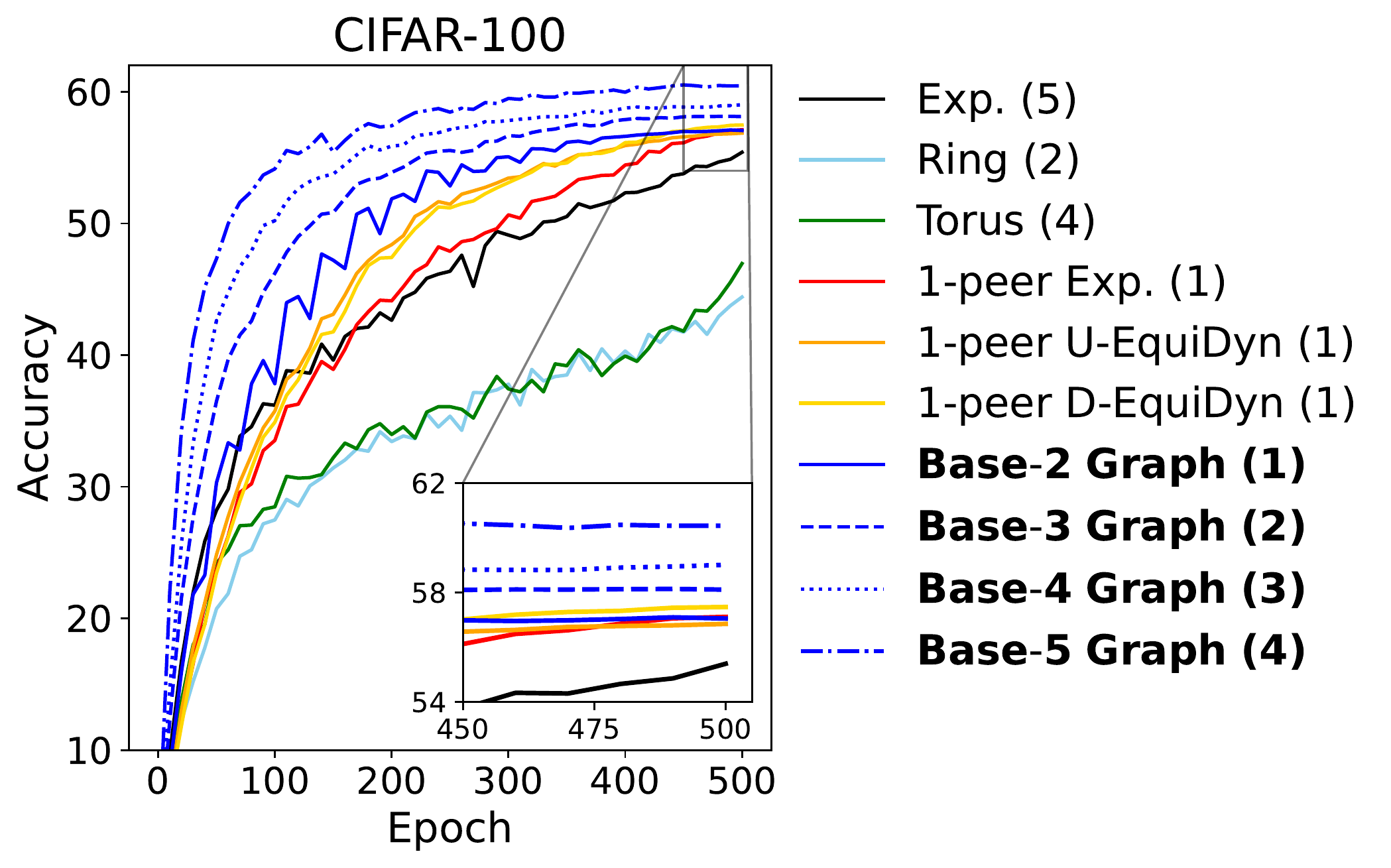}
        \vskip - 0.1 in
        \caption{$\alpha = 0.1$}
        \label{fig:learning_curves_noniid}
    \end{subfigure}
    \vskip - 0.05 in
    \caption{Test accuracy (\%) of DSGD on various topologies with $n=25$. The number in the bracket indicates the maximum degree of a topology. We also compared with dense variants of the $1$-peer \{U, D\}-EquiDyn \cite{song2022communication} in Sec.~\ref{sec:equistatic}, showing the superior performance of the \proposed{$(k+1)$}.}
    \label{fig:learning_curves}
\end{figure}

\textbf{Setup.}
Let $x_i \in \mathbb{R}$ be the parameter that node $i$ has, and let $\bar{x} \coloneqq \frac{1}{n} \sum_{i=1}^n x_i$.
For each $i$, the initial value of $x_i$ was drawn from Gaussian distribution with mean $0$ and standard variance $1$.
Then, we evaluated how the consensus error $\frac{1}{n} \sum_{i=1}^n ( x_i - \bar{x} )^2$ decreases
when $x_i$ is updated as $x_i \leftarrow \sum_{j=1}^n W_{ij} x_j$
where $\mW$ is the mixing matrix associated with a given topology.

\textbf{Results.}
Figs. \ref{fig:consensus_5000} and \ref{fig:consensus} present how the consensus errors decrease on various topologies.
The results indicate that the \proposed{$(k+1)$} reaches the exact consensus after a finite number of iterations, 
while the other topologies only reach the consensus asymptotically.
Moreover, as the maximum degree $k$ increases, 
the \proposed{$(k+1)$} reaches the exact consensus with fewer iterations.
We also present the results when $n$ is a power of 2 in Sec.~\ref{sec:other_consensns_optimization},
demonstrating that the $1$-peer exponential graph can reach the exact consensus as well as the \proposed{$2$},
but requires more iterations than the \proposed{$4$}.

\subsection{Decentralized Learning}
\label{sec:decentralized_learning}
Next, we examine the effectiveness of the \proposed{$(k+1)$} in decentralized learning.

\textbf{Setup.}
We used three datasets, Fashion MNIST \cite{xiao2017fashion}, CIFAR-$\{10, 100\}$ \cite{krizhevsky09learningmultiple},
and used LeNet \cite{lecun1998gradientbased} for Fashion MNIST and VGG-11 \cite{simonyanZ2014very} for CIFAR-$\{10, 100\}$.
Additionally, we present the results using ResNet-18 \cite{he2016deep} in Sec.~\ref{sec:resnet}.
The learning rate was tuned by the grid search and we used the cosine learning rate scheduler \cite{loshchilov2017sgdr}.
We distributed the training dataset to nodes by using Dirichlet distributions with hyperparameter $\alpha$ \cite{hsu2019measuring},
conducting experiments in both homogeneous and heterogeneous data distribution settings.
As $\alpha$ approaches zero, the data distributions held by each node become more heterogeneous.
We repeated all experiments with three different seed values
and reported their averages.
See Sec.~\ref{sec:hyperparameter} for more detailed settings.

\textbf{Results of DSGD on Various Topologies.}
We compared various topologies combined with the DSGD with momentum \cite{gao2020periodic,lian2017can},
showing the results in Fig.~\ref{fig:learning_curves}.
From Fig.~\ref{fig:learning_curves}, the accuracy differences among topologies are larger as the data distributions are more heterogeneous.
From Fig.~\ref{fig:learning_curves_noniid}, 
the \proposed{$\{2,3,4,5\}$} reach high accuracy faster than the other topologies.
Furthermore, comparing the final accuracy,
the final accuracy of the \proposed{$2$} is comparable to or higher than that of the existing topologies, including the exponential graph.
\begin{wrapfigure}{r}[0pt]{0.425\textwidth}
\vskip - 0.15 in
 \centering
   \includegraphics[width=\hsize]{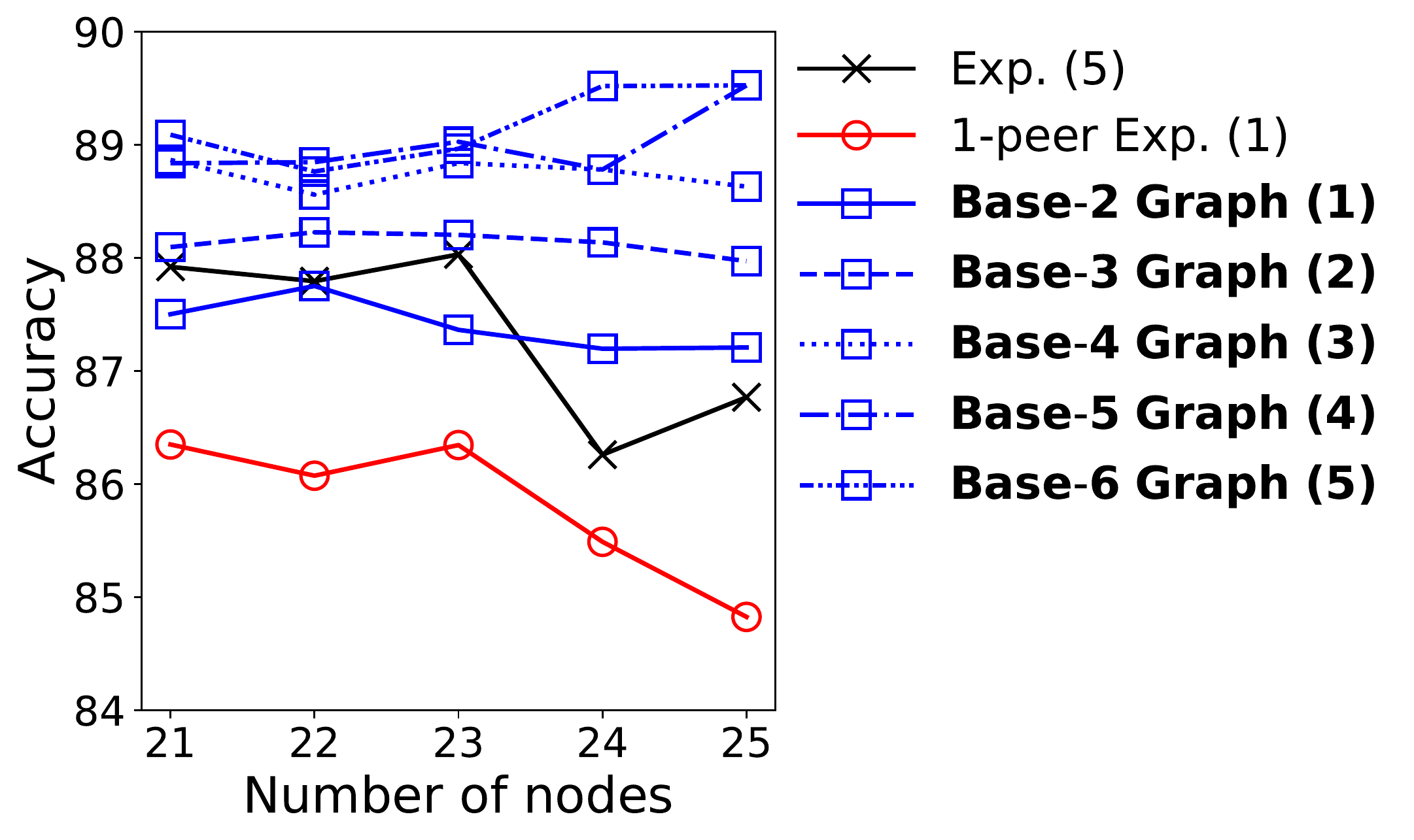}
   \vskip - 0.075 in
   \caption{Test accuracy (\%) of DSGD with CIFAR-10 and $\alpha=0.1$.}
   \label{fig:various_number_of_nodes_cifar}
\vskip - 0.15 in
\end{wrapfigure}
Moreover, the final accuracy of the \proposed{$\{3,4,5\}$} is higher than that of all existing topologies.
From Fig.~\ref{fig:learning_curves_iid},
the accuracy differences among topologies become small when $\alpha = 10$;
however, the \proposed{$5$} still outperforms the other topologies.
In Fig.~\ref{fig:various_number_of_nodes_cifar},
we present the results in cases other than $n=25$,
demonstrating that the \proposed{$2$} outperforms the $1$-peer exponential graph
and the \proposed{$\{3,4,5\}$} can consistently outperform the exponential and $1$-peer exponential graphs for all $n$.
In Sec.~\ref{sec:various_number_of_nodes}, we show the learning curves and the comparison of the consensus rate when $n$ is $21$, $22$, $23$, $24$, and $25$.

\textbf{Results of $D^2$ and QG-DSGDm on Various Topologies.}
The above results demonstrated that the \proposed{$(k+1)$} outperforms the existing topologies, especially when the data distributions
\begin{wrapfigure}{r}[0pt]{0.65\textwidth}
\vskip - 0.15 in
 \centering
   \includegraphics[width=0.355\hsize]{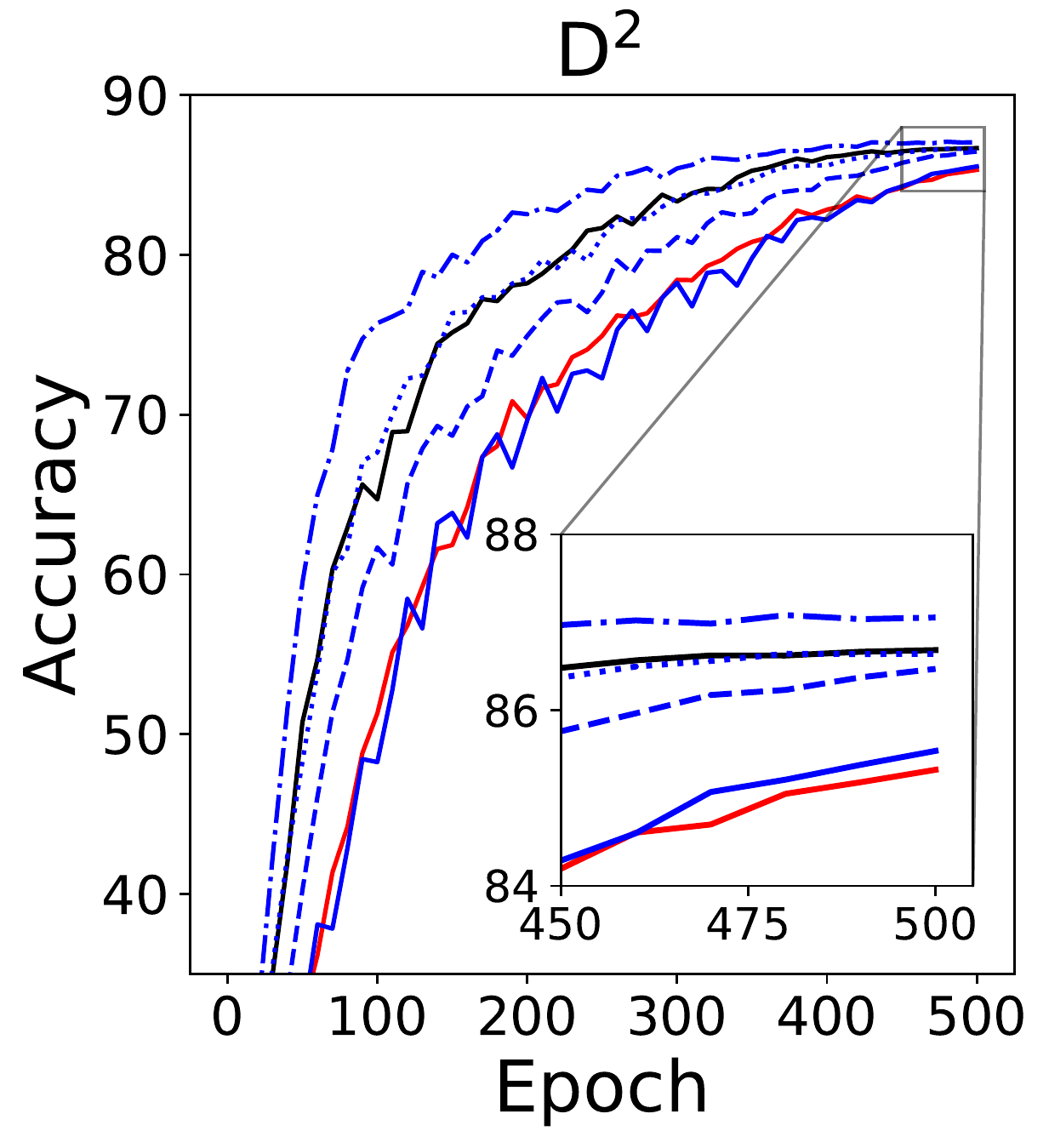}
   \includegraphics[width=0.635\hsize]{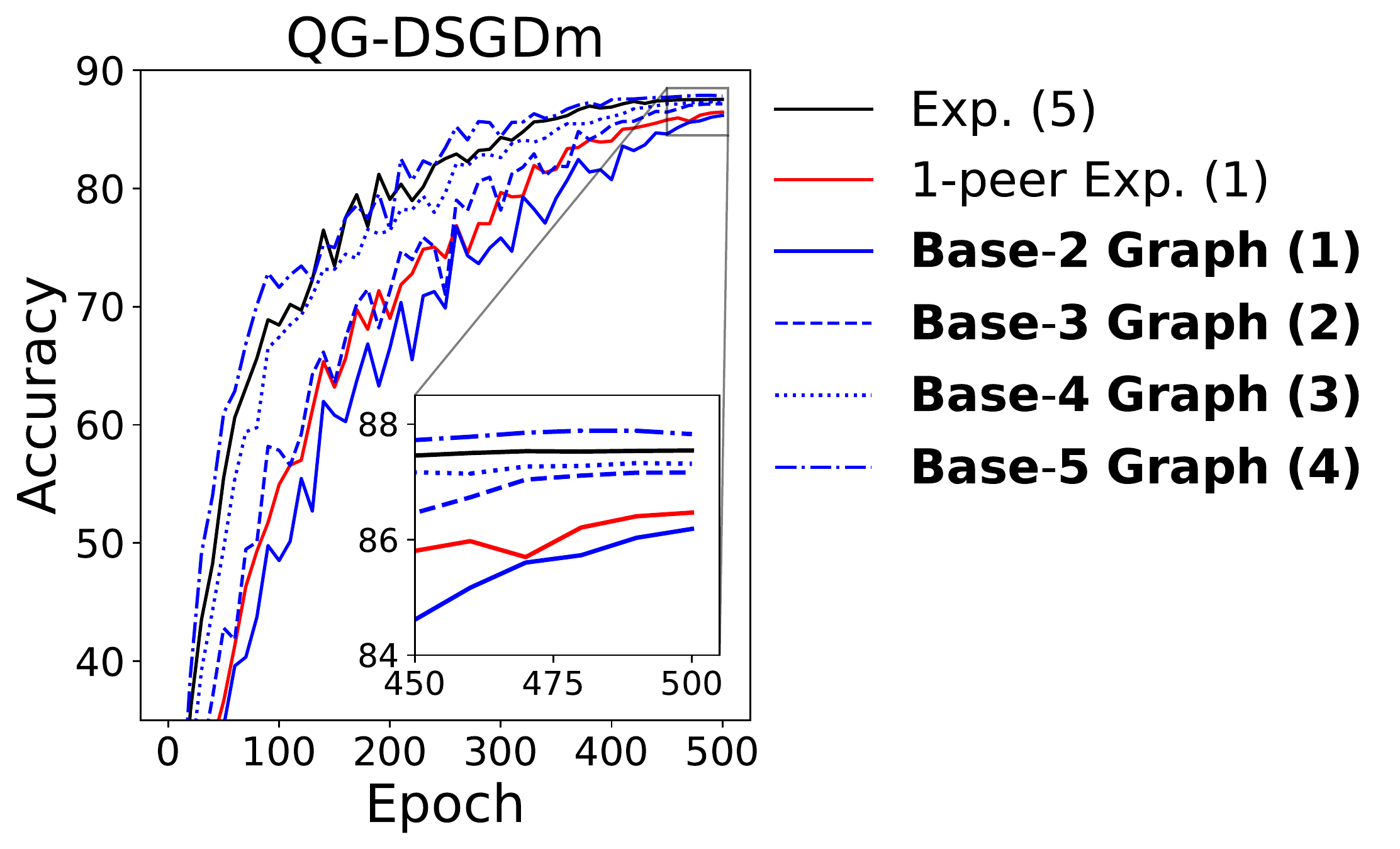}
   \vskip - 0.075 in
   \caption{Test accuracy (\%) of $D^2$ and QG-DSGDm with CIFAR-10, $n=25$, and $\alpha=0.1$.}
   \label{fig:d2_learning_curves}
    \vskip - 0.2 in
\end{wrapfigure}
are heterogeneous. Hence, we next compared the \proposed{$(k+1)$} with the existing topologies in the case where $D^2$ \cite{tang2018d2} and QG-DSGDm \cite{lin2021quasi}, which are robust to data heterogeneity, 
are used as decentralized learning methods.
From Fig.~\ref{fig:d2_learning_curves}, the \proposed{$2$} can achieve
comparable or higher accuracy than the $1$-peer exponential graph,
and the \proposed{$5$} consistently outperforms the exponential graph.
Thus, the \proposed{$(k+1)$} is useful not only for DSGD but also for $D^2$ and QG-DSGDm
and then enables these methods to achieve a reasonable balance between accuracy and communication efficiency.

\section{Conclusion}
In this study, we propose the \proposed{$(k+1)$},
a novel topology with both a fast consensus rate and small maximum degree.
Unlike the existing topologies,
the \proposed{$(k+1)$} is finite-time convergence for any number of nodes and maximum degree $k$.
Thanks to this favorable property, 
the \proposed{$(k+1)$} enables DSGD to obtain both a faster convergence rate and more communication efficiency than the existing topologies,
including the ring, torus, and exponential graph.
Through experiments, we compared the \proposed{$(k+1)$} with various existing topologies,
demonstrating that the \proposed{$(k+1)$} enables various decentralized learning methods to more successfully reconcile accuracy and communication efficiency than the existing topologies.

\section*{Acknowledgments}
Yuki Takezawa, Ryoma Sato, and Makoto Yamada were supported by JSPS KAKENHI Grant Number 23KJ1336, 21J22490, and MEXT KAKENHI Grant Number 20H04243, respectively.

\bibliography{ref}


\newpage
\appendix

\section{Detailed Explanation of $k$-peer Hyperhypercube Graph}
In this section, we explain Alg.~\ref{alg:hyperhypercube} in more detail.
The \hyperhypercube{$k$} mainly consists of the following five steps.
\begin{description}
    \item[Step 1.] Decompose $n$ as $n = n_1 \times \cdots \times n_L$ with minimum $L$ such that $n_l \in [k + 1]$ for all $l \in [L]$.
    \item[Step 2.] If $L=1$, we make all nodes obtain the average of parameters in $V$ by using the complete graph. If $L \geq 2$, we split $V$ into disjoint subsets $V_1, \cdots, V_{n_L}$ such that $|V_l| = \tfrac{n}{n_L}$ for all $l \in [n_L]$ and continue to step $3$.
    \item[Step 3.] For all $l \in [n_L]$, we make all nodes in $V_l$ obtain the average of parameters in $V_l$ by using the \hyperhypercube{$k$} $\mathcal{H}_k (V_l)$.
    \item [Step 4.] We take $n_L$ nodes from $V_1, \cdots, V_{n_L}$ respectively and construct a set $U_1$. Similarly, we construct $U_2, \cdots, U_{n_L}$ such that $U_1, \cdots, U_{n_L}$ are disjoint sets.
    \item [Step 5.] For all $l \in [n_L]$, we make all nodes in $U_l$ obtain the average of parameters in $U_l$ by using the complete graph. Because the average of parameter $U_l$ is equivalent to the average in $V$ after step $4$, all nodes reach the exact consensus.
\end{description}

When $n \leq k+1$, the \hyperhypercube{$k$} becomes the complete graph because of step $2$.
When $n > k+1$, we decompose $n$ in step $1$ and construct the \hyperhypercube{$k$} recursively in step $3$.
Thus, the \hyperhypercube{$k$} can make all nodes reach the exact consensus by the sequence of $L$ graphs.

Using the example provided in Fig.~\ref{fig:hyperhypercube_in_appendix}, we explain the \hyperhypercube{$k$} in a more detailed manner.
When $n=12$, we decompose $12$ as $2 \times 2 \times 3$.
In step $2$, we split $V \coloneqq \{1, \cdots, 12\}$ into $V_1 \coloneqq \{1, \cdots, 4\}$, $V_2 \coloneqq \{5, \cdots, 8\}$, and $V_3 \coloneqq \{9, \cdots, 12\}$.
Step $3$ corresponds to the first two graphs in Fig.~\ref{fig:2_peer_hyperhypercube_12_in_appendix}.
As shown in Fig.~\ref{fig:2_peer_hyperhypercube_4_in_appendix},
the subgraphs consisting of $V_1$, $V_2$, and $V_3$ in the first two graphs in Fig.~\ref{fig:2_peer_hyperhypercube_12_in_appendix} are equivalent to the \hyperhypercube{$k$} with the number of nodes $4$.
Thus, all nodes reach the exact consensus by exchanging parameters in Fig.~\ref{fig:2_peer_hyperhypercube_12_in_appendix}.

\begin{figure}[h!]
    \centering
    \begin{subfigure}{0.35\hsize}
        \includegraphics[width=\hsize]{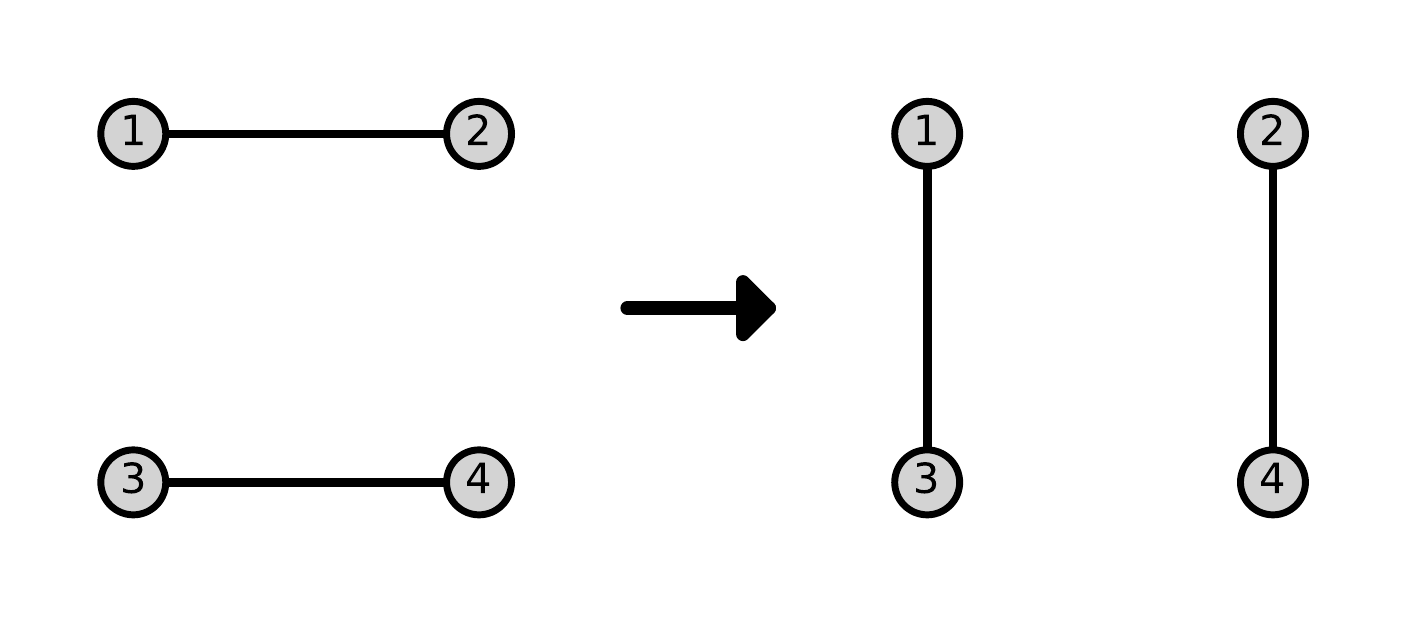}
        \vskip - 0.15 in
        \caption{$n=4 (= 2\times 2)$}
        \label{fig:2_peer_hyperhypercube_4_in_appendix}
    \end{subfigure}
    \hfill
    \begin{subfigure}{0.6\hsize}
        \includegraphics[width=\hsize]{pic/2_peer_hyperhypercube_12.pdf}
        \vskip - 0.15 in
        \caption{$n=12 (= 2 \times 2 \times 3)$}
        \label{fig:2_peer_hyperhypercube_12_in_appendix}
    \end{subfigure}
    \vskip - 0.05 in
    \caption{Illustration of the \hyperhypercube{$2$}. In Fig.~\ref{fig:2_peer_hyperhypercube_4_in_appendix}, all edge weights are $\tfrac{1}{2}$. In Fig.~\ref{fig:2_peer_hyperhypercube_12_in_appendix}, edge weights are $\tfrac{1}{2}$ in the first two graphs and $\tfrac{1}{3}$ in the last graph.}
    \label{fig:hyperhypercube_in_appendix}
\vskip - 0.2 in
\end{figure}

\newpage
\section{Detailed Explanation of Simple Base-$(k+1)$ Graph with $k \geq 2$}
\label{sec:simple_base_with_large_k}
In Sec.~\ref{sec:k_peer_simple_FAIRY}, we explain Alg.~\ref{alg:simple_k_peer_FAIRY} only in the case where maximum degree $k$ is one.
In this section, we explain the details of Alg.~\ref{alg:simple_k_peer_FAIRY} in the case with $k \geq 2$.

The \simpleProposed{$(k+1)$} mainly consists of the following five steps.
\begin{description}
    \item[Step 1.] As in the base-$(k+1)$ number of $n$, we decompose $n$ as $n = a_1 (k+1)^{p_1} + \cdots + a_L (k+1)^{p_L}$ in line 1, and then split $V$ into disjoint subsets $V_1, \cdots, V_L$ such that $|V_l| = a_l (k+1)^{p_l}$ for all $l \in [L]$. 
    \item[Step 2.] For all $l \in [L]$, we split $V_l$ into disjoint subsets $V_{l,1}, \cdots, V_{l,a_l}$ such that $|V_{l,a}| = (k+1)^{p_l}$ for all $a \in [a_l]$ in line $3$.
    \item[Step 3.] For all $l \in [L]$, we make all nodes in $V_l$ obtain the average of parameters in $V_l$ using the \hyperhypercube{$k$} $\mathcal{H}_k (V_l)$ in line 11. Then, we initialize $l^\prime$ as one.
    \item[Step 4.] Each node in $V_{l^\prime + 1} \cup \cdots \cup V_L$ exchange parameters with $a_{l^\prime}$ nodes in $V_{l^\prime} (= V_{l^\prime,1} \cup \cdots \cup V_{l^\prime,a_{l^\prime}})$ such that the average in $V_{l^\prime, a}$ becomes equivalent to the average in $V$ for all $a \in [a_{l^\prime}]$. We increase $l^\prime$ by one and repeat step $4$ until $l^\prime = L$. This procedure corresponds to line 15.
    \item[Step 5.] For all $l \in [L]$ and $a \in [a_l]$, we make all nodes in $V_{l, a}$ obtain the average in $V_{l,a}$ using the \hyperhypercube{$k$} $\mathcal{H}_k (V_{l,a})$. Since the average in $V_{l, a}$ is equivalent to the average in $V$ after step $4$, all nodes reach the exact consensus. This procedure corresponds to line 25.
\end{description}
The major difference compared with the case where $k=1$ is step $4$.
In the case where $k=1$, 
each node in $V_{l^\prime + 1} \cup \cdots \cup V_L$ exchange parameters with one node in $V_{l^\prime}$ such that the average in $V_{l^\prime}$ becomes equivalent to that in $V$,
while in the case where $k \geq 2$, 
each node in $V_{l^\prime + 1} \cup \cdots \cup V_L$ exchange parameters 
such that the average in $V_{l^\prime,a}$ becomes equivalent to that in $V$ for all $a \in [a_l]$.
Thanks to this step, we can make all nodes reach the exact consensus using \hyperhypercube{$k$} $\mathcal{H}_k (V_{l, a})$ instead of $\mathcal{H}_k (V_{l})$ in step $5$, and we can reduce the length of a graph sequence.

Using the example provided in Fig.~\ref{fig:2_peer_simple_FAIRY_7}, we explain Alg.~\ref{alg:simple_k_peer_FAIRY} in a more detailed manner.
Let $G^{(1)}, \cdots, G^{(4)}$ denote the graphs depicted in Fig.~\ref{fig:2_peer_simple_FAIRY_7} from left to right, respectively.
First, we split $V \coloneqq \{1,\cdots,7\}$ into $V_1 \coloneqq \{1, \cdots, 6\}$ and $V_2 \coloneqq \{7\}$,
and then split $V_1$ into $V_{1,1} \coloneqq \{ 1,2,3\}$ and $V_{1,2} \coloneqq \{4,5,6\}$.
In step $3$, all nodes in $V_1$ obtain the same parameter by exchanging parameters in $G^{(1)}$ and $G^{(2)}$.
In step $4$, the average in $V_{1,1}$, that in $V_{1,2}$, and that in $V_2$ become the same as the average in all nodes $V$ by exchanging parameters in $G^{(3)}$.
Thus, in step $5$, all nodes reach the exact consensus by exchanging parameters in $G^{(4)}$.
\begin{figure}[h!]
    \centering
    \includegraphics[width=\hsize]{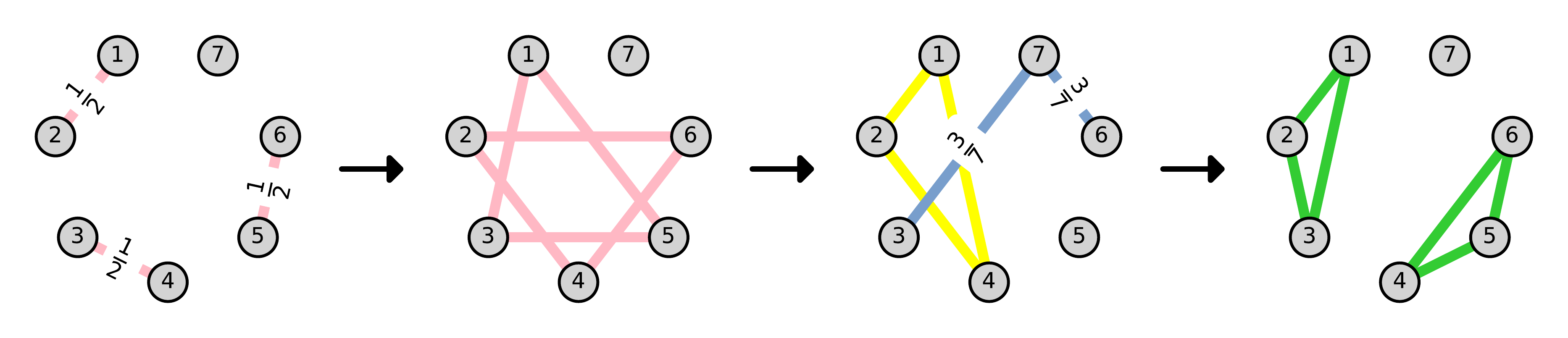}
    \vskip - 0.1 in
    \caption{$k=2, n=7(=2 \times 3 + 1)$. The value on the edge indicates the edge weight. For simplicity, we omit the edge value when it is $\tfrac{1}{3}$.}
    \label{fig:2_peer_simple_FAIRY_7}
\end{figure}

\newpage

\section{Illustration of Topologies}
\label{sec:illustration}
\subsection{Examples}
\label{sec:examples}
Fig.~\ref{fig:other_simple_FAIRY} shows the examples of the \simpleProposed{$(k+1)$}.
Using these examples, we explain how all nodes reach the exact consensus.

We explain the case depicted in Fig.~\ref{fig:2_peer_simple_FAIRY_5}.
Let $G^{(1)}, G^{(2)}, G^{(3)}$ denote the graphs depicted in Fig.~\ref{fig:2_peer_simple_FAIRY_5} from left to right, respectively.
First, we split $V \coloneqq \{1,\cdots,5\}$ into $V_1 \coloneqq \{1, 2, 3\}$ and $V_2 \coloneqq \{4,5\}$,
and then split $V_2$ into $V_{2,1} \coloneqq \{4 \}$ and $V_{2,2} \coloneqq \{ 5 \}$.
After exchanging parameters in $G^{(1)}$,
nodes in $V_1$ and nodes in $V_2$ have the same parameter respectively.
Then, after exchanging parameters in $G^{(2)}$, the average in $V_1$, that in $V_{2,1}$, and that in $V_{2,2}$ become same as the average in $V$.
Thus, by exchanging parameters in $G^{(3)}$, all nodes reach the exact consensus.
Note that edge $(4,5)$ in $G^{(3)}$,
which is added in line 27 in Alg.~\ref{alg:simple_k_peer_FAIRY},
is not necessary for all nodes to reach the exact consensus because nodes $4$ and $5$ already have the same parameter after exchanging parameters in $G^{(2)}$;
however, it is effective in decentralized learning as we explained in Sec.~\ref{sec:k_peer_simple_FAIRY}.
\begin{figure}[h!]
\centering
     \begin{subfigure}{0.6\hsize}
         \centering
         \includegraphics[width=\hsize]{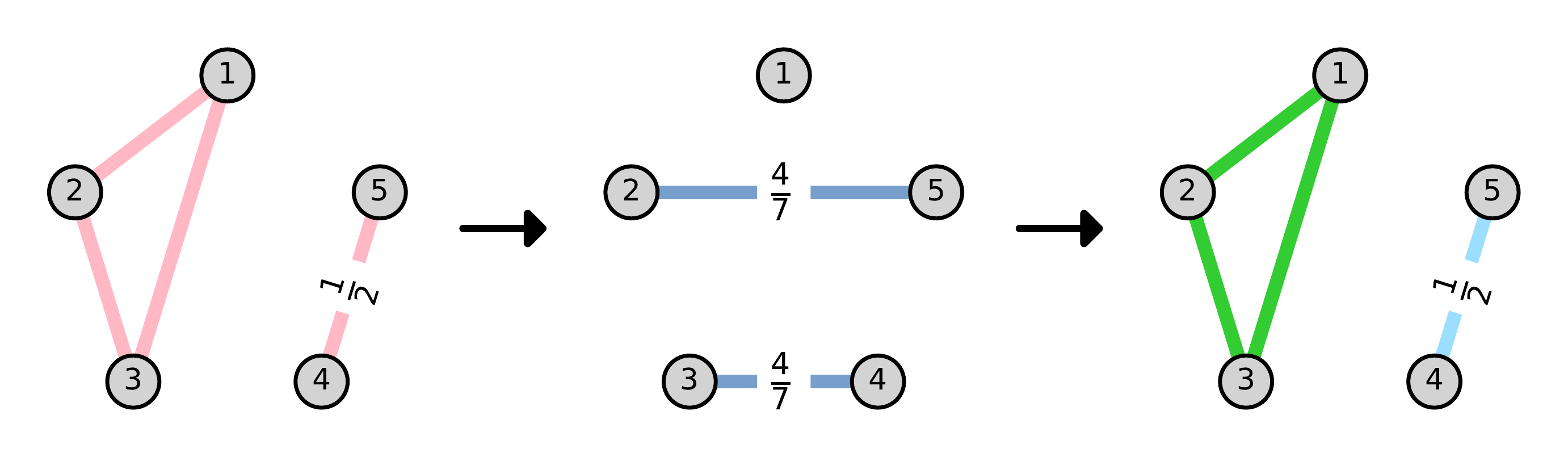}
         \vskip - 0.1 in
         \caption{$k=2, n=5(=3+2)$}
         \label{fig:2_peer_simple_FAIRY_5}
     \end{subfigure}
     \begin{subfigure}{1.0\hsize}
         \centering
         \includegraphics[width=\hsize]{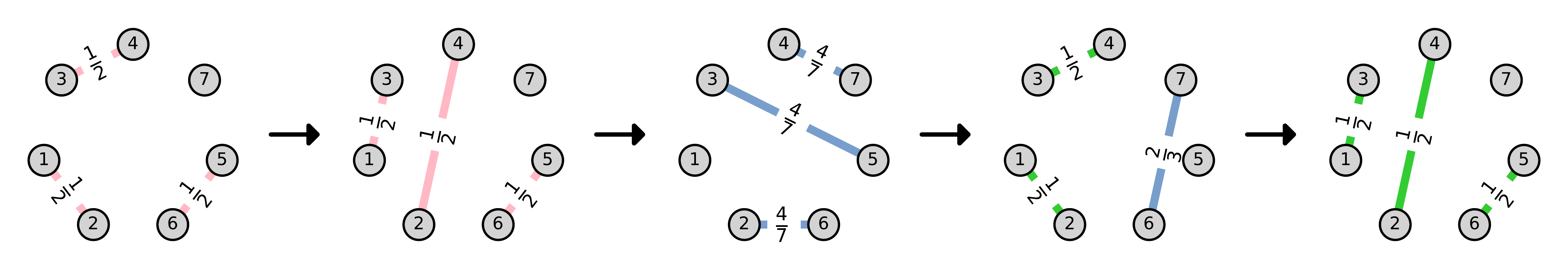}
         \vskip - 0.1 in
         \caption{$k=1, n=7(=2^2 + 2 + 1)$}
     \end{subfigure}
     \caption{Illustration of the \simpleProposed{$(k+1)$}. The edge is colored in the same color as the line of Alg.~\ref{alg:simple_k_peer_FAIRY} where the edge is added. The value on the edge indicates the edge weight. For simplicity, we omit the edge value when it is $\tfrac{1}{3}$.}
     \label{fig:other_simple_FAIRY}
\end{figure}

\subsection{Illustrative Comparison between Simple Base-$(k+1)$ and Base-$(k+1)$ Graphs}
\label{sec:ullustrative_comparison}
In this section, we provide an example of the \simpleProposed{$(k+1)$}, 
explaining the reason why the length of the \proposed{$(k+1)$} is less than that of the \simpleProposed{$(k+1)$}.

Let $G^{(1)}, \cdots, G^{(5)}$ denote the graphs depicted in Fig.~\ref{fig:1_peer_simple_FAIRY_6} from left to right, respectively.
$(G^{(1)}, G^{(2)}, G^{(3)}, G^{(4)}, G^{(5)})$ is finite-time convergence, 
but $(G^{(1)}, G^{(2)}, G^{(3)}, G^{(5)})$ is also finite-time convergence 
because after exchanging parameters in $G^{(3)}$, nodes $3$ and $4$ already have the same parameters.
Then, using the technique proposed in Sec.~\ref{sec:k_peer_FAIRY},
we can remove such unnecessary graphs contained in the \simpleProposed{$(k+1)$} (see Fig.~\ref{fig:1_peer_FAIRY_6}).
Consequently, the \proposed{$(k+1)$} can make all nodes reach the exact consensus faster than the \simpleProposed{$(k+1)$}.

\begin{figure}[h!]
\centering
    \includegraphics[width=\hsize]{pic/colored_1_peer_SimpleADIC_6.pdf}
    \caption{Illustration of the \simpleProposed{$2$} with $n = 6 (=2^2 + 2)$. The edge is colored in the same color as the line of Alg.~\ref{alg:simple_k_peer_FAIRY} where the edge is added.}
    \label{fig:1_peer_simple_FAIRY_6}
\end{figure}

\newpage
\subsection{Additional Examples}
\label{sec:additional_examples}

\subsubsection{Simple Base-$(k+1)$ Graph}
\label{sec:examples_of_simple_FAIRY}
\begin{figure}[h!]
     \centering
     \begin{subfigure}{0.43\hsize}
         \centering
         \includegraphics[width=\hsize]{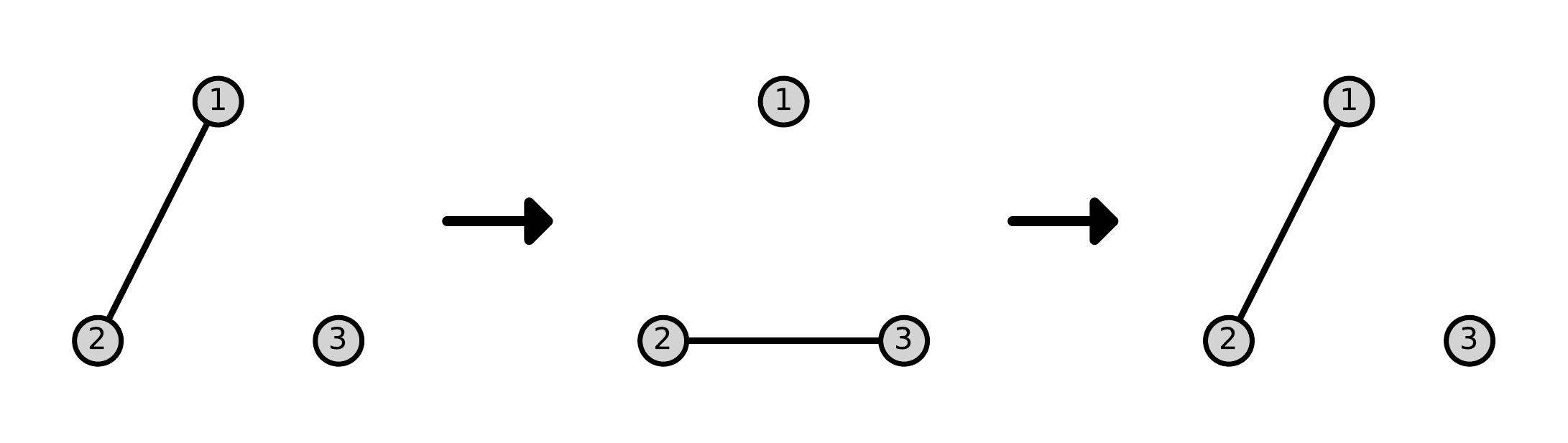}
         \vskip - 0.1 in
         \caption{$n=3$}
         \label{fig:1_peer_simple_FAIRY_3}
     \end{subfigure} \\
     \begin{subfigure}{0.28\hsize}
         \centering
         \includegraphics[width=\hsize]{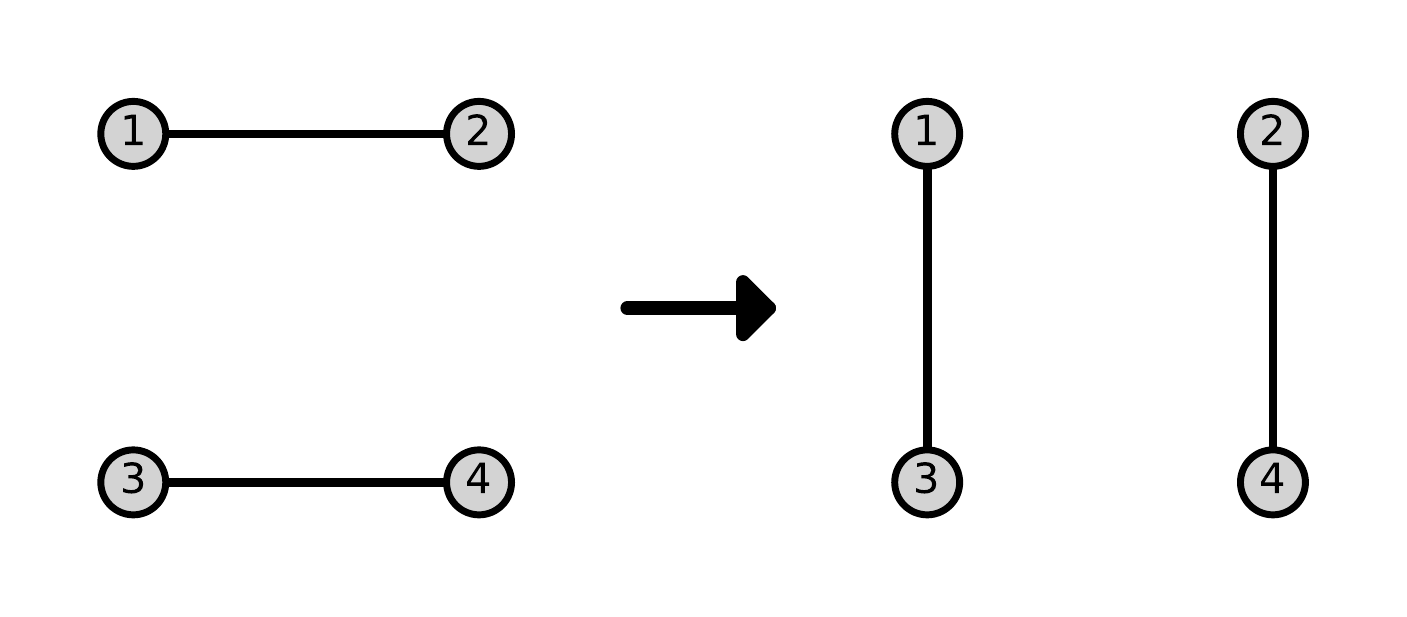}
         \vskip - 0.1 in
         \caption{$n=4$}
     \end{subfigure} \\
     \begin{subfigure}{0.72\hsize}
         \centering
         \includegraphics[width=\hsize]{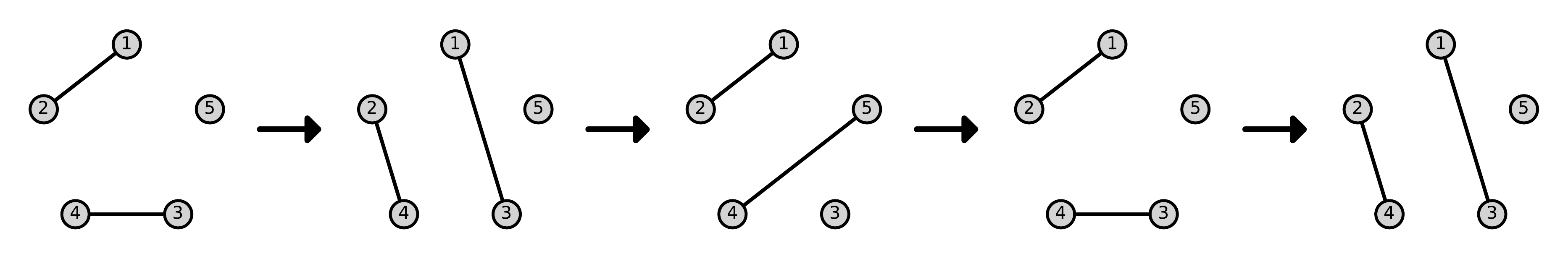}
         \vskip - 0.1 in
         \caption{$n=5$}
     \end{subfigure}
     \hfill
     \begin{subfigure}{0.72\hsize}
         \centering
         \includegraphics[width=\hsize]{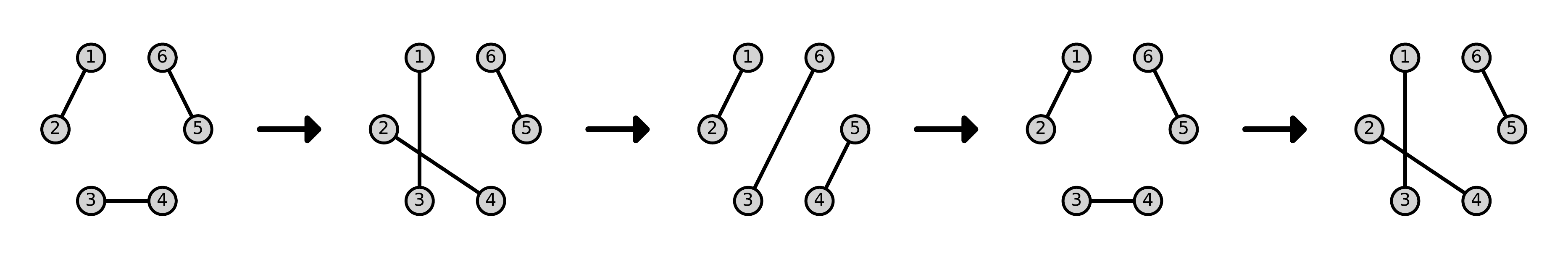}
         \vskip - 0.1 in
         \caption{$n=6$}
     \end{subfigure}
     \hfill
     \begin{subfigure}{0.72\hsize}
         \centering
         \includegraphics[width=\hsize]{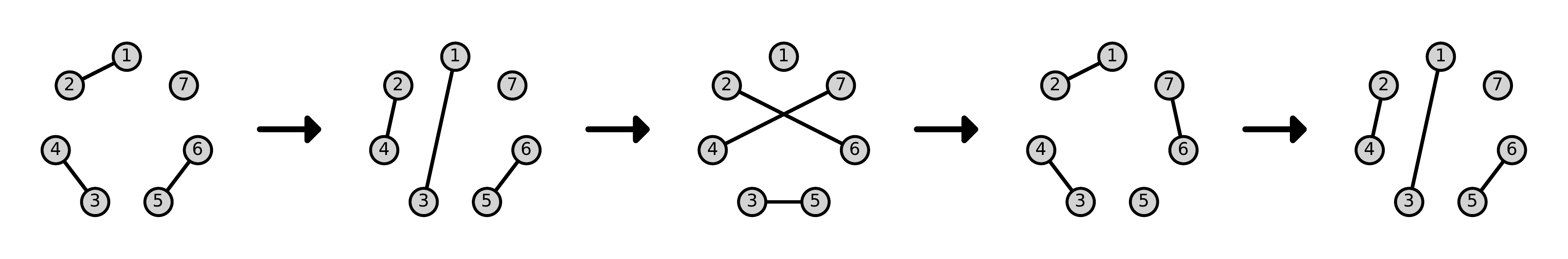}
         \vskip - 0.1 in
         \caption{$n=7$}
     \end{subfigure}
     \hfill
     \begin{subfigure}{0.43\hsize}
         \centering
         \includegraphics[width=\hsize]{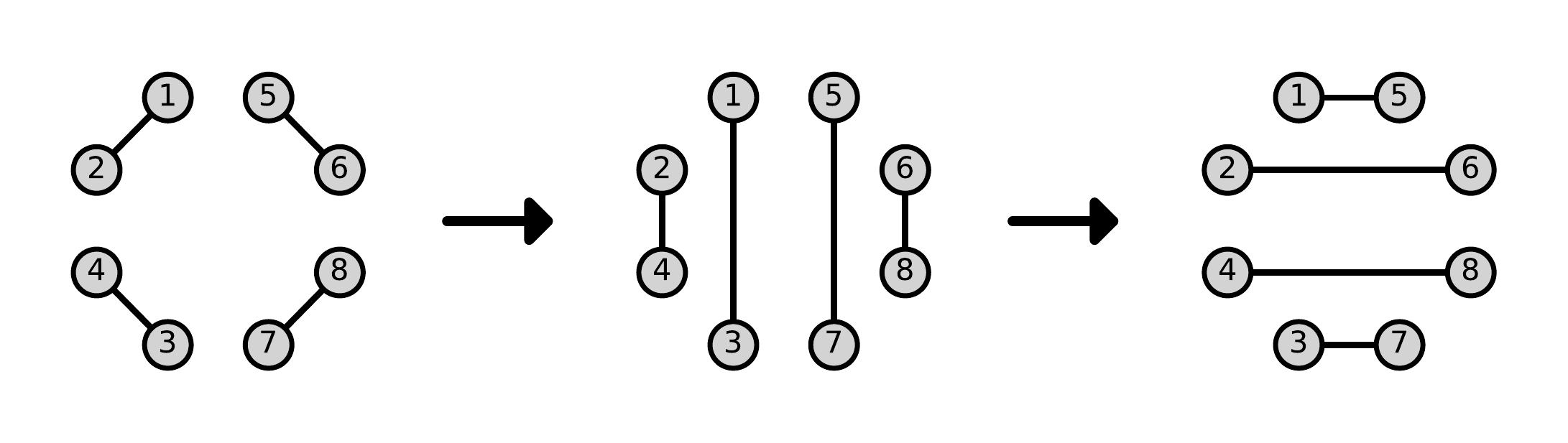}
         \vskip - 0.1 in
         \caption{$n=8$}
     \end{subfigure}
     \begin{subfigure}{\hsize}
         \centering
         \includegraphics[width=\hsize]{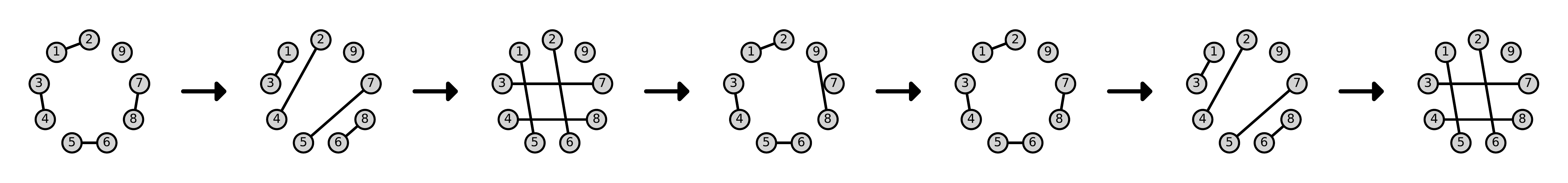}
         \vskip - 0.1 in
         \caption{$n=9$}
     \end{subfigure}
     \begin{subfigure}{1.0\hsize}
         \centering
         \includegraphics[width=\hsize]{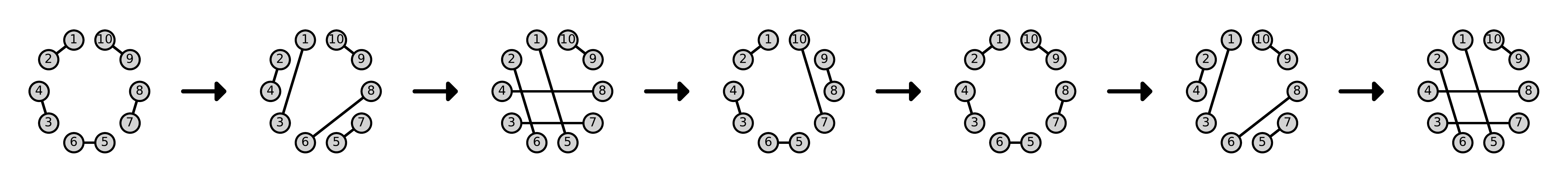}
         \vskip - 0.1 in
         \caption{$n=10$}
     \end{subfigure}
\caption{Illustration of the \simpleProposed{$2$} with the various numbers of nodes.}
\end{figure}

\newpage
\begin{figure}[h!]
     \centering
     \begin{subfigure}{0.18\hsize}
         \centering
         \includegraphics[width=\hsize]{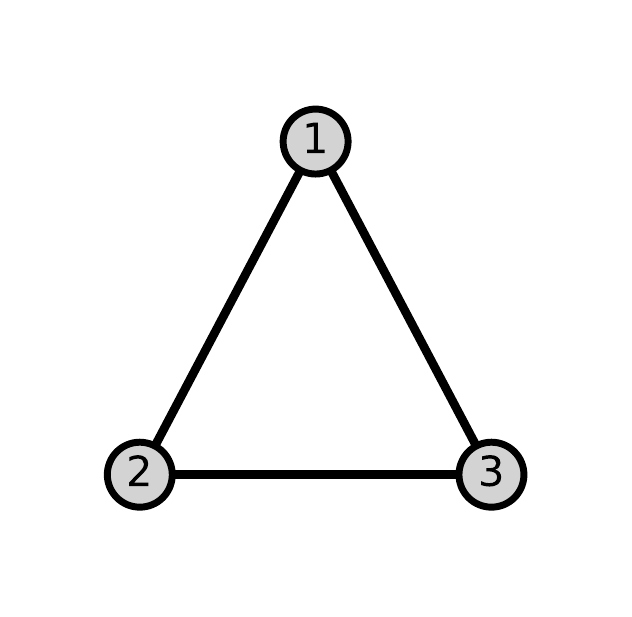}
         \vskip - 0.1 in
         \caption{$n=3$}
     \end{subfigure} \\
     \begin{subfigure}{0.36\hsize}
         \centering
         \includegraphics[width=\hsize]{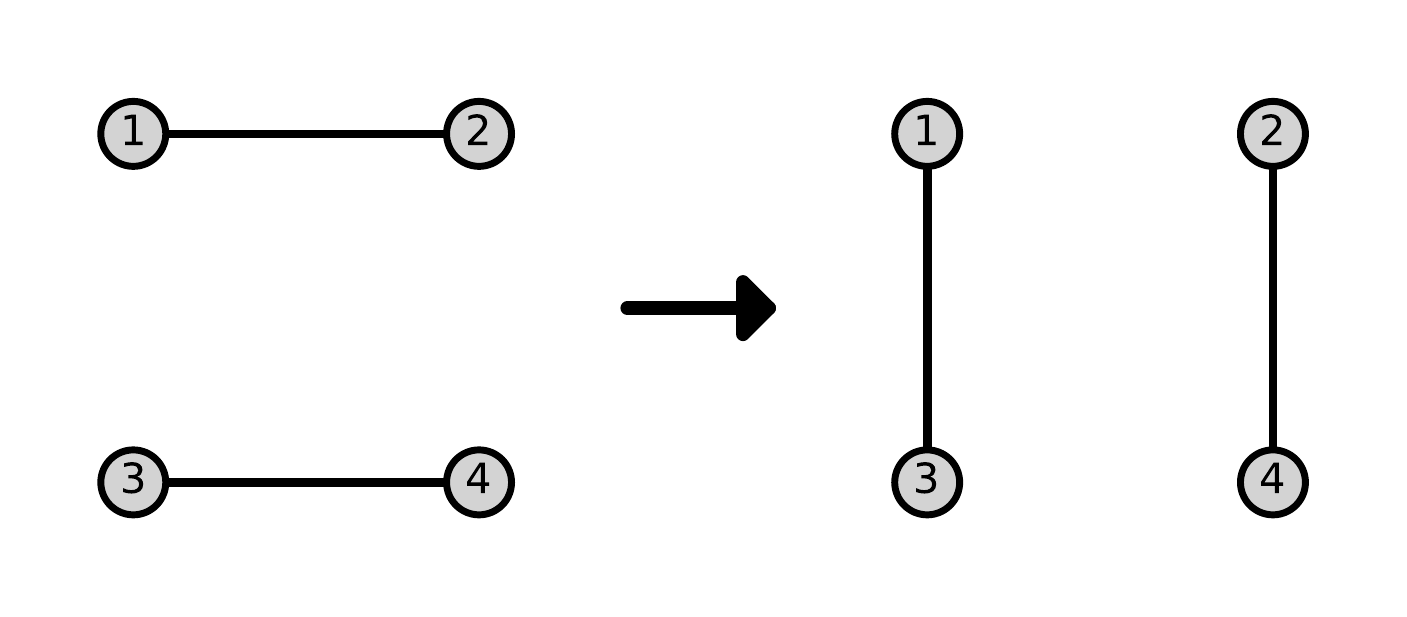}
         \vskip - 0.1 in
         \caption{$n=4$}
     \end{subfigure} \\
     \begin{subfigure}{0.54\hsize}
         \centering
         \includegraphics[width=\hsize]{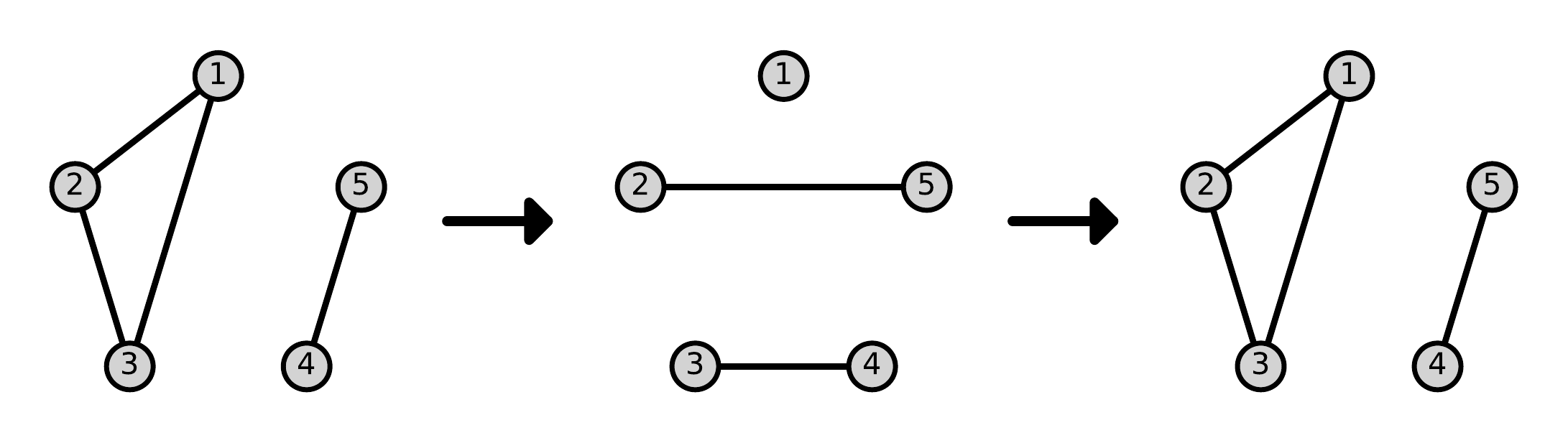}
         \vskip - 0.1 in
         \caption{$n=5$}
     \end{subfigure} \\
     \begin{subfigure}{0.36\hsize}
         \centering
         \includegraphics[width=\hsize]{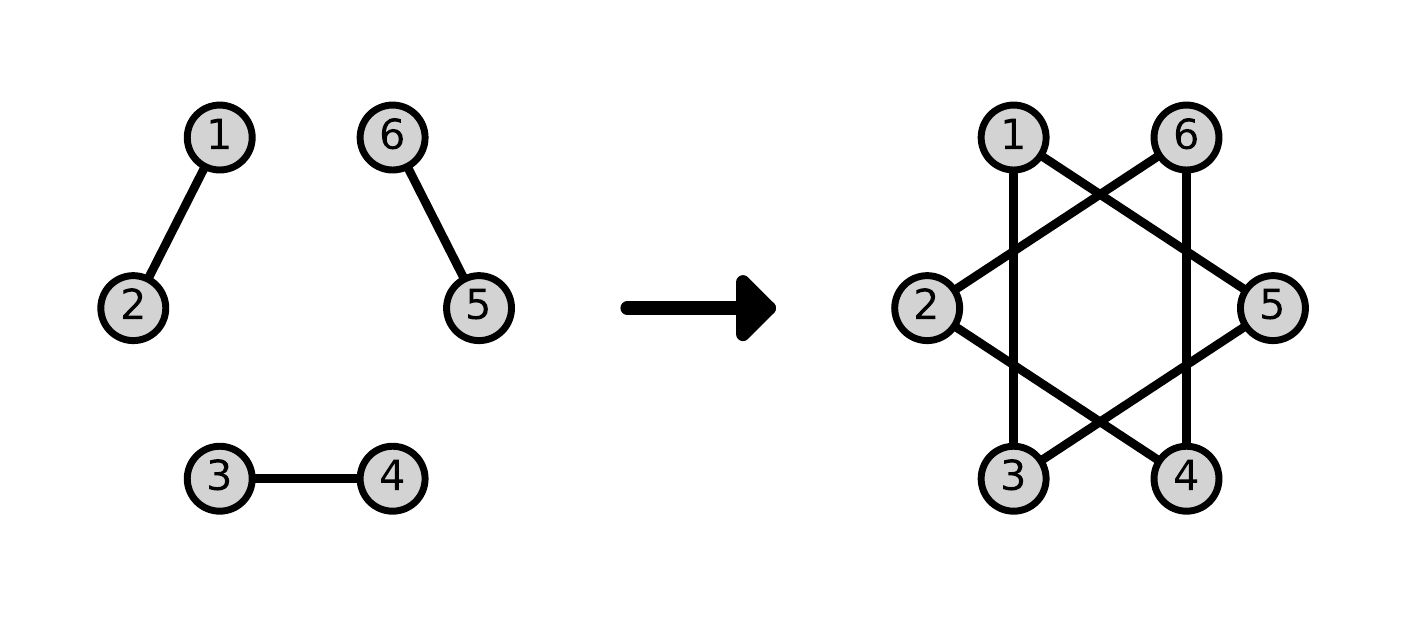}
         \vskip - 0.1 in
         \caption{$n=6$}
     \end{subfigure} \\
     \begin{subfigure}{0.72\hsize}
         \centering
         \includegraphics[width=\hsize]{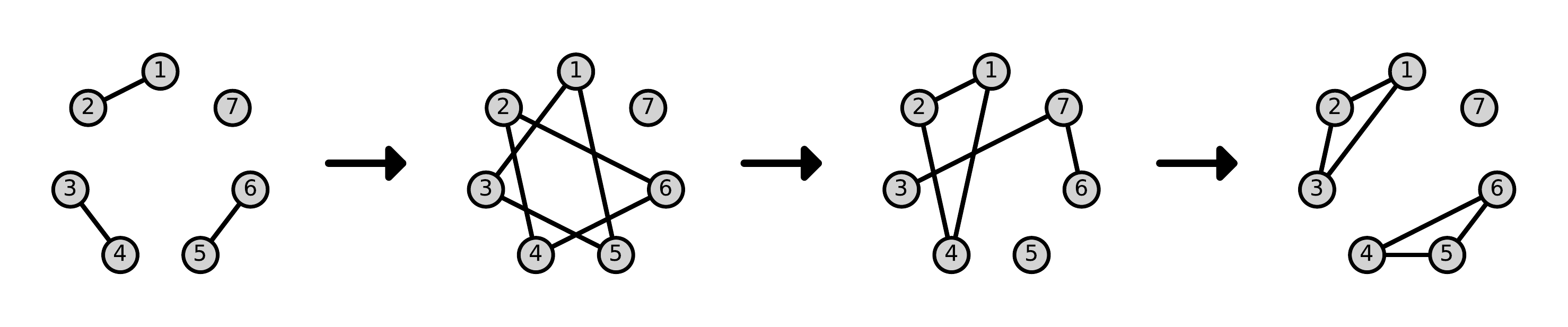}
         \vskip - 0.1 in
         \caption{$n=7$}
     \end{subfigure}
     \hfill
     \begin{subfigure}{0.54\hsize}
         \centering
         \includegraphics[width=\hsize]{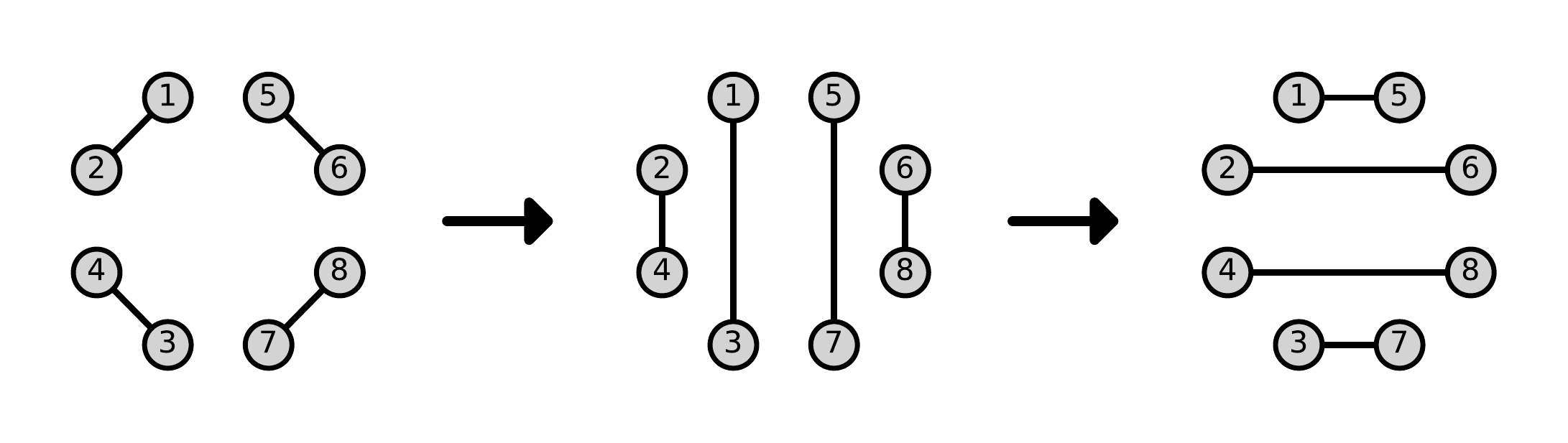}
         \vskip - 0.1 in
         \caption{$n=8$}
     \end{subfigure} \\
     \begin{subfigure}{0.36\hsize}
         \centering
         \includegraphics[width=\hsize]{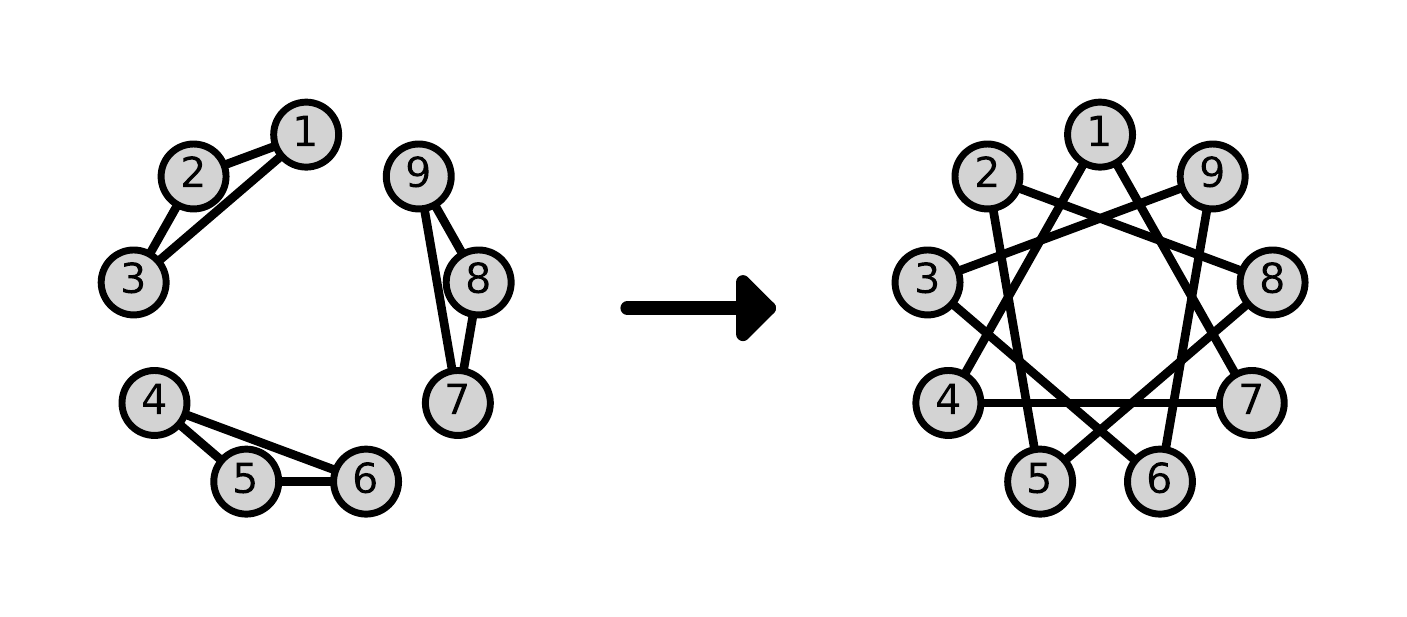}
         \vskip - 0.1 in
         \caption{$n=9$}
     \end{subfigure}
     \begin{subfigure}{0.72\hsize}
         \centering
         \includegraphics[width=\hsize]{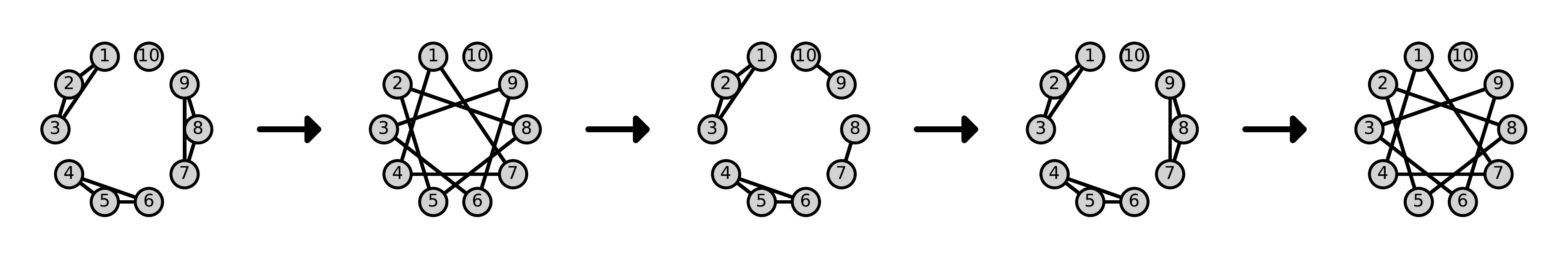}
         \vskip - 0.1 in
         \caption{$n=10$}
     \end{subfigure}
\caption{Illustration of the \simpleProposed{$3$} with the various numbers of nodes.}
\end{figure}

\newpage
\subsubsection{Base-$(k+1)$ Graph}
\label{sec:examples_of_FAIRY}
\begin{figure}[h!]
     \centering
     \begin{subfigure}{0.43\hsize}
         \centering
         \includegraphics[width=\hsize]{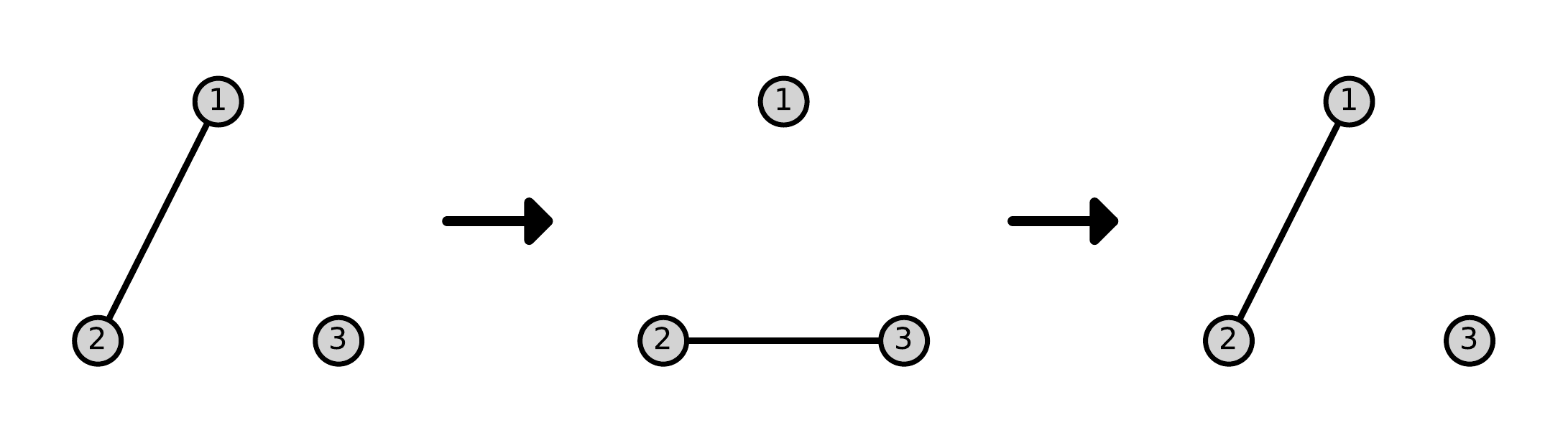}
         \vskip - 0.1 in
         \caption{$n=3$}
     \end{subfigure} \\
     \begin{subfigure}{0.28\hsize}
         \centering
         \includegraphics[width=\hsize]{pic/1_peer_ADIC_4.pdf}
         \vskip - 0.1 in
         \caption{$n=4$}
     \end{subfigure} \\
     \begin{subfigure}{0.71\hsize}
         \centering
         \includegraphics[width=\hsize]{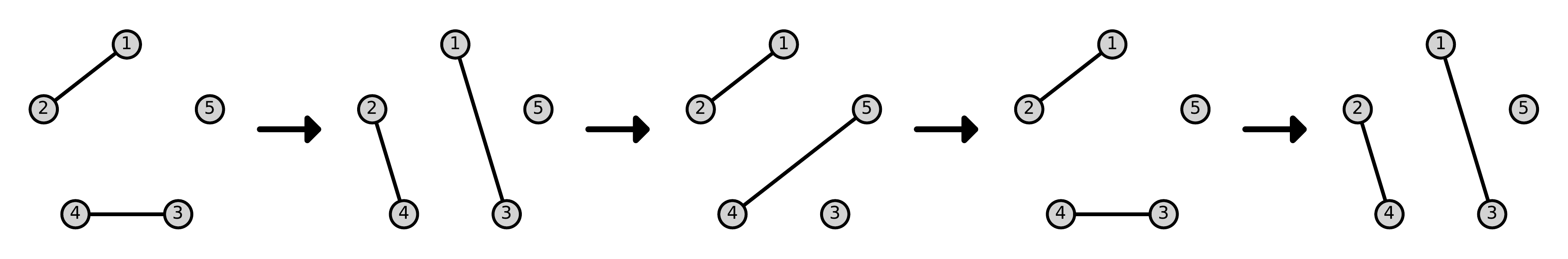}
         \vskip - 0.1 in
         \caption{$n=5$}
     \end{subfigure}
     \hfill
     \begin{subfigure}{0.57\hsize}
         \centering
         \includegraphics[width=\hsize]{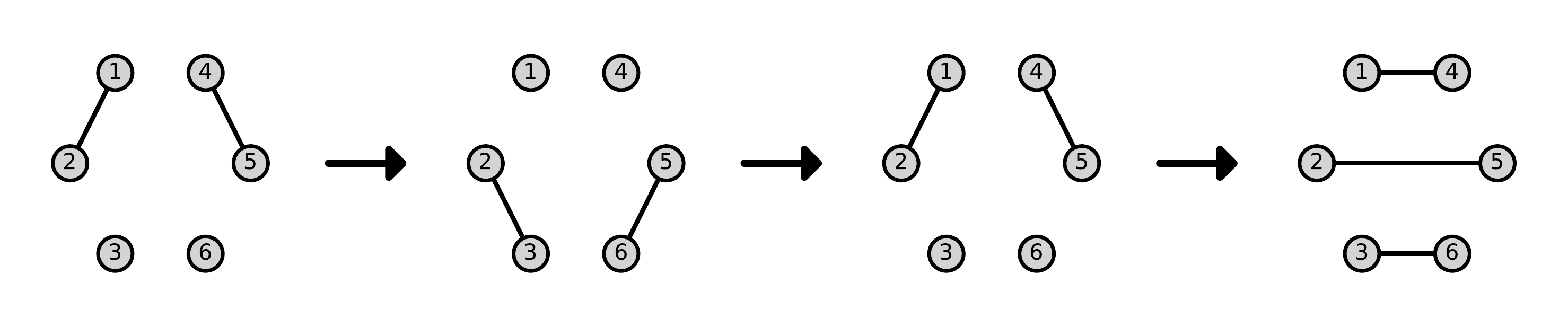}
         \vskip - 0.1 in
         \caption{$n=6$}
     \end{subfigure}
     \hfill
     \begin{subfigure}{0.71\hsize}
         \centering
         \includegraphics[width=\hsize]{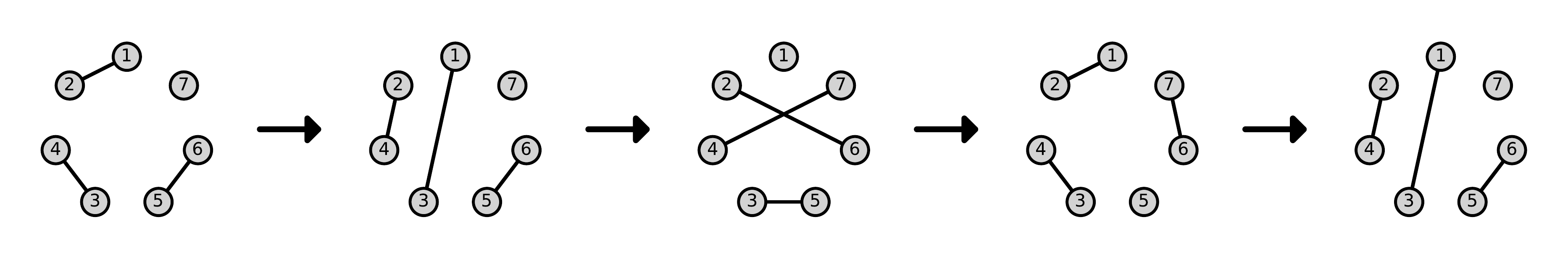}
         \vskip - 0.1 in
         \caption{$n=7$}
     \end{subfigure}
     \hfill
     \begin{subfigure}{0.43\hsize}
         \centering
         \includegraphics[width=\hsize]{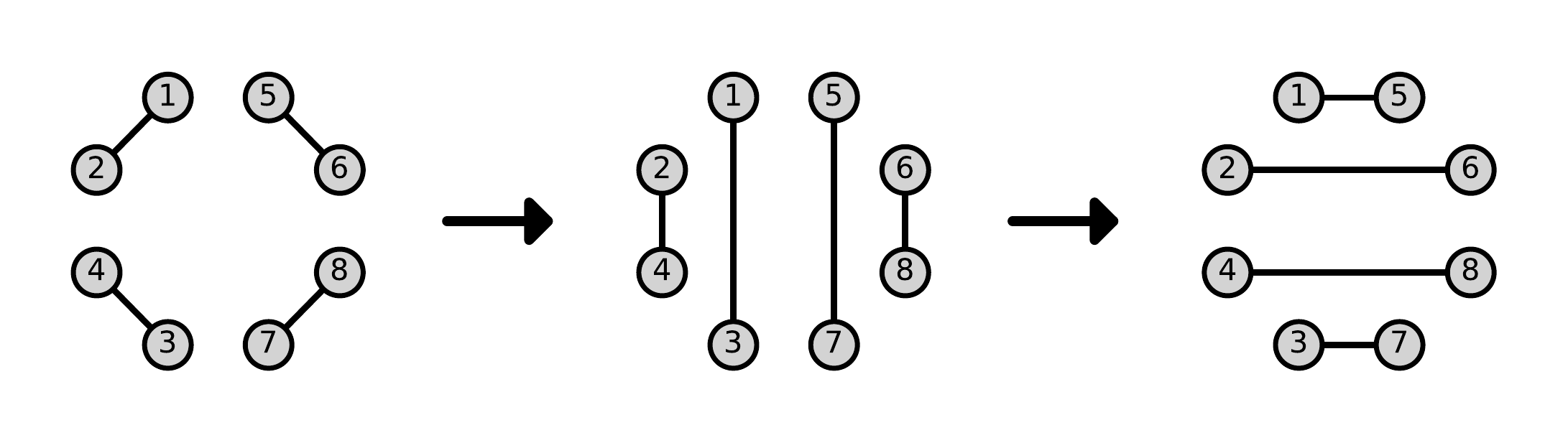}
         \vskip - 0.1 in
         \caption{$n=8$}
     \end{subfigure}
     \begin{subfigure}{\hsize}
         \centering
         \includegraphics[width=\hsize]{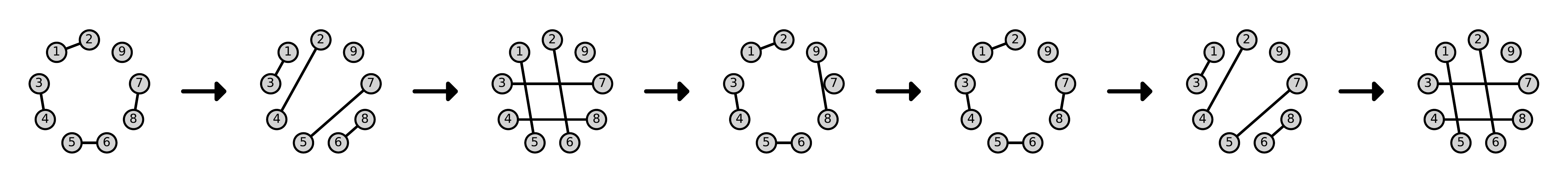}
         \vskip - 0.1 in
         \caption{$n=9$}
     \end{subfigure}
     \begin{subfigure}{1.0\hsize}
         \centering
         \includegraphics[width=0.86 \hsize]{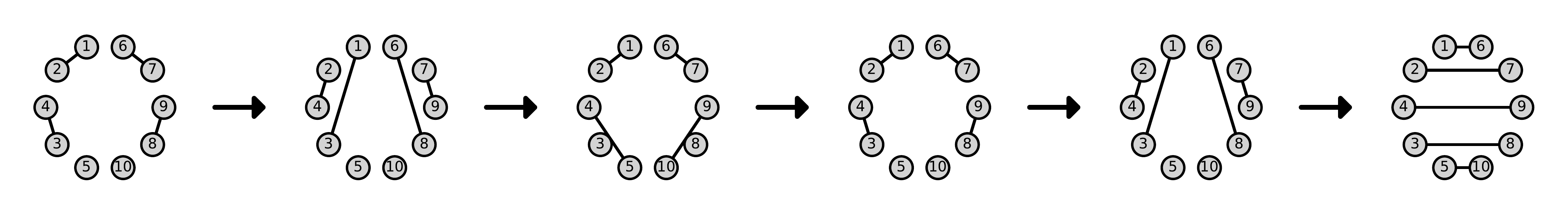}
         \vskip - 0.1 in
         \caption{$n=10$}
     \end{subfigure}
\caption{Illustration of the \proposed{$2$} with the various numbers of nodes.}
\label{fig:1_peer_FAIRY}
\end{figure}

\newpage
\begin{figure}[h!]
     \centering
     \begin{subfigure}{0.18\hsize}
         \centering
         \includegraphics[width=\hsize]{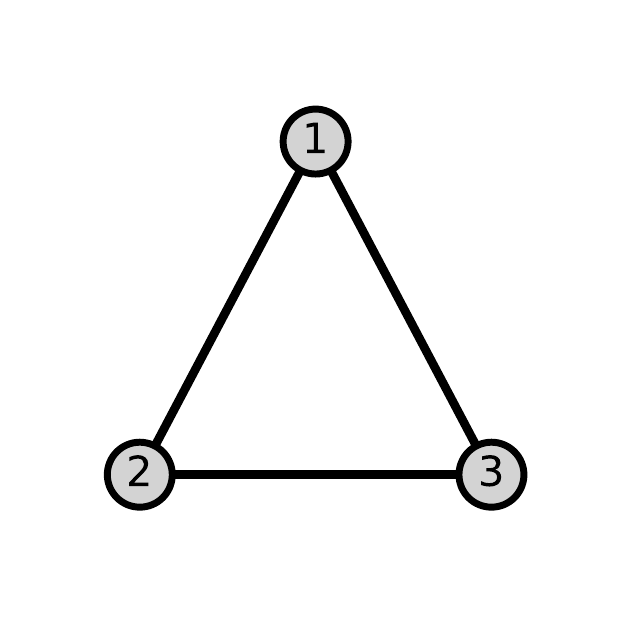}
         \vskip - 0.1 in
         \caption{$n=3$}
     \end{subfigure} \\
     \begin{subfigure}{0.36\hsize}
         \centering
         \includegraphics[width=\hsize]{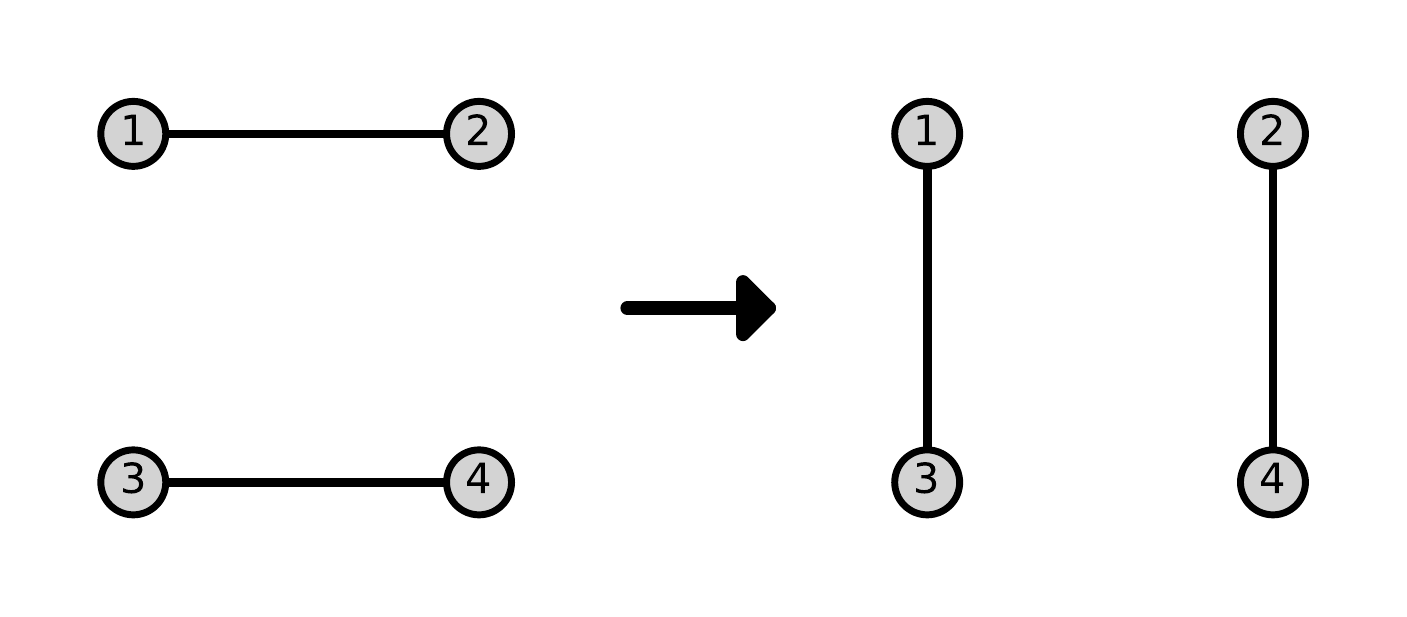}
         \vskip - 0.1 in
         \caption{$n=4$}
     \end{subfigure} \\
     \begin{subfigure}{0.54\hsize}
         \centering
         \includegraphics[width=\hsize]{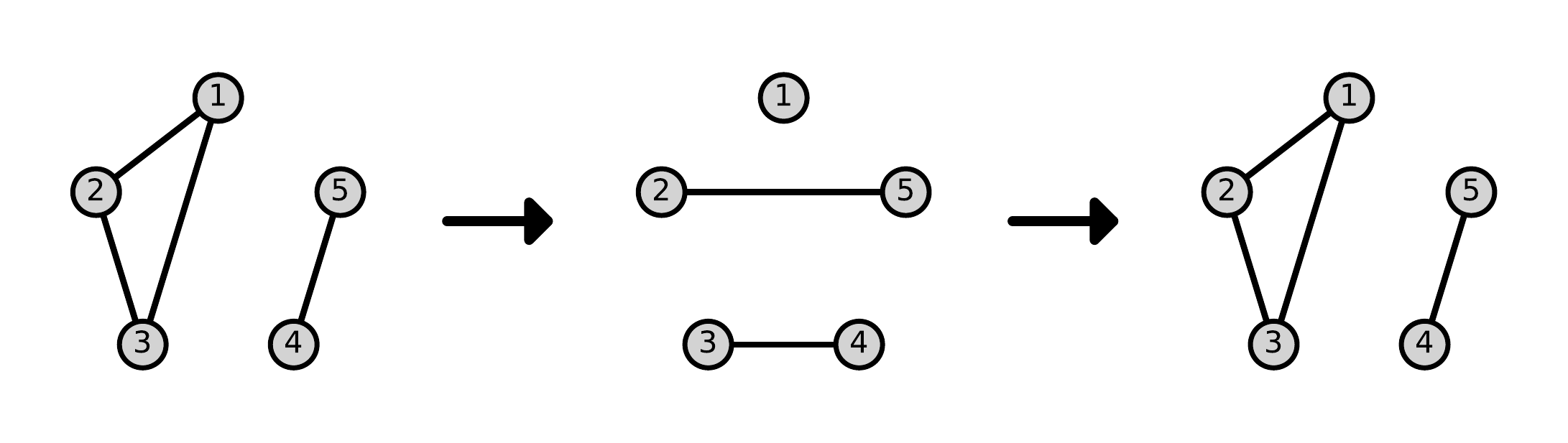}
         \vskip - 0.1 in
         \caption{$n=5$}
     \end{subfigure} \\
     \begin{subfigure}{0.36\hsize}
         \centering
         \includegraphics[width=\hsize]{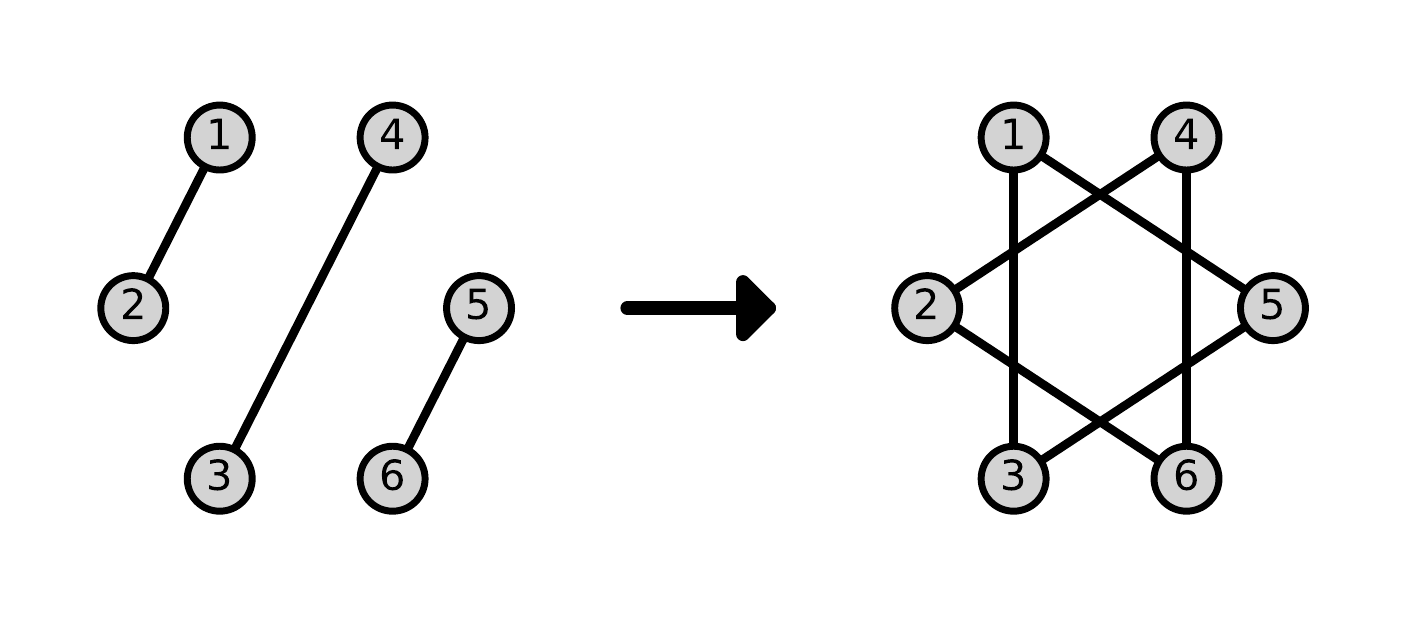}
         \vskip - 0.1 in
         \caption{$n=6$}
     \end{subfigure} \\
     \begin{subfigure}{0.72\hsize}
         \centering
         \includegraphics[width=\hsize]{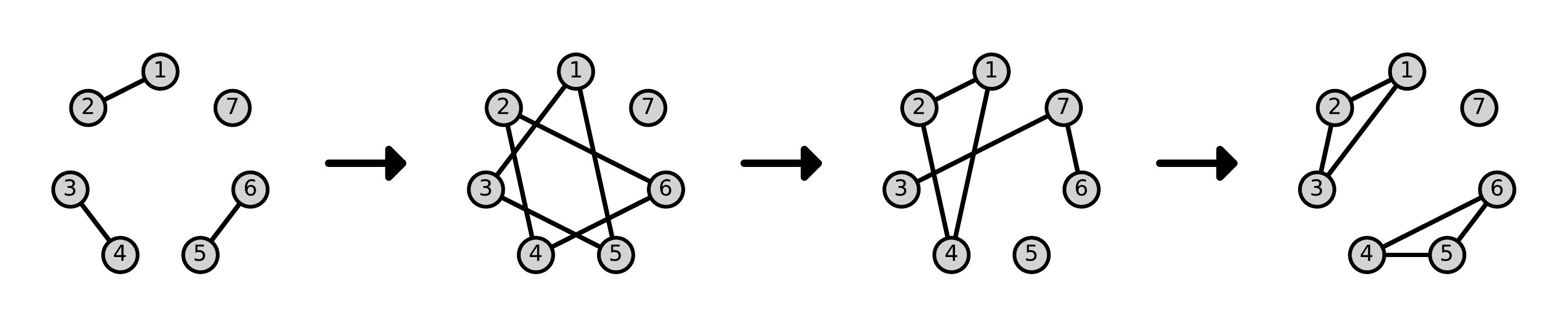}
         \vskip - 0.1 in
         \caption{$n=7$}
     \end{subfigure} \\
     \begin{subfigure}{0.54\hsize}
         \centering
         \includegraphics[width=\hsize]{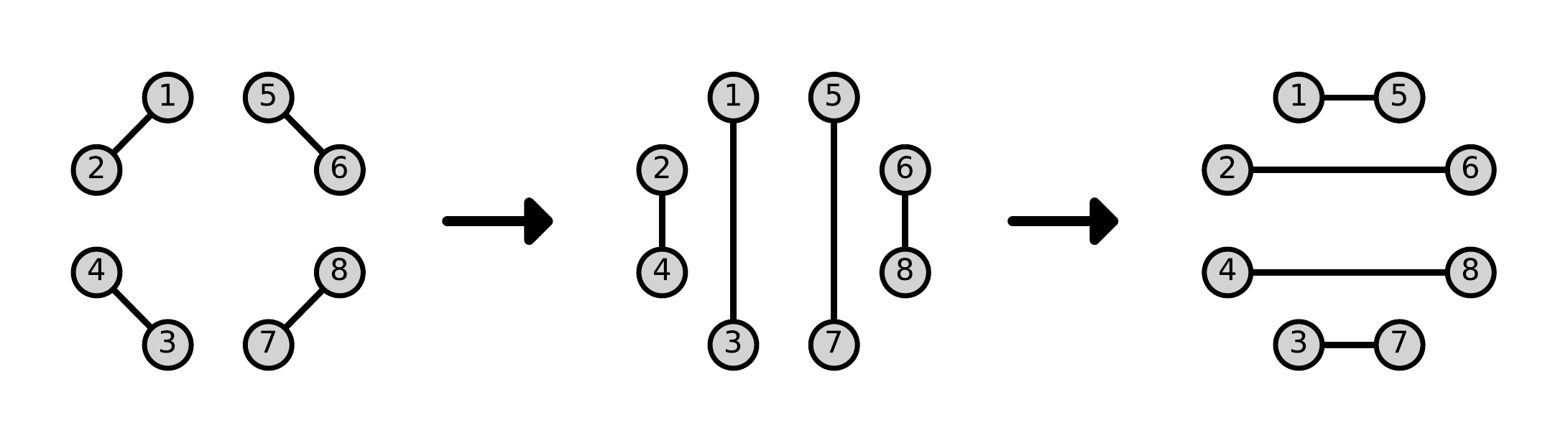}
         \vskip - 0.1 in
         \caption{$n=8$}
     \end{subfigure} \\
     \begin{subfigure}{0.36\hsize}
         \centering
         \includegraphics[width=\hsize]{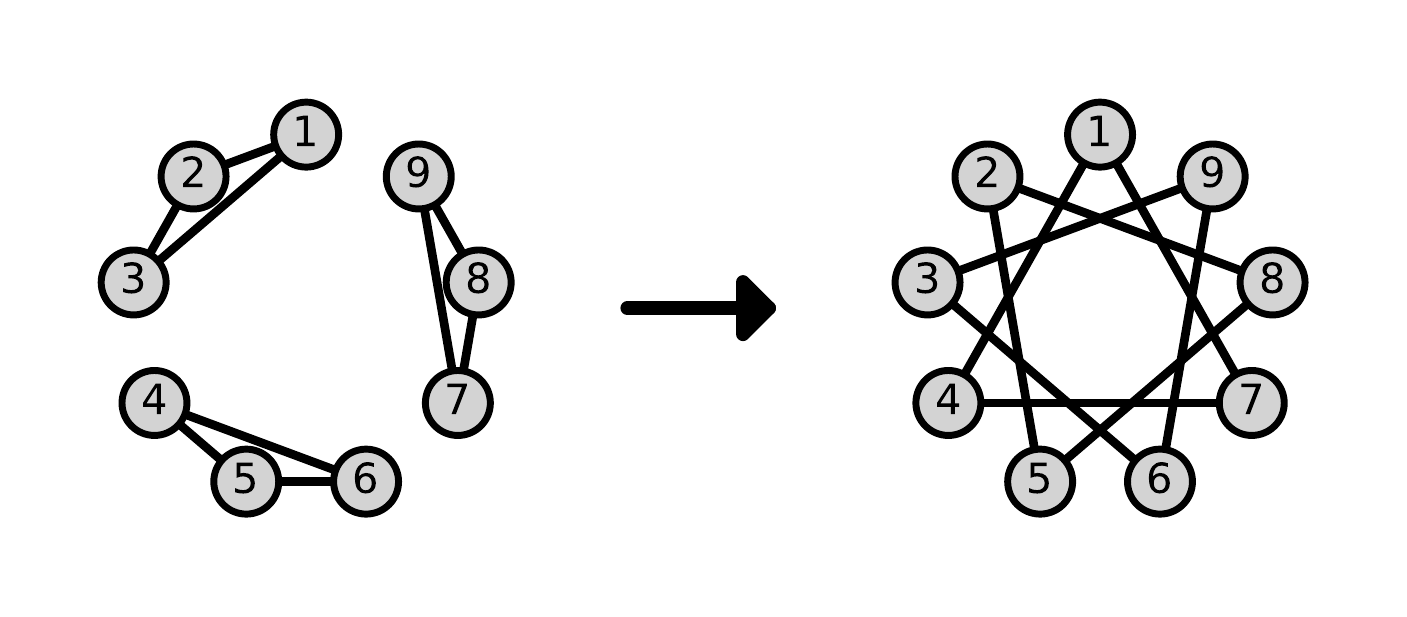}
         \vskip - 0.1 in
         \caption{$n=9$}
     \end{subfigure}
     \begin{subfigure}{0.72\hsize}
         \centering
         \includegraphics[width=\hsize]{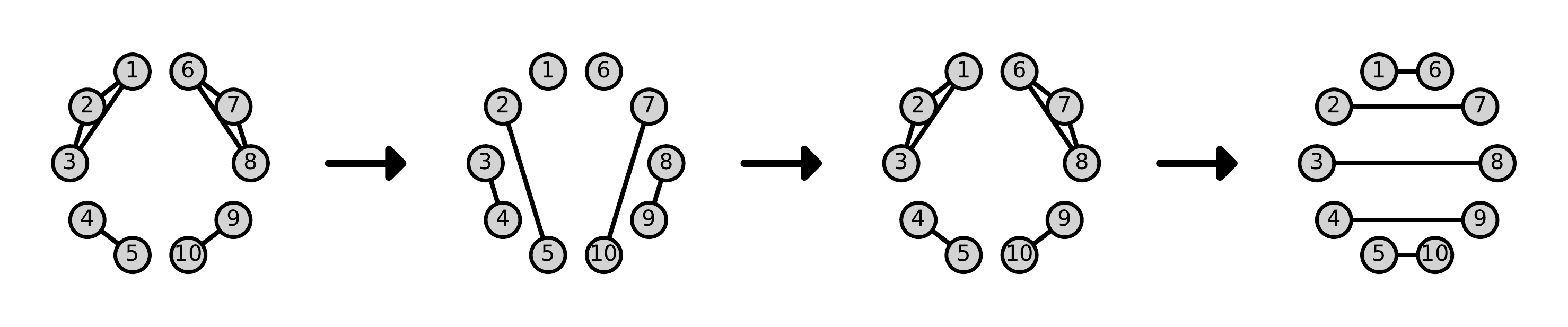}
         \vskip - 0.1 in
         \caption{$n=10$}
     \end{subfigure}
\caption{Illustration of the \proposed{$3$} with the various numbers of nodes.}
\label{fig:2_peer_FAIRY}
\end{figure}

\newpage
\subsection{$1$-peer Hypercube Graph and $1$-peer Exponential Graph}
\label{sec:1_peer_exp_and_1_peer_hypercube}
For completeness, we provide examples of the $1$-peer hypercube \cite{shi2016finite} and $1$-peer exponential graphs \cite{ying2021exponential} in 
Figs.~\ref{fig:1_peer_exp} and \ref{fig:1_peer_hypeprcube}, respectively.
\begin{figure}[h!]
     \centering
     \begin{subfigure}{0.32\hsize}
         \centering
         \includegraphics[width=\hsize]{pic/1_peer_ADIC_4.pdf}
         \vskip - 0.1 in
         \caption{$n=4$}
     \end{subfigure} 
     \hfill
     \begin{subfigure}{0.475\hsize}
         \centering
         \includegraphics[width=\hsize]{pic/1_peer_ADIC_8.pdf}
         \vskip - 0.1 in
         \caption{$n=8$}
     \end{subfigure} 
\caption{Illustration of the $1$-peer hypercube graph. All edge weights are $0.5$.}
\label{fig:1_peer_hypeprcube}
\end{figure}
\begin{figure}[h!]
     \centering
     \begin{subfigure}{0.32\hsize}
         \centering
         \includegraphics[width=\hsize]{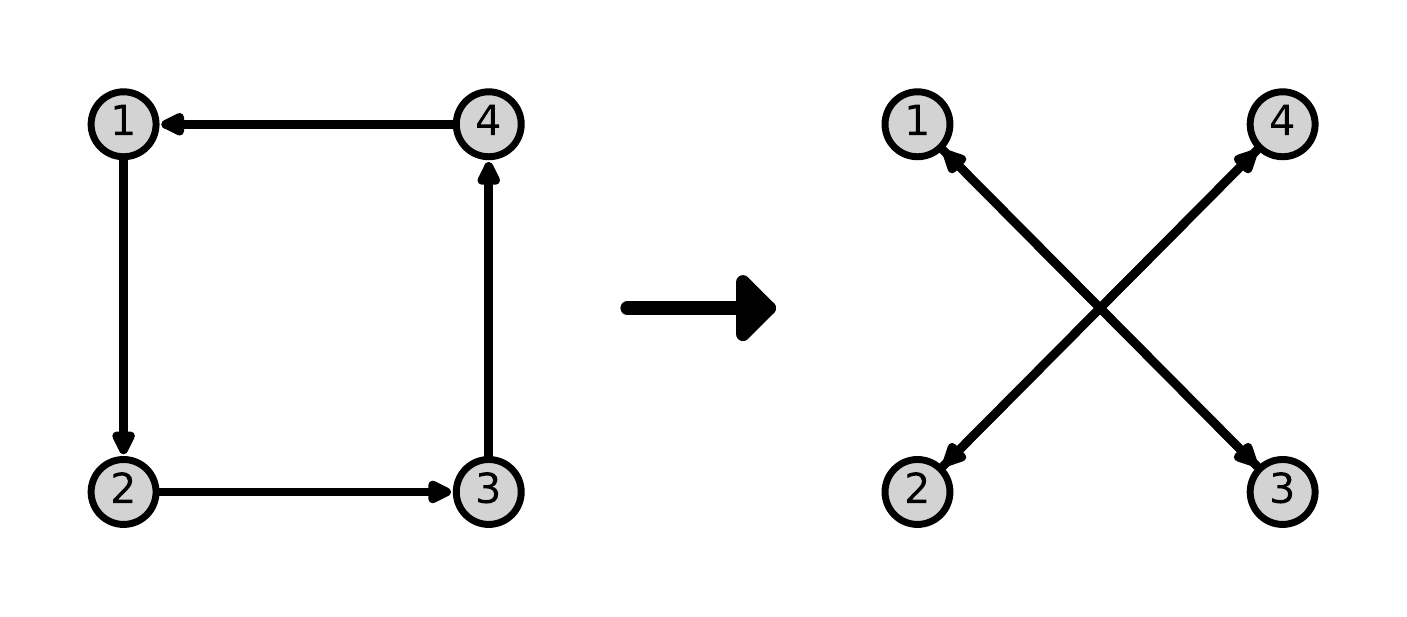}
         \vskip - 0.1 in
         \caption{$n=4$}
     \end{subfigure} 
     \hfill
     \begin{subfigure}{0.475\hsize}
         \centering
         \includegraphics[width=\hsize]{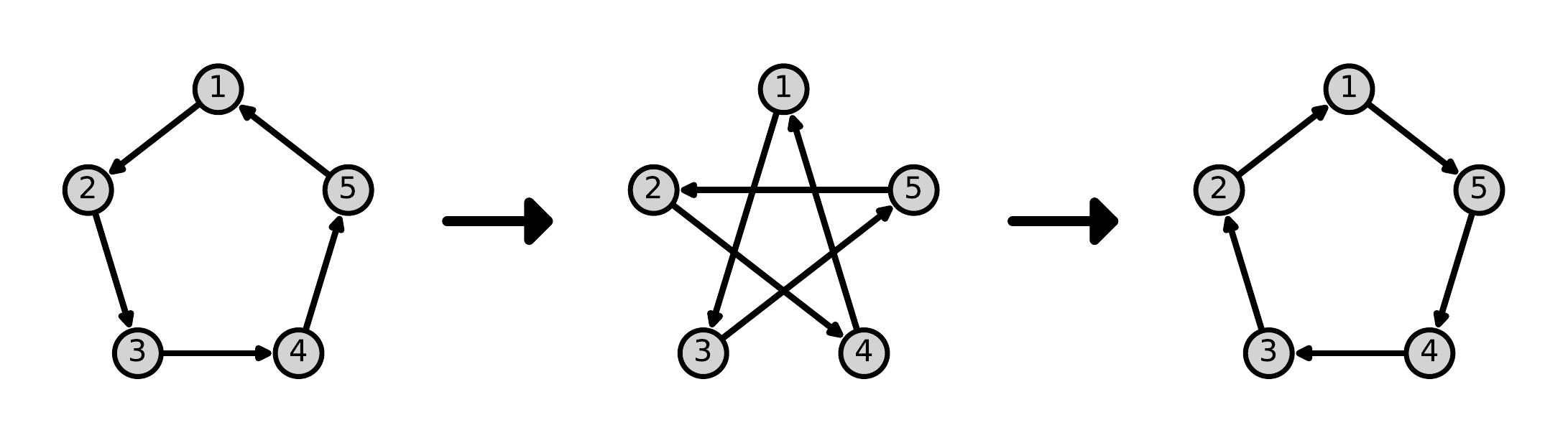}
         \vskip - 0.1 in
         \caption{$n=5$}
     \end{subfigure} 
     \begin{subfigure}{0.475\hsize}
         \centering
         \includegraphics[width=\hsize]{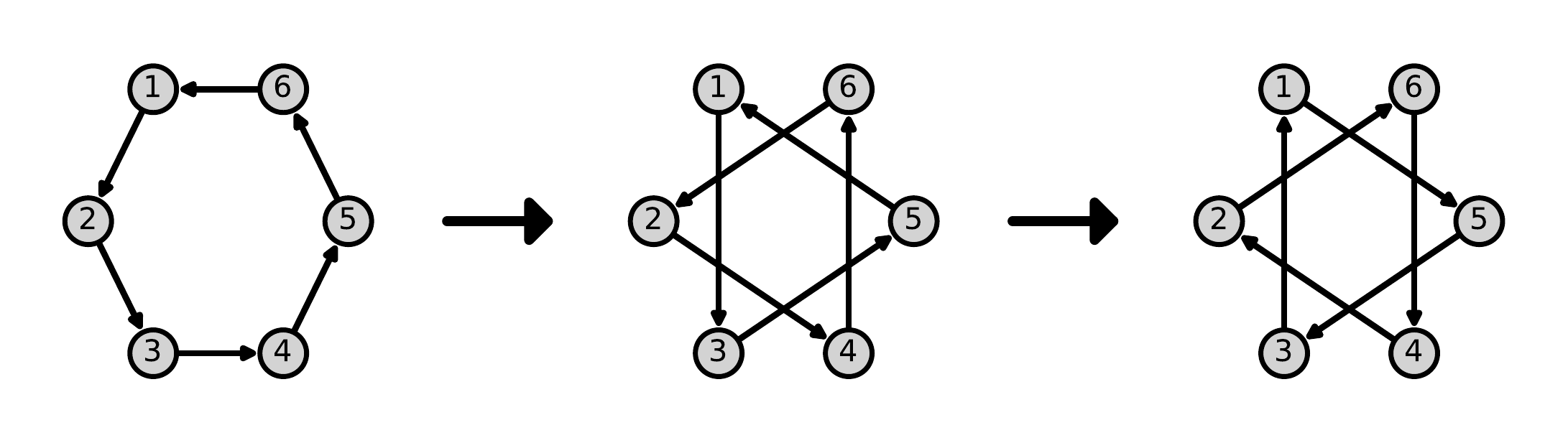}
         \vskip - 0.1 in
         \caption{$n=6$}
     \end{subfigure} 
     \hfill
     \begin{subfigure}{0.475\hsize}
         \centering
         \includegraphics[width=\hsize]{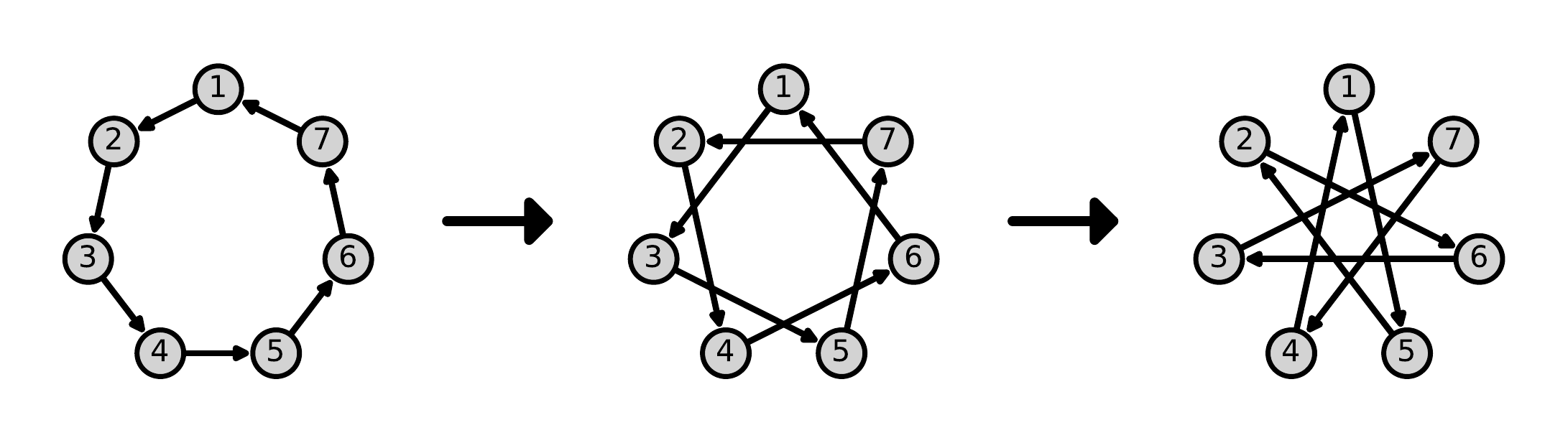}
         \vskip - 0.1 in
         \caption{$n=7$}
     \end{subfigure} 
     \begin{subfigure}{0.475\hsize}
         \centering
         \includegraphics[width=\hsize]{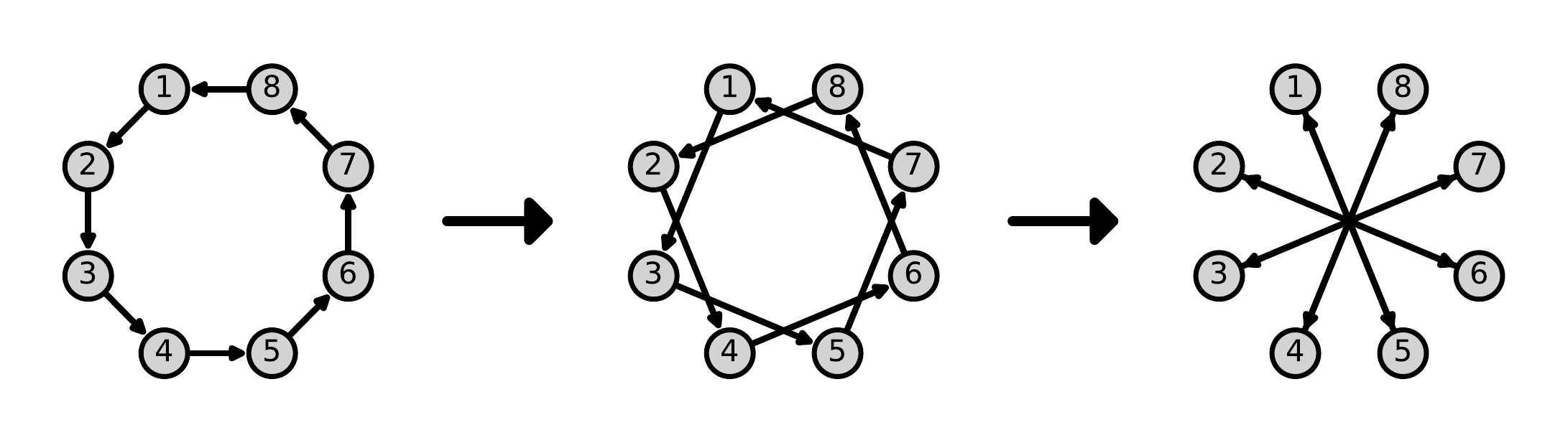}
         \vskip - 0.1 in
         \caption{$n=8$}
     \end{subfigure} 
\caption{Illustration of the $1$-peer exponential graph. All edge weights are $0.5$.}
\label{fig:1_peer_exp}
\end{figure}

\newpage
\section{Proof of Theorem \ref{theorem:length}}
\label{sec:proof_of_length}

\begin{lemma}[Length of \hyperhypercube{$k$}]
\label{lemma:hyperhypercube}
Suppose that all prime factors of the number of nodes $n$ are less than or equal to $k+1$.
Then, for any number of nodes $n \in \mathbb{N}$ and maximum degree $k \in [n-1]$,
the length of the \hyperhypercube{$k$} is less than or equal to $\max\{ 1, 2 \log_{k+2} (n)\}$.
\end{lemma}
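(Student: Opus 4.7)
The proof plan rests on a direct counting argument combined with a greedy factorization. By inspection of Algorithm \ref{alg:hyperhypercube}, the $k$-peer HyperHyperCube graph returned on input $(V, n)$ with decomposition $n = n_1 \times \cdots \times n_L$ (where each $n_l \in \{2,\ldots,k+1\}$) outputs exactly $L$ edge sets $(E^{(1)}, \ldots, E^{(L)})$; hence its length equals $L$. It therefore suffices to exhibit \emph{some} admissible decomposition whose number of factors is at most $2\log_{k+2}(n) + 1$, since the algorithm may be instantiated on the decomposition of our choice.

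To construct such a decomposition, I would write $n = q_1 q_2 \cdots q_s$ as a product of primes with multiplicity (by hypothesis every $q_i \leq k+1$, which makes each prime an admissible singleton factor) and then run a greedy packing. Maintain a running product $P$, initialized to $q_1$; for each subsequent prime $q_i$, replace $P$ by $P \cdot q_i$ if $P \cdot q_i \leq k+1$, and otherwise commit the current $P$ as the next factor $n_l$ and reset $P \leftarrow q_i$. At the end, commit the final $P$. Every committed $n_l$ is a product of primes each $\leq k+1$ whose cumulative product was kept $\leq k+1$, so $2 \leq n_l \leq k+1$, and $\prod_l n_l = n$ by construction.

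The key inequality to establish is that two consecutive committed factors satisfy $n_l \cdot n_{l+1} \geq k+2$. A commit of $n_l$ is triggered precisely when the next prime $q$ would violate $n_l \cdot q \leq k+1$, so $n_l \cdot q \geq k+2$; since $q$ becomes the seed of $n_{l+1}$, we have $n_{l+1} \geq q$, which gives the claim. Pairing the committed factors $(n_1, n_2), (n_3, n_4), \ldots$ then yields
\[
    n \;=\; \prod_{l=1}^{L} n_l \;\geq\; (k+2)^{\lfloor L/2 \rfloor},
\]
so $\lfloor L/2 \rfloor \leq \log_{k+2}(n)$, and therefore $L \leq 2\log_{k+2}(n) + 1$.

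The main obstacle is essentially bookkeeping rather than a substantive difficulty: one has to dispatch the odd-$L$ case where the final factor has no partner (absorbed by the $+1$ term), and the degenerate cases $s=0$ or $s=1$ in which the bound is trivial. No deeper ingredient is required beyond the observation that each commit event forces $n_l \cdot q \geq k+2$, which is exactly what drives the geometric-mean estimate.
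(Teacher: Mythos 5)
Your proof is correct and follows essentially the same route as the paper's: both reduce to a decomposition $n = n_1 \cdots n_L$ in which consecutive factors satisfy $n_l \cdot n_{l+1} \geq k+2$, then pair the factors to get $n \geq (k+2)^{\lfloor L/2 \rfloor}$ and absorb the unpaired factor in the $+1$. The only cosmetic difference is that you build such a decomposition constructively by greedy packing of the prime factors, whereas the paper argues one may assume it without loss of generality by merging any two factors whose product is at most $k+1$.
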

\begin{proof}
We assume that $n$ is decomposed as $n = n_1 \times \cdots \times n_L$ with minimum $L$ where $n_l \in [k+1]$ for all $l \in [L]$.
Without loss of generality, we suppose $n_1 \leq n_2 \leq \cdots \leq n_L$.
Then, for any $i \not = j$, it holds that $n_i \times n_j \geq k+2$
because if $n_i \times n_j \leq k+1$ for some $i$ and $j$, this contradicts the assumption that $L$ is minimum. 

When $L$ is even, we have
\begin{align*}
    n = (n_1 \times n_2) \times \cdots \times (n_{L-1} \times n_L) \geq (k+2)^{\frac{L}{2}}.
\end{align*}
Then, we get $L \leq 2 \log_{k+2} (n)$.

Next, we discuss the case when $L$ is odd.
When $L \geq 3$, $n_L \geq \sqrt{k+2}$ holds because $n_{L-2} \times n_{L-1} \geq k+2$.
Thus, we get
\begin{align*}
    n = (n_1 \times n_2) \times \cdots \times (n_{L-2} \times n_{L-1}) \times n_L 
    \geq (k+2)^{\frac{L-1}{2}} \times n_{L}
    \geq (k+2)^{\frac{L}{2}}.
\end{align*}
Then, we get $L \leq 2 \log_{k+2} (n)$ when $L \geq 3$.

Thus, given the case when $L=1$, the length of the \hyperhypercube{$k$} is less than or equal to $\max\{ 1, 2 \log_{k+2} (n)\}$.
\end{proof}

\begin{lemma}[Length of \simpleProposed{$(k+1)$}]
\label{lemma:length_of_simple_FAIRY}
For any number of nodes $n \in \mathbb{N}$ and maximum degree $k \in [n-1]$,
the length of the \simpleProposed{$(k+1)$} is less than or equal to $2 \log_{k+1} (n) + 2$.
\end{lemma}
\begin{proof}
When all prime factors of $n$ are less than or equal to $k+1$,
the \simpleProposed{$(k+1)$} is equivalent to the \hyperhypercube{$k$}
and the statement holds from Lemma \ref{lemma:hyperhypercube}.
In the following, we consider the case when there exists a prime factor of $n$ that is larger than $k+1$.
Note that because when $L=1$ (i.e., $n=a_1 \times (k+1)^{p_1}$), all prime factors of $n$ are less than or equal to $k+1$,
we only need to consider the case when $L \geq 2$.
We have the following inequality:
\begin{align*}
    \log_{k+1} (n) 
    &= \log_{k+1} (a_1 (k+1)^{p_1} + \cdots + a_L (k+1)^{p_L}) \\
    &\geq p_1 + \log_{k+1} (a_1) \\
    &\geq p_1.
\end{align*}
Then, because $|V_1| = a_1 \times (k+1)^{p_1}$, it holds that $m_1 = |\mathcal{H}_k (V_1)| \leq 1 + p_1 \leq \log_{k+1} (n) + 1$.
Similarly, it holds that $|\mathcal{H}_k (V_{1,1})| = p_1 \leq \log_{k+1} (n)$ because $|V_{1,1}| = (k+1)^{p_1}$.
In Alg.~\ref{alg:simple_k_peer_FAIRY}, the update rule $b_1 \leftarrow b_1 + 1$ in line 22 is executed for the first time when $m = m_1 + 2$ because $L \geq 2$.
Thus, the length of the \simpleProposed{$(k+1)$} is at most $m_1 + |\mathcal{H}_k (V_{1,1})| + 1 \leq 2 \log_{k+1} (n) + 2$.
This concludes the statement.
\end{proof}

\begin{lemma}[Length of \proposed{$(k+1)$}]
For any number of nodes $n \in \mathbb{N}$ and maximum degree $k \in [n-1]$,
the length of the \proposed{$(k+1)$} is less than or equal to $2 \log_{k+1} (n) + 2$. 
\end{lemma}
\begin{proof}
The statement follows immediately from Lemma \ref{lemma:length_of_simple_FAIRY} and line $12$ in Alg.~\ref{alg:k_peer_FAIRY}.
\end{proof}

\newpage
\section{Convergence Rate of DSGD over Various Topologies}
\label{sec:convergence_rate_of_various_topologies}
Table \ref{table:convergence_rate_of_various_topologies} lists the convergence rates of DSGD over various topologies.
These convergence rates can be immediately obtained from Theorem 2 stated in \citet{koloskova2020unified} and consensus rate of the topology.
As seen from Table \ref{table:convergence_rate_of_various_topologies},
the \proposed{$2$} enables DSGD to converge faster than the ring and torus and as fast as the exponential graph for any number of nodes, although the maximum degree of the \proposed{$2$} is only one.
Moreover, for any number of nodes, 
the \proposed{$(k+1)$} with $2 \leq k < \ceil{\log_2 (n)}$ enables DSGD to converge faster than the exponential graph,
even though the maximum degree of the \proposed{$(k+1)$} remains to be less than that of the exponential graph.

\begin{table}[h!]
\centering
\caption{Convergence rates and maximum degrees of DSGD over various topologies.}
\label{table:convergence_rate_of_various_topologies}
\resizebox{\linewidth}{!}{
\begin{tabular}{lccc}
\toprule
\textbf{Topology}    &  \textbf{Convergence Rate} & \textbf{Maximum Degree} & \textbf{\#Nodes $n$} \\
\midrule
Ring \cite{nedic2018network}
                    & $\mathcal{O} \left( \dfrac{\sigma^2}{n \epsilon^2}
                        + \dfrac{\zeta n^2 + \sigma n}{\epsilon^{3/2}}
                        + \dfrac{n^2}{\epsilon} \right) \cdot L F_0$ & $2$ & $\forall n \in \mathbb{N}$ \\
Torus \cite{nedic2018network}              
                    & $\mathcal{O} \left( \dfrac{\sigma^2}{n \epsilon^2}
                        + \dfrac{\zeta n + \sigma \sqrt{n}}{\epsilon^{3/2}}
                        + \dfrac{n}{\epsilon} \right) \cdot L F_0$ & $4$ & $\forall n \in \mathbb{N}$ \\
Exp. \cite{ying2021exponential}                
                    & $\mathcal{O} \left( \dfrac{\sigma^2}{n \epsilon^2}
                        + \dfrac{\zeta \log_{2} (n) + \sigma \sqrt{\log_{2} (n)}}{\epsilon^{3/2}}
                        + \dfrac{\log_{2} (n)}{\epsilon} \right) \cdot L F_0$ & $\ceil{\log_2 (n)}$ & $\forall n \in \mathbb{N}$ \\
$1$-peer Exp. \cite{ying2021exponential}      
                    & $\mathcal{O} \left( \dfrac{\sigma^2}{n \epsilon^2}
                        + \dfrac{\zeta \log_{2} (n) + \sigma \sqrt{\log_{2} (n)}}{\epsilon^{3/2}}
                        + \dfrac{\log_{2} (n)}{\epsilon} \right) \cdot L F_0$ & $1$ & A power of $2$ \\
$1$-peer Hypercube \cite{shi2016finite}
                    & $\mathcal{O} \left( \dfrac{\sigma^2}{n \epsilon^2}
                        + \dfrac{\zeta \log_{2} (n) + \sigma \sqrt{\log_{2} (n)}}{\epsilon^{3/2}}
                        + \dfrac{\log_{2} (n)}{\epsilon} \right) \cdot L F_0$ & $1$ & A power of $2$ \\
\textbf{Base-$(k+1)$ Graph (ours)}
                    & $\mathcal{O} \left( \dfrac{\sigma^2}{n \epsilon^2}
                        + \dfrac{\zeta \log_{k+1} (n) + \sigma \sqrt{\log_{k+1} (n)}}{\epsilon^{3/2}}
                        + \dfrac{\log_{k+1} (n)}{\epsilon} \right) \cdot L F_0$ & $k$ & $\forall n \in \mathbb{N}$ \\
\bottomrule
\end{tabular}}
\end{table}

\newpage
\section{Additional Experiments}

\subsection{Comparison of Base-$(k+1)$ and Simple Base-$(k+1)$ Graphs}
\label{sec:comparison_of_FAIRY_and_simple_FAIRY}

Fig.~\ref{fig:length} shows the length of the \simpleProposed{$(k+1)$} and \proposed{$(k+1)$}.
The results indicate that for all $k$,
the length of the \proposed{$(k+1)$} is less than the length of the \simpleProposed{$(k+1)$} in many cases.

\begin{figure}[H]
    \centering
    \vskip - 0.1 in
    \begin{subfigure}{0.32\hsize}
         \centering
         \includegraphics[width=\hsize]{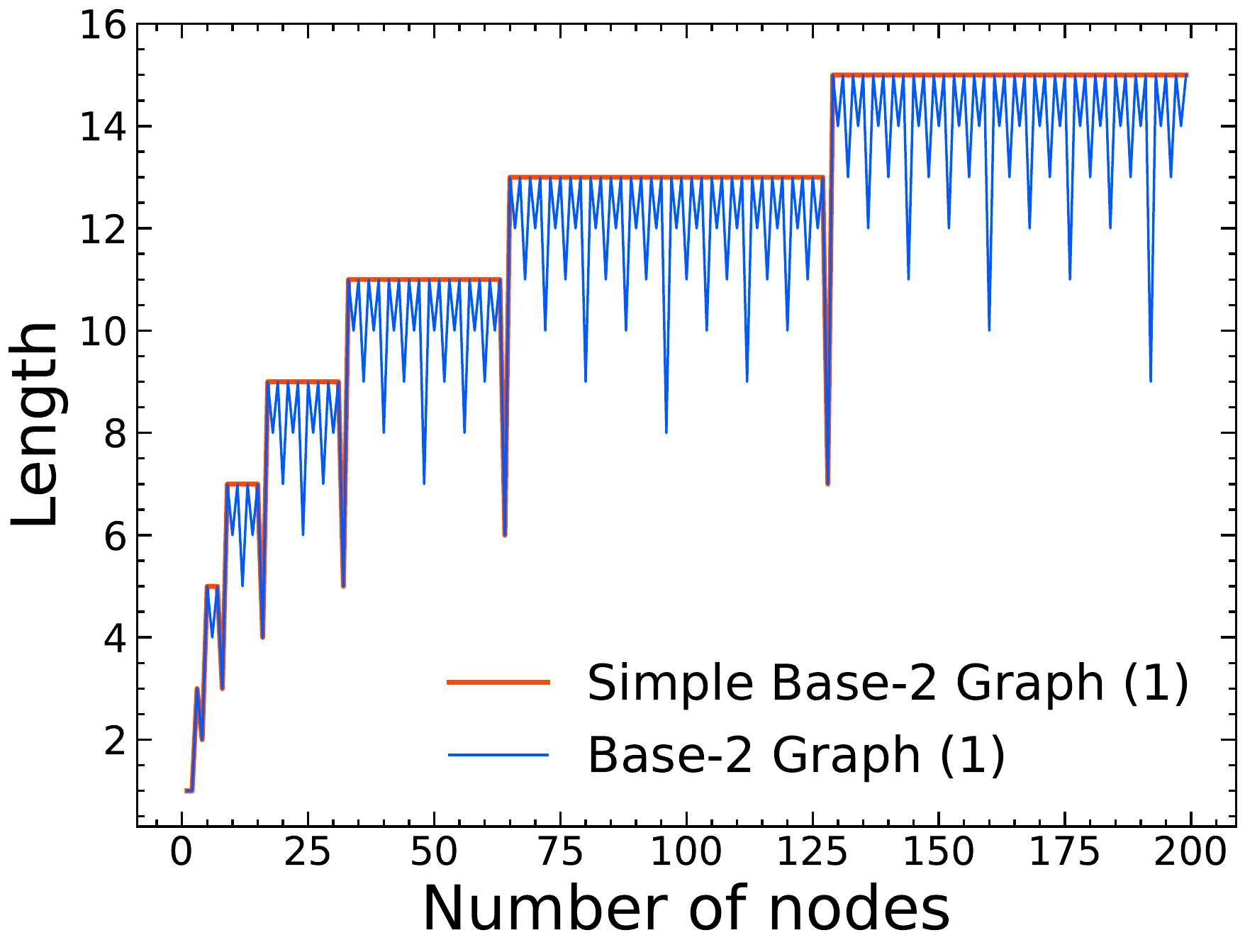}
     \end{subfigure}
    \begin{subfigure}{0.32\hsize}
         \centering
         \includegraphics[width=\hsize]{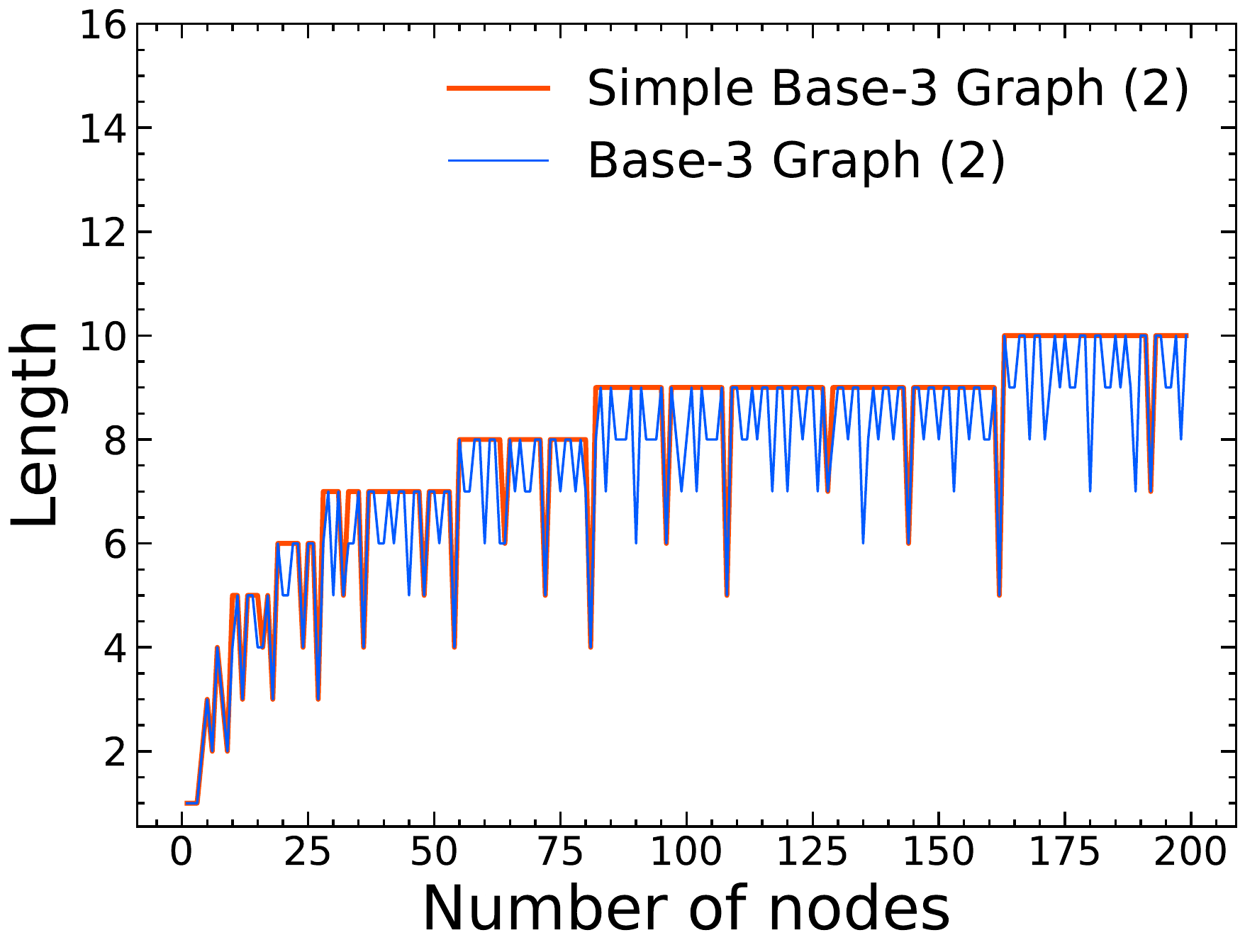}
     \end{subfigure}
    \begin{subfigure}{0.32\hsize}
         \centering
         \includegraphics[width=\hsize]{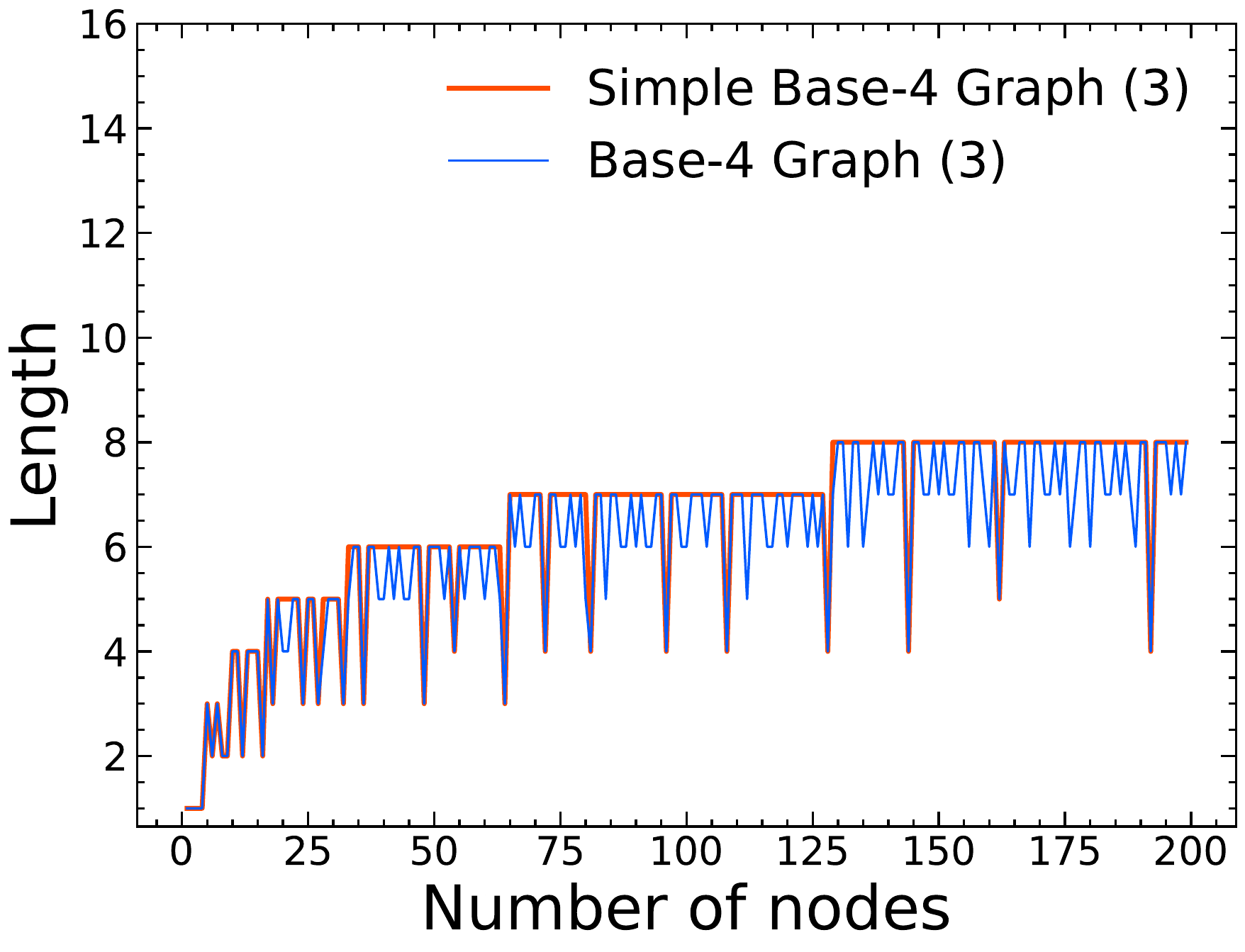}
     \end{subfigure}     
    \begin{subfigure}{0.32\hsize}
         \centering
         \includegraphics[width=\hsize]{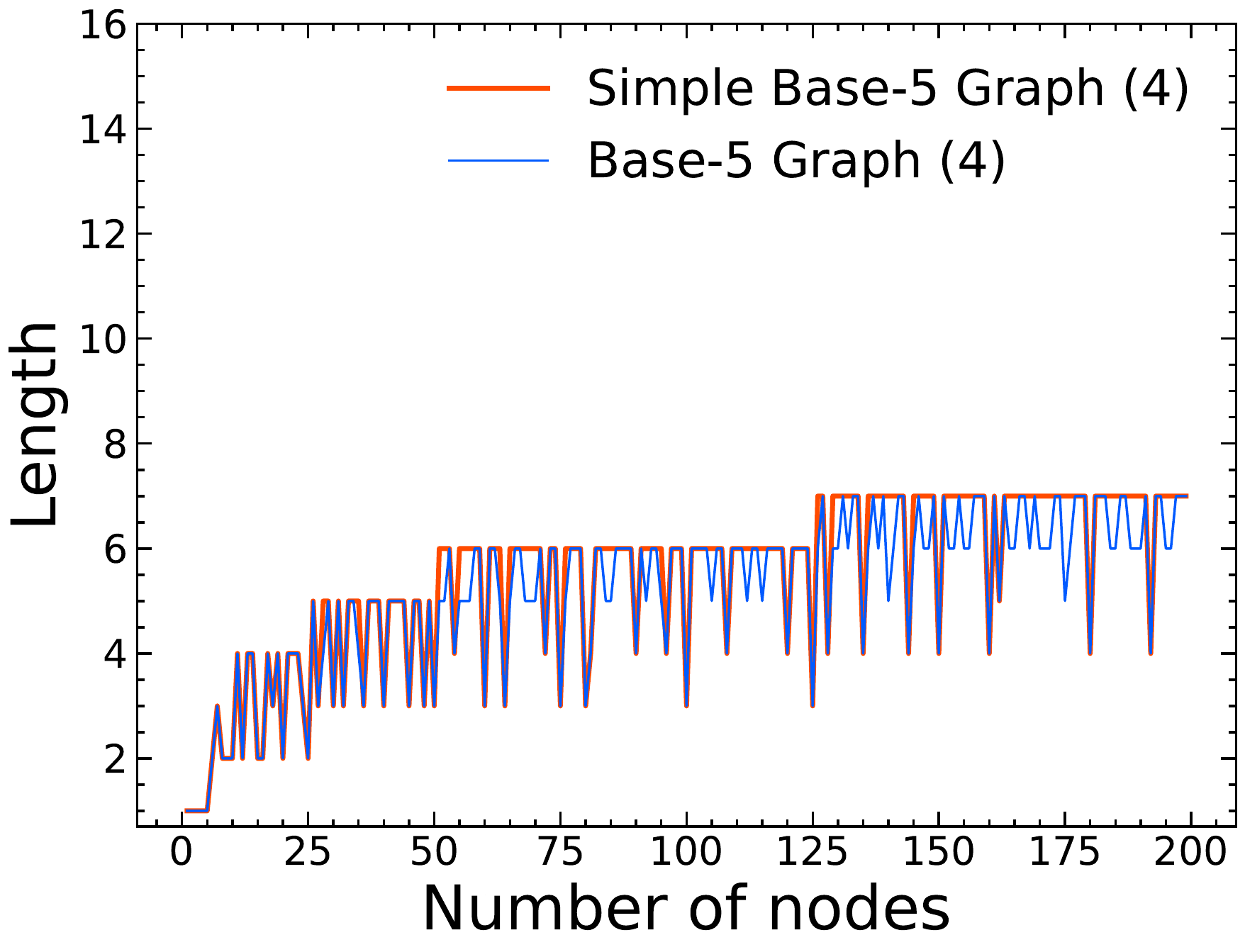}
     \end{subfigure}
    \begin{subfigure}{0.32\hsize}
         \centering
         \includegraphics[width=\hsize]{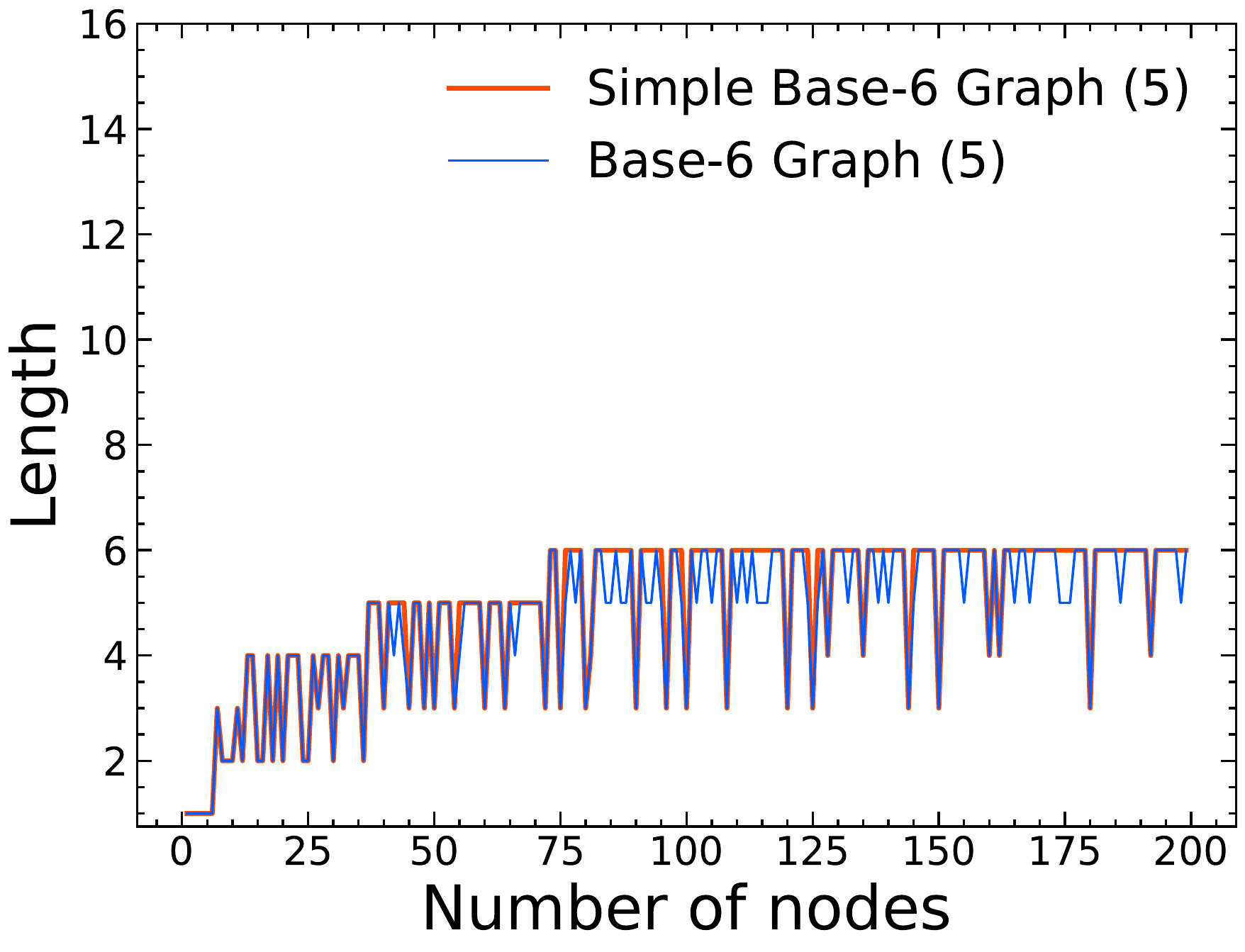}
     \end{subfigure}
    \begin{subfigure}{0.32\hsize}
         \centering
         \includegraphics[width=\hsize]{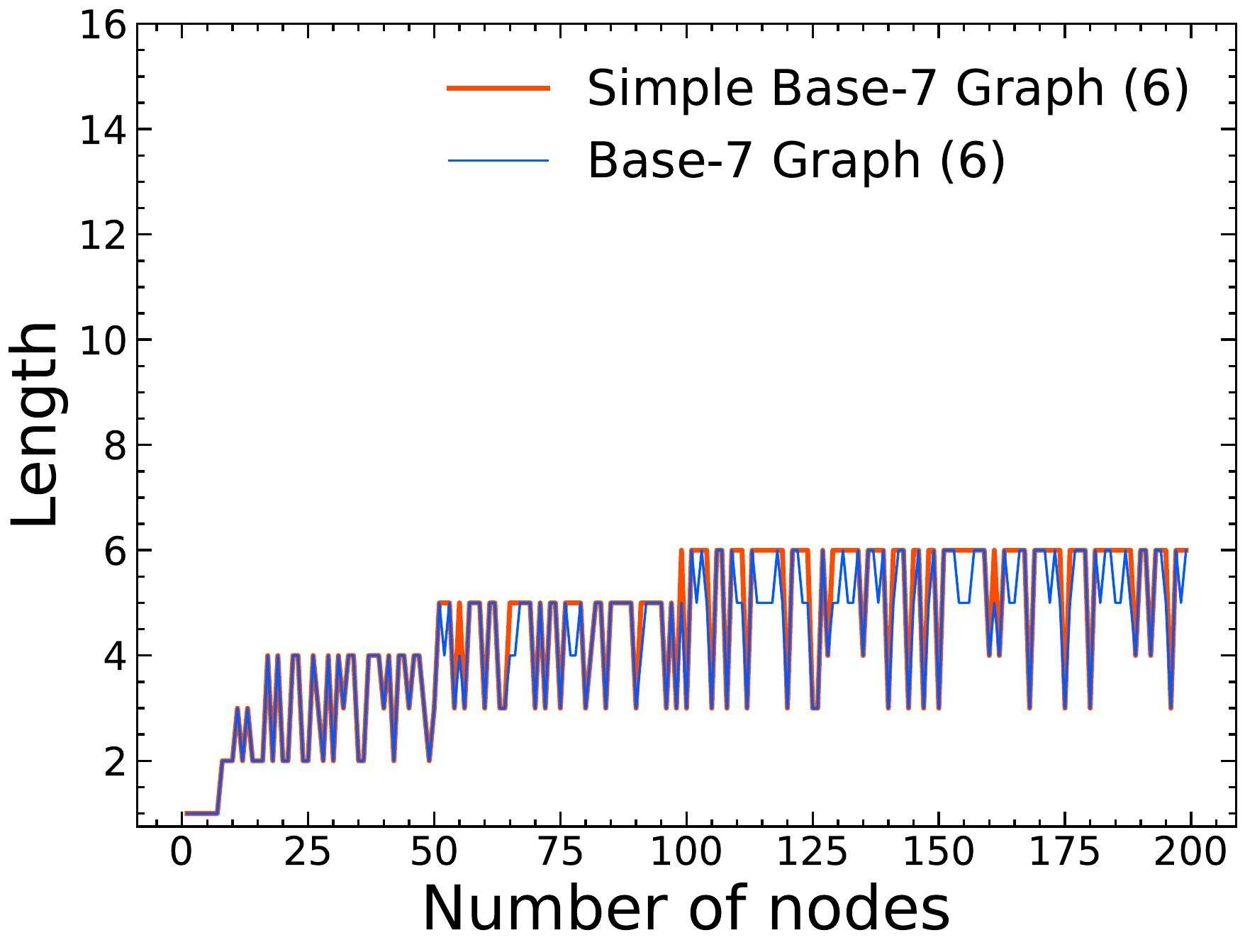}
     \end{subfigure}  
     \begin{subfigure}{0.32\hsize}
         \centering
         \includegraphics[width=\hsize]{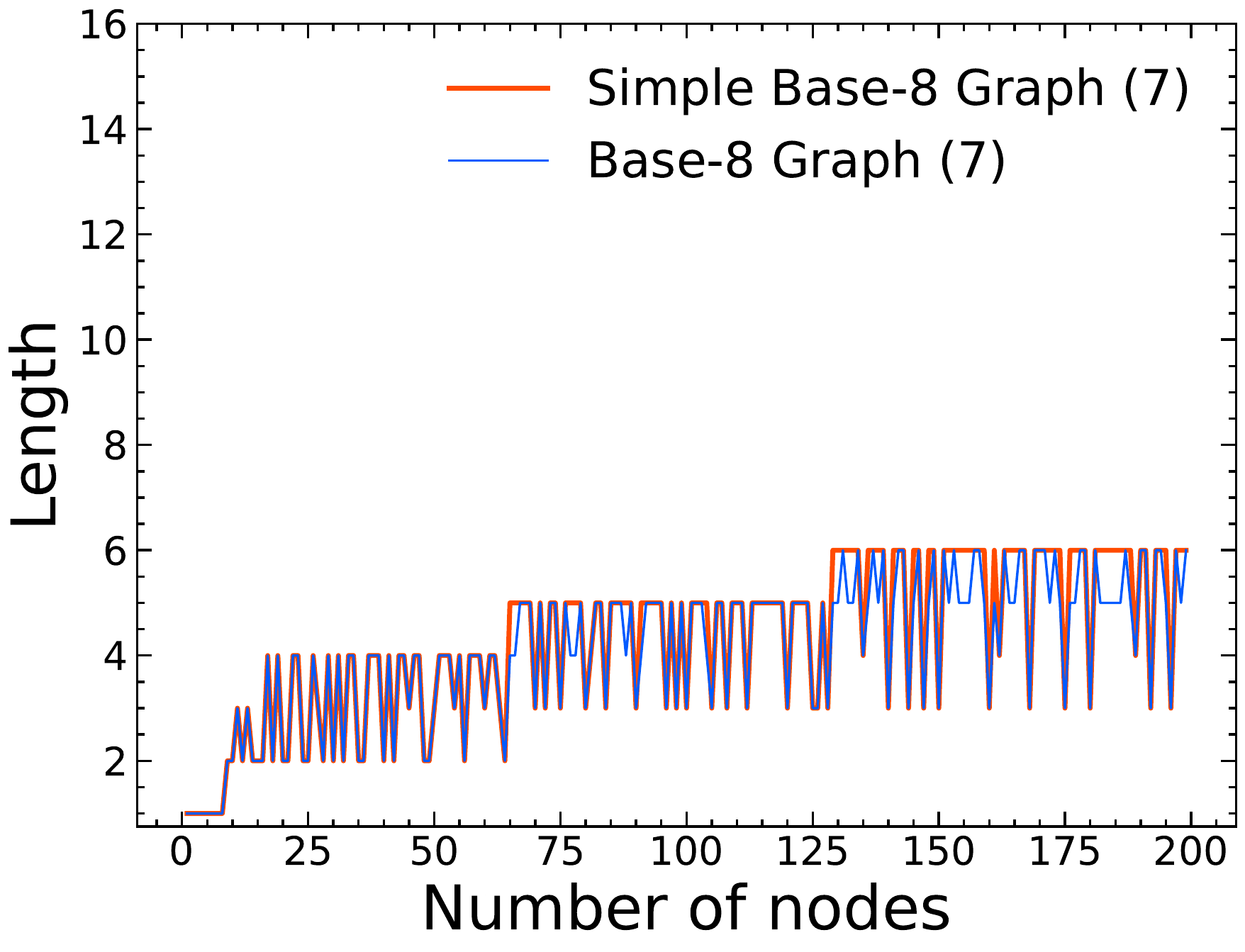}
     \end{subfigure}
    \begin{subfigure}{0.32\hsize}
         \centering
         \includegraphics[width=\hsize]{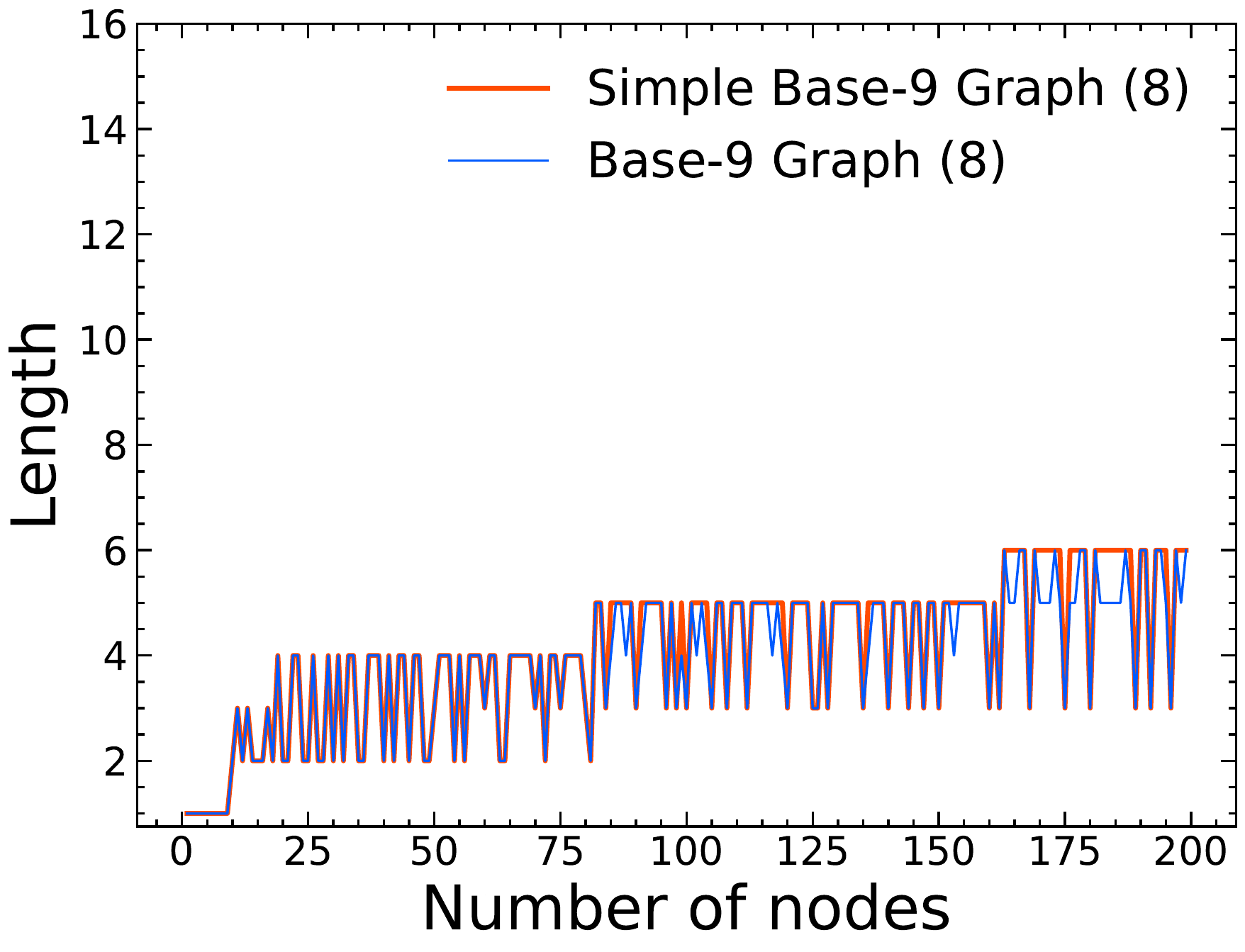}
     \end{subfigure}
    \begin{subfigure}{0.32\hsize}
         \centering
         \includegraphics[width=\hsize]{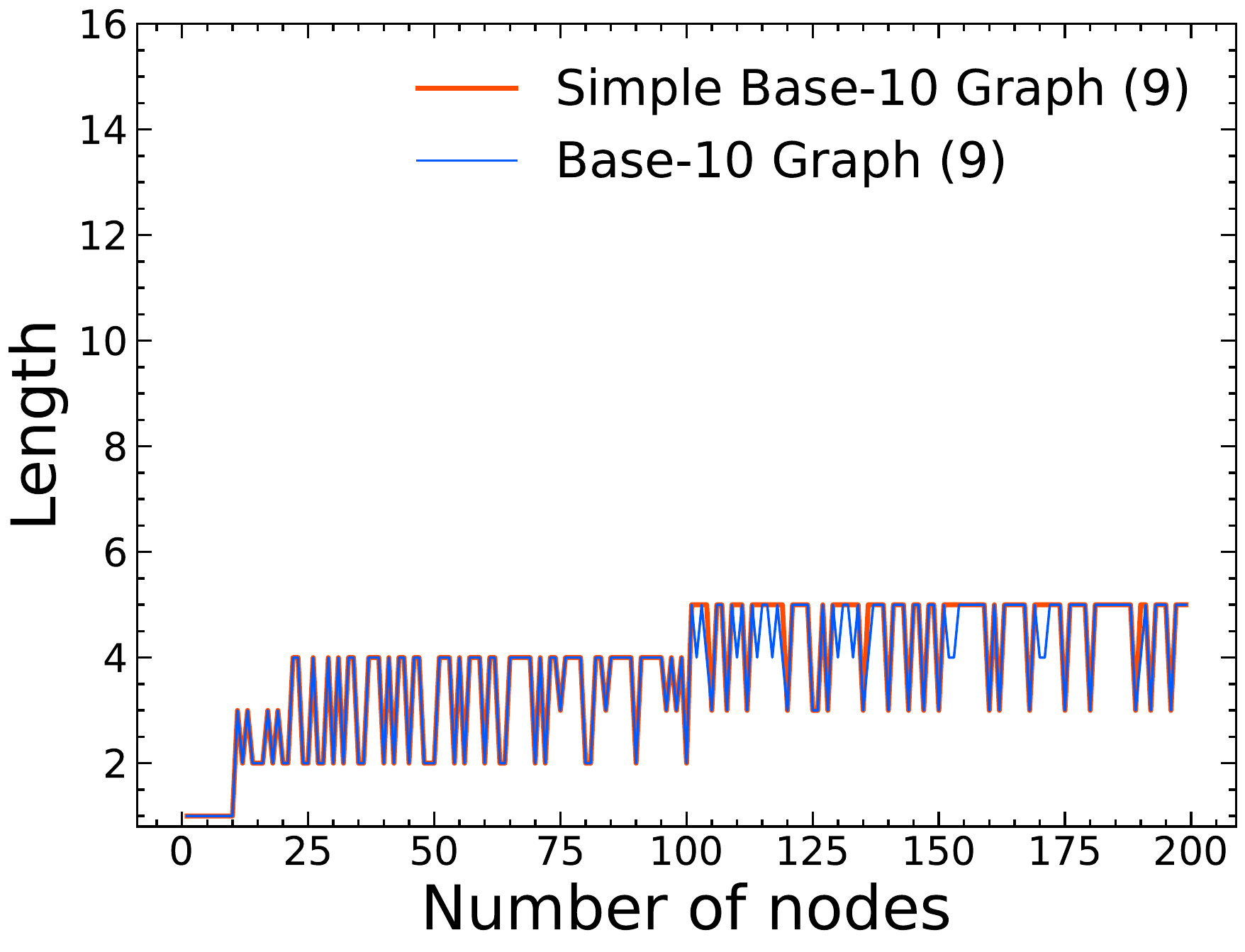}
     \end{subfigure}  
    \caption{Comparison of the length of the \simpleProposed{$(k+1)$} and \proposed{$(k+1)$}.}
    \label{fig:length}
    \vskip - 0.1 in
\end{figure}

\subsection{Consensus Rate}
\label{sec:other_consensns_optimization}

In Fig.~\ref{fig:consensus_with_power_of_two},
we demonstrate how consensus error decreases on various topologies when the number of nodes $n$ is a power of $2$.
The results indicate that the \proposed{$2$} and $1$-peer exponential graph can reach the exact consensus after the same finite number of iterations
and reach the consensus faster than other topologies.
Note that the \proposed{$2$} is equivalent to the $1$-peer hypercube graph when $n$ is a power of $2$.

\begin{figure}[h!]
    \centering
    \includegraphics[width=\hsize]{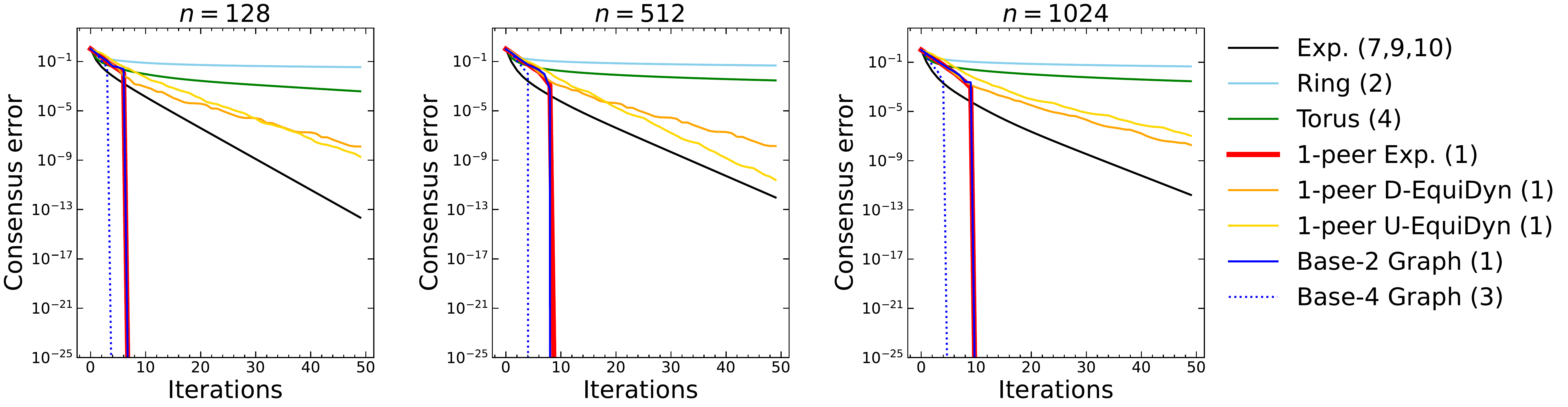}
    \caption{Comparison of consensus rates among different topologies when the number of nodes $n$ is a power of $2$. Because the \proposed{$\{3,5\}$} are the same as the \proposed{$\{2,4\}$}, respectively, when $n$ is a power of $2$, we omit the results of the \proposed{$\{3,5\}$}.}
    \label{fig:consensus_with_power_of_two}
\end{figure}

\subsection{Decentralized Learning}
\label{sec:additional_decentralized_learning}

\subsubsection{Comparison of Base-$(k+1)$ Graph and EquiStatic}
\label{sec:equistatic}

In this section, we compared the \proposed{$(k+1)$} with the \{U, D\}-EquiStatic \cite{song2022communication}.
The \{U, D\}-EquiStatic are dense variants of the $1$-peer \{U, D\}-EquiDyn,
and their maximum degree can be set as hyperparameters.
We evaluated the \{U, D\}-EquiStatic varying their maximum degrees;
the results are presented in Fig.~\ref{fig:equistatic}.
In both cases with $\alpha=10$ and $\alpha=0.1$,
the \proposed{$2$} can achieve comparable or higher final accuracy than all \{U, D\}-EquiStatic,
and the \proposed{$\{3,4,5\}$} outperforms all \{U, D\}-EquiStatic.
Thus, the \proposed{$(k+1)$} is superior to the \{U, D\}-EquiStatic from the perspective of achieving a balance between accuracy and communication efficiency.
\begin{figure}[h!]
    \centering
    \includegraphics[width=0.27\hsize]{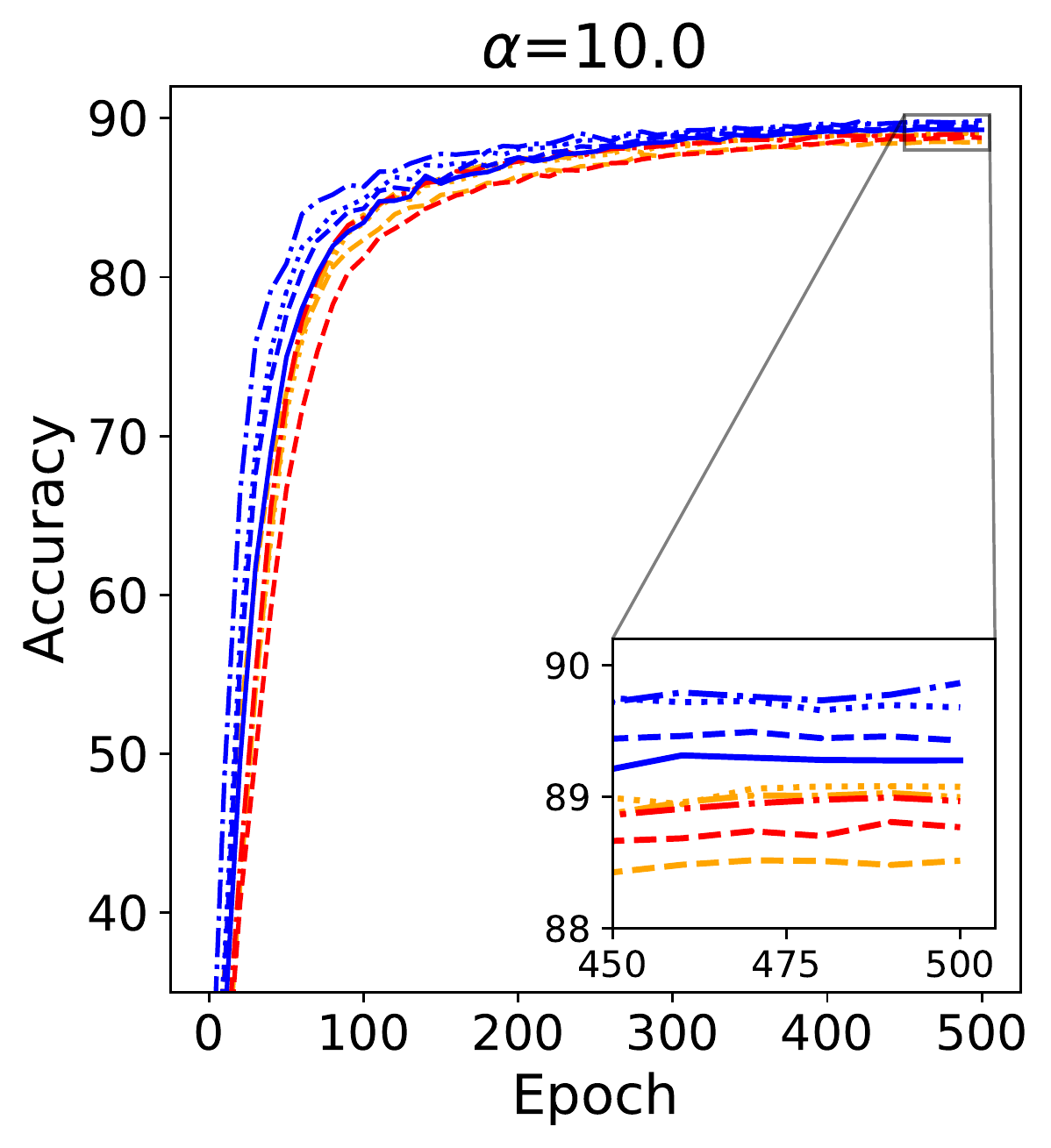}
    \includegraphics[width=0.455\hsize]{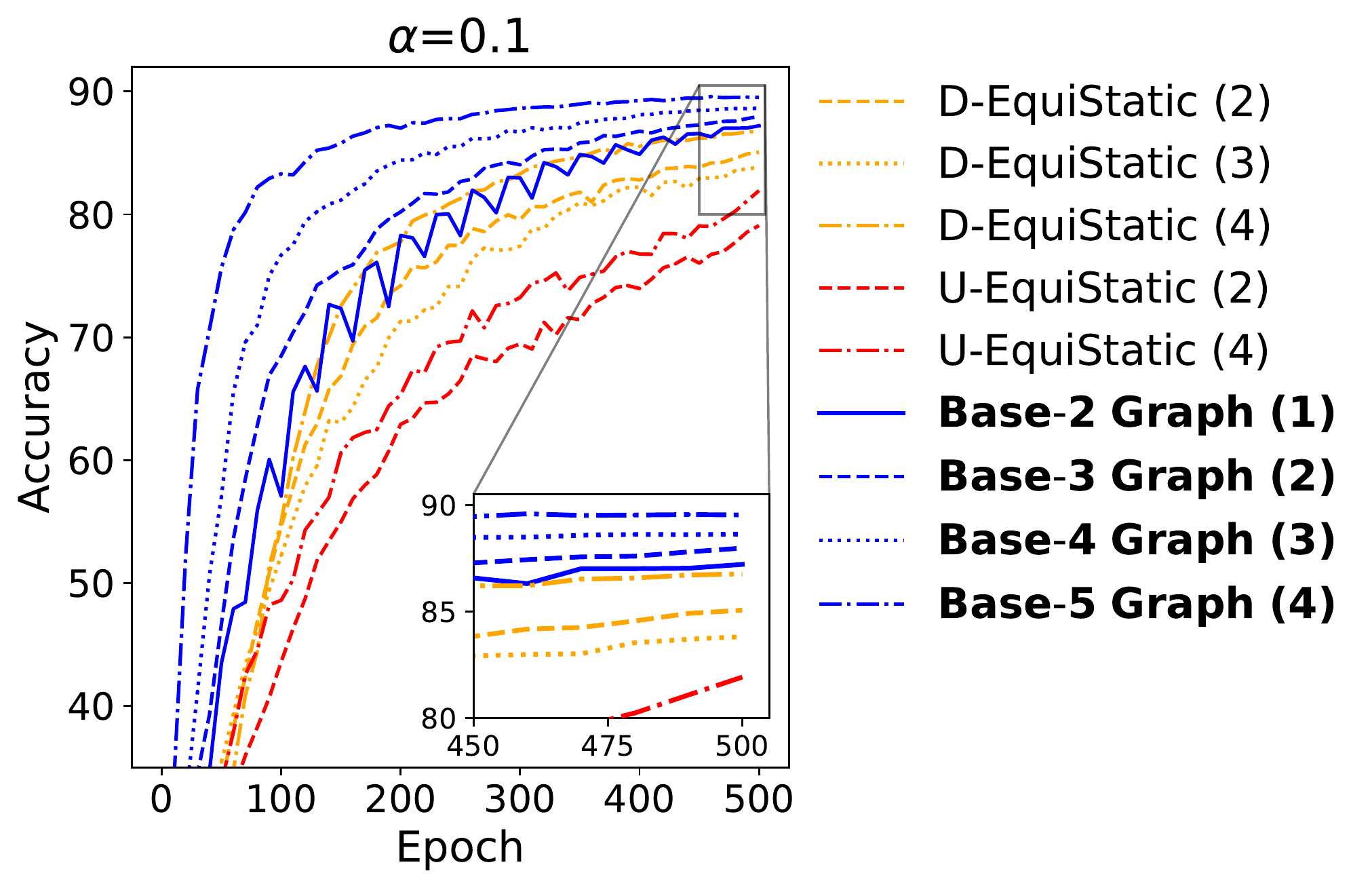}
    \vskip - 0.1 in
    \caption{Test accuracy (\%) of DSGD with CIFAR-10 and $n=25$. The number in the bracket is the maximum degree of a topology.}
    \label{fig:equistatic}
\end{figure}

\subsubsection{Comparison with Various Number of Nodes}
\label{sec:various_number_of_nodes}
In this section, we evaluated the effectiveness of the \proposed{$(k+1)$} when varying the number of nodes $n$.
Fig.~\ref{fig:various_number_of_nodes} presents the learning curves,
and Fig~\ref{fig:consensus_with_22_23_24_25} shows how consensus error decreases when $n$ is $21$, $22$, $23$, $24$, and $25$.
From Fig.~\ref{fig:various_number_of_nodes}, the \proposed{$2$} consistently outperforms the $1$-peer exponential graph
and can achieve a final accuracy comparable to that of the exponential graph.
Furthermore, the \proposed{$\{3,4,5\}$} can consistently outperform the exponential graph,
even though the maximum degree of the \proposed{$\{3,4,5\}$} is less than that of the exponential graph.

In Fig.~\ref{fig:results_with_16} presents the learning curve for $n=16$.
When the number of nodes is a power of two, the $1$-peer exponential graph is also finite-time convergence,
and the $1$-peer exponential graph and \proposed{$2$} achieve competitive accuracy.

\begin{figure}[h!]
    \centering
    \includegraphics[width=0.8\hsize]{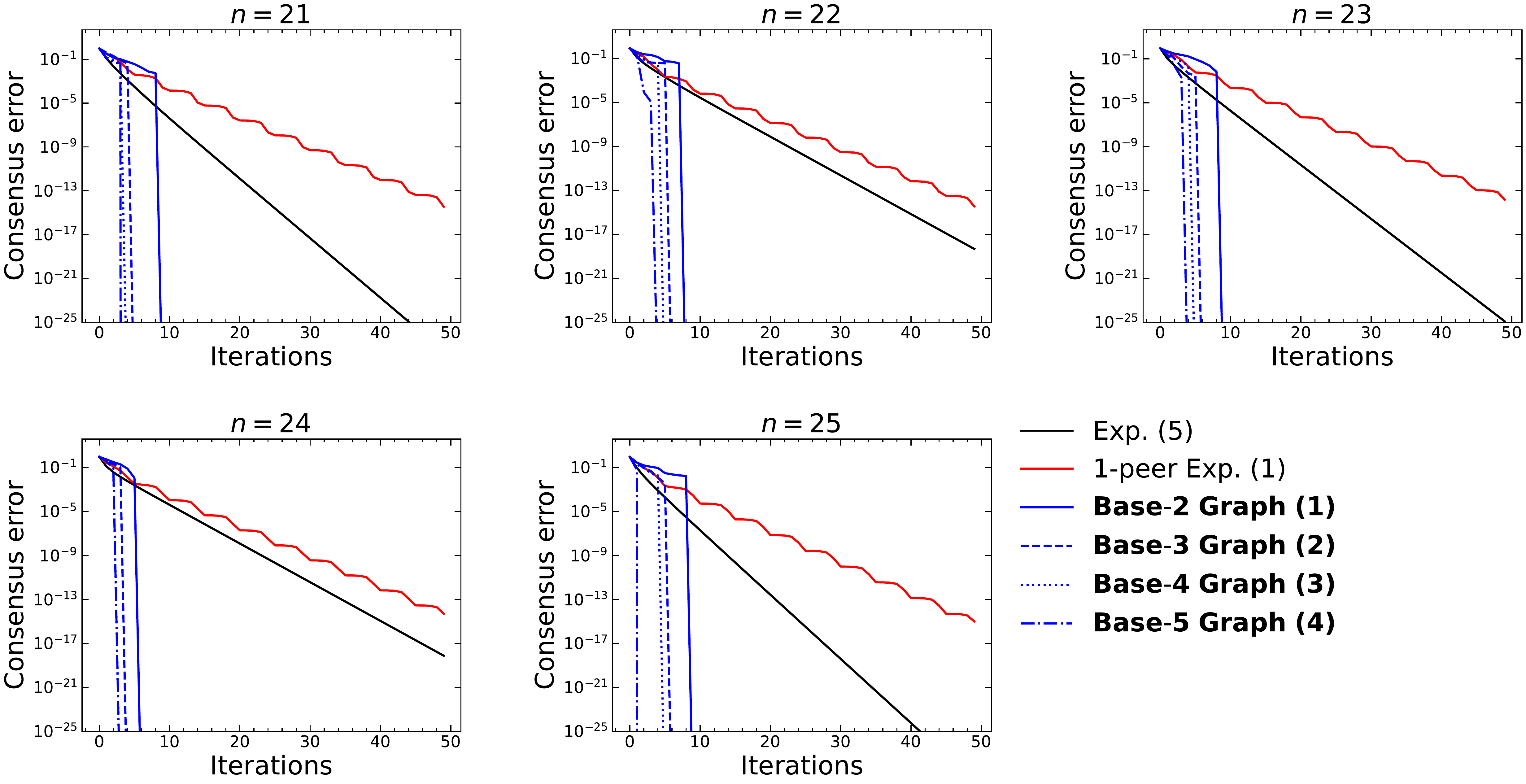}
    \caption{Comparison of consensus rates among different topologies. The number in the bracket denotes the maximum degree of a topology. We omit the results of the \proposed{$5$} when $n=24$ because the \proposed{$5$} and \proposed{$4$} are equivalent when $n=24$.}
    \label{fig:consensus_with_22_23_24_25}
\end{figure}

\begin{figure}[h!]
    \centering
    \includegraphics[width=0.28\hsize]{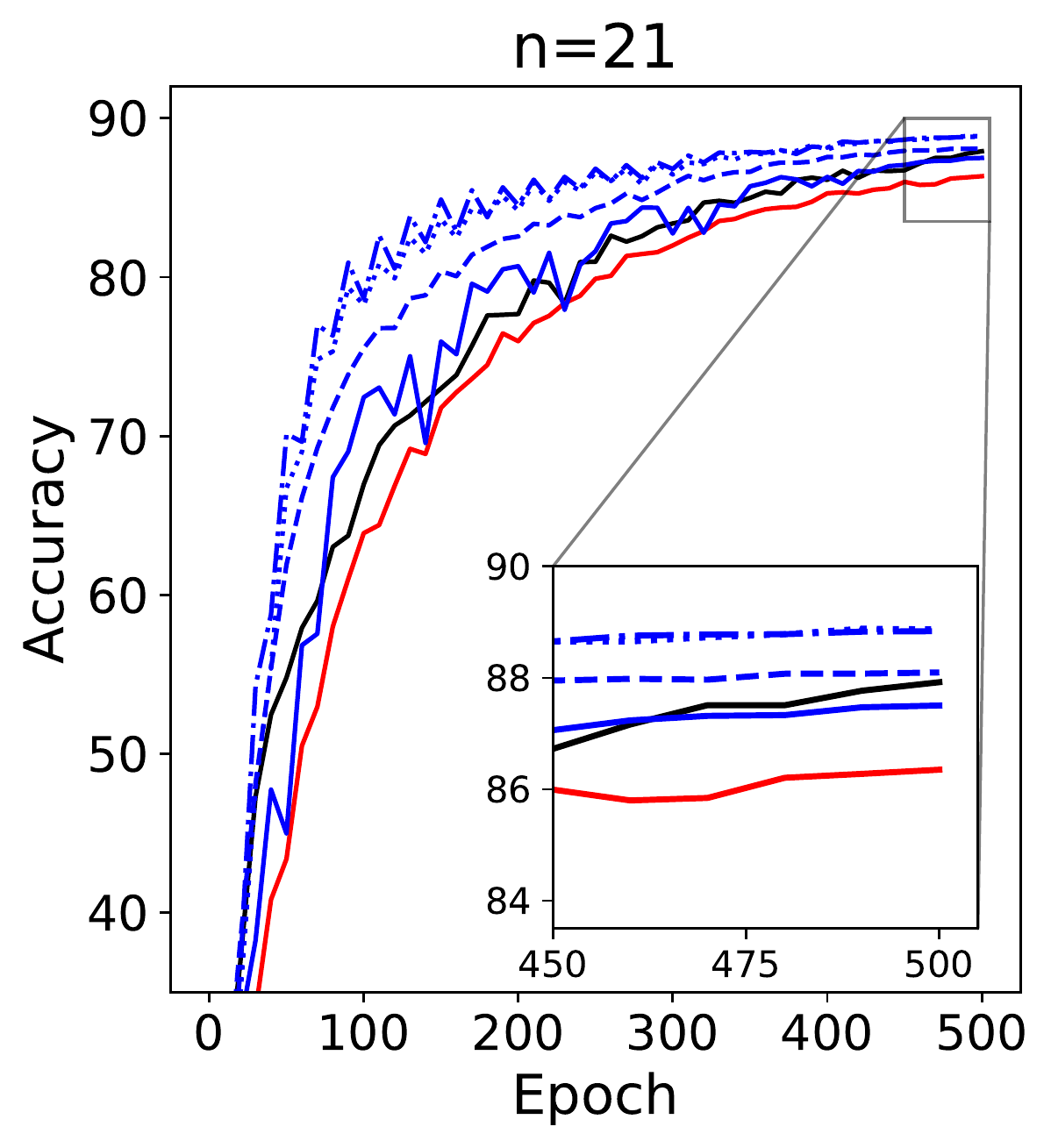}
    \includegraphics[width=0.48\hsize]{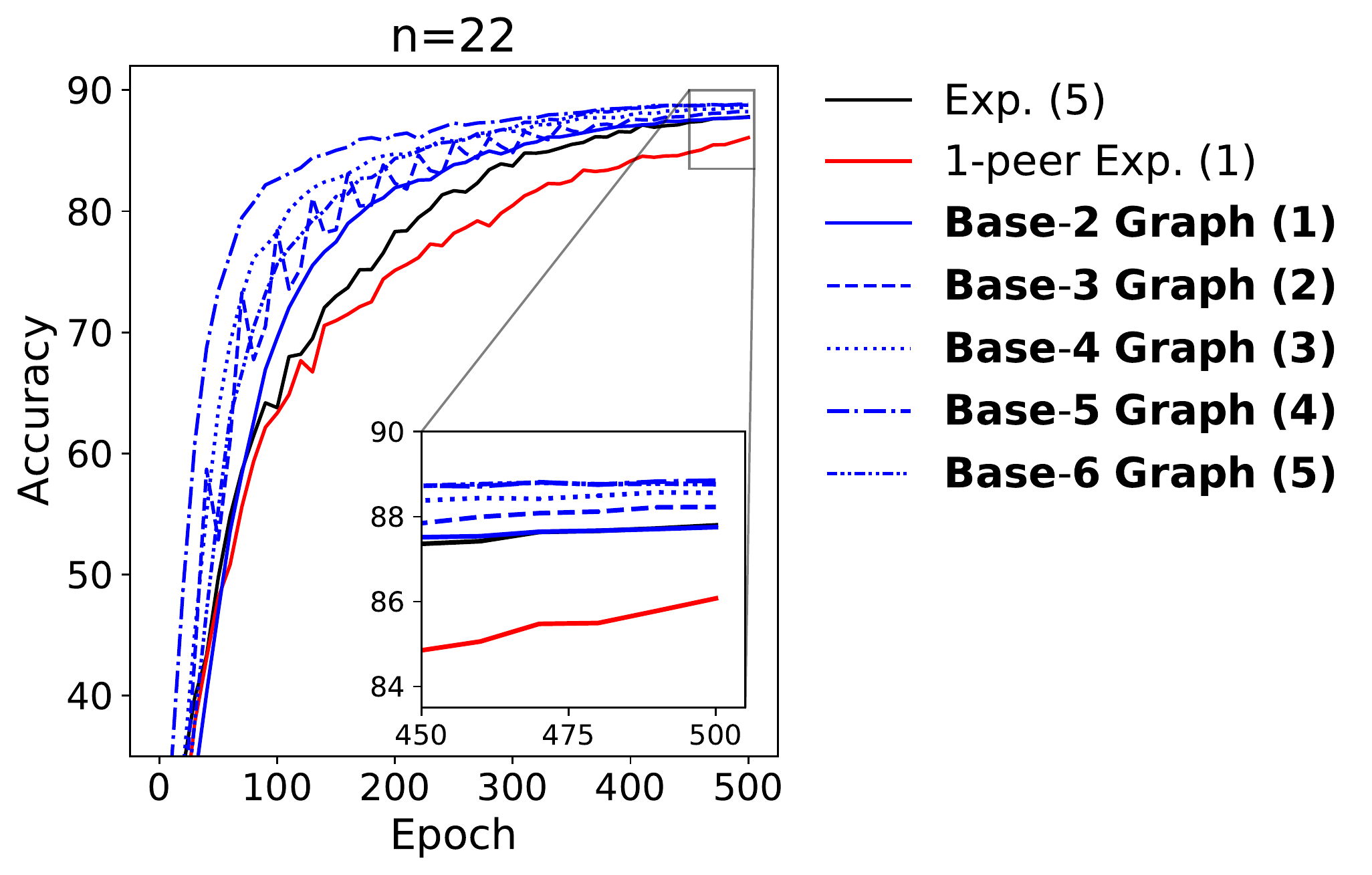}
    \includegraphics[width=0.28\hsize]{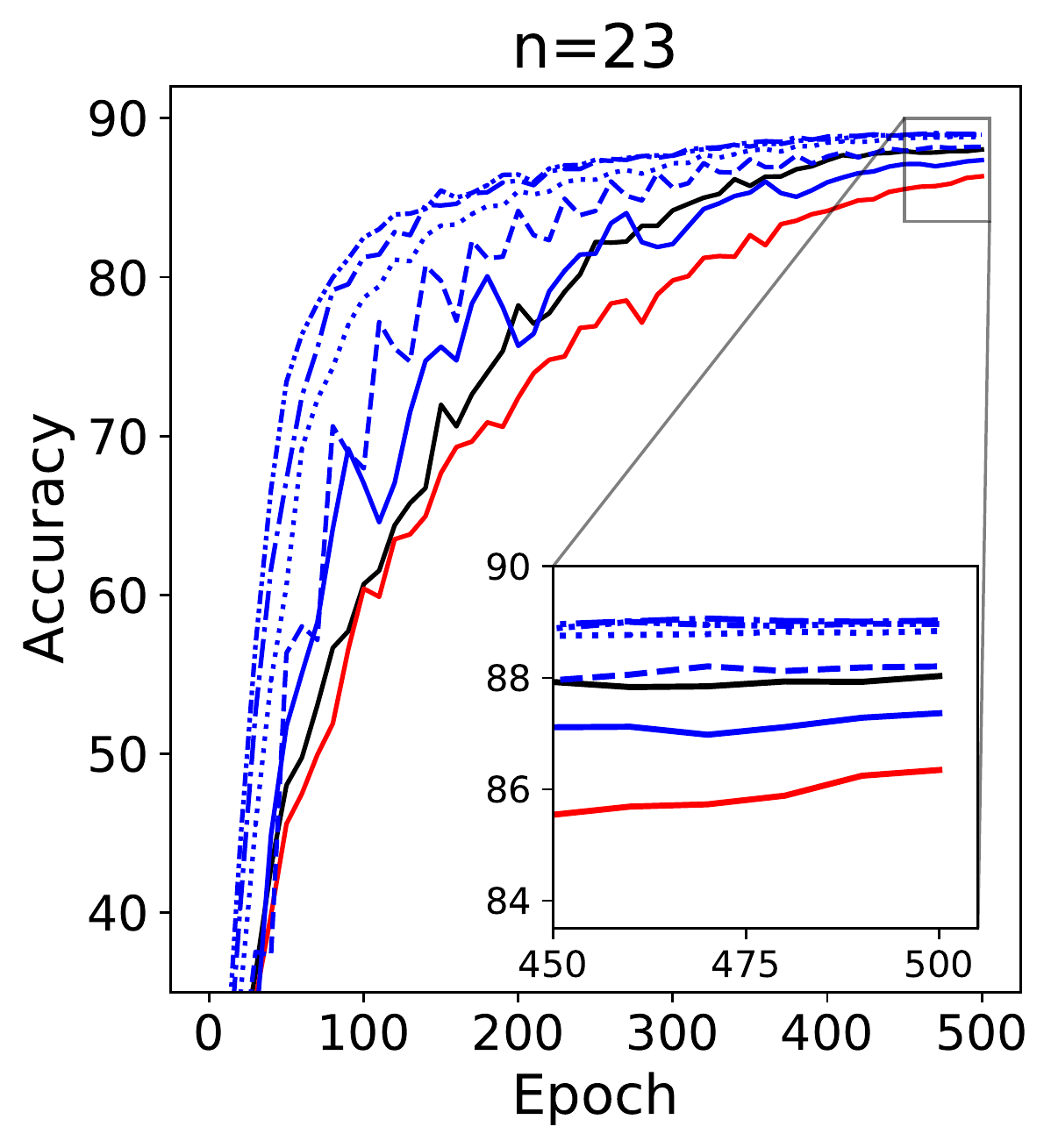} 
    \includegraphics[width=0.28\hsize]{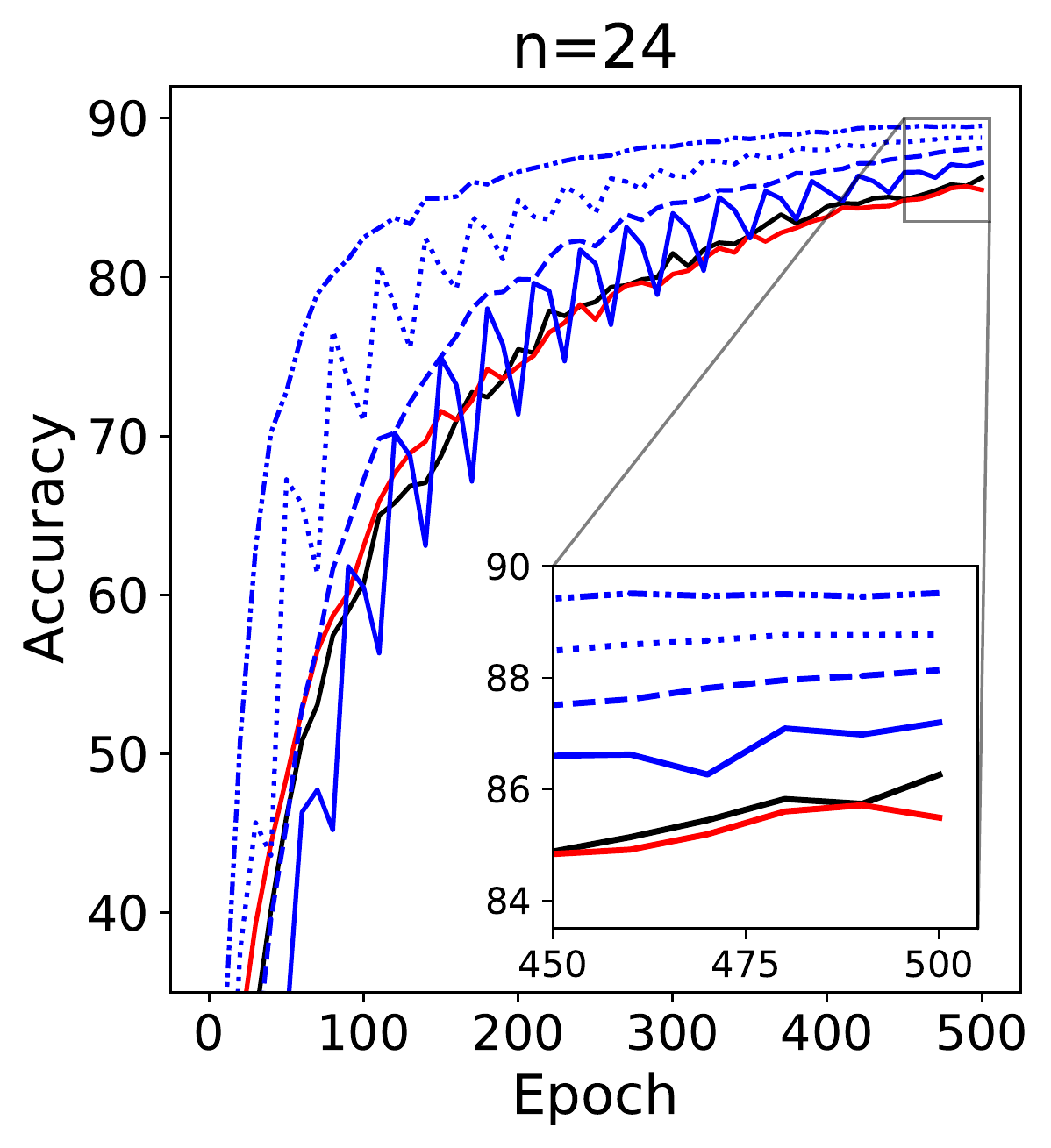}
    \includegraphics[width=0.28\hsize]{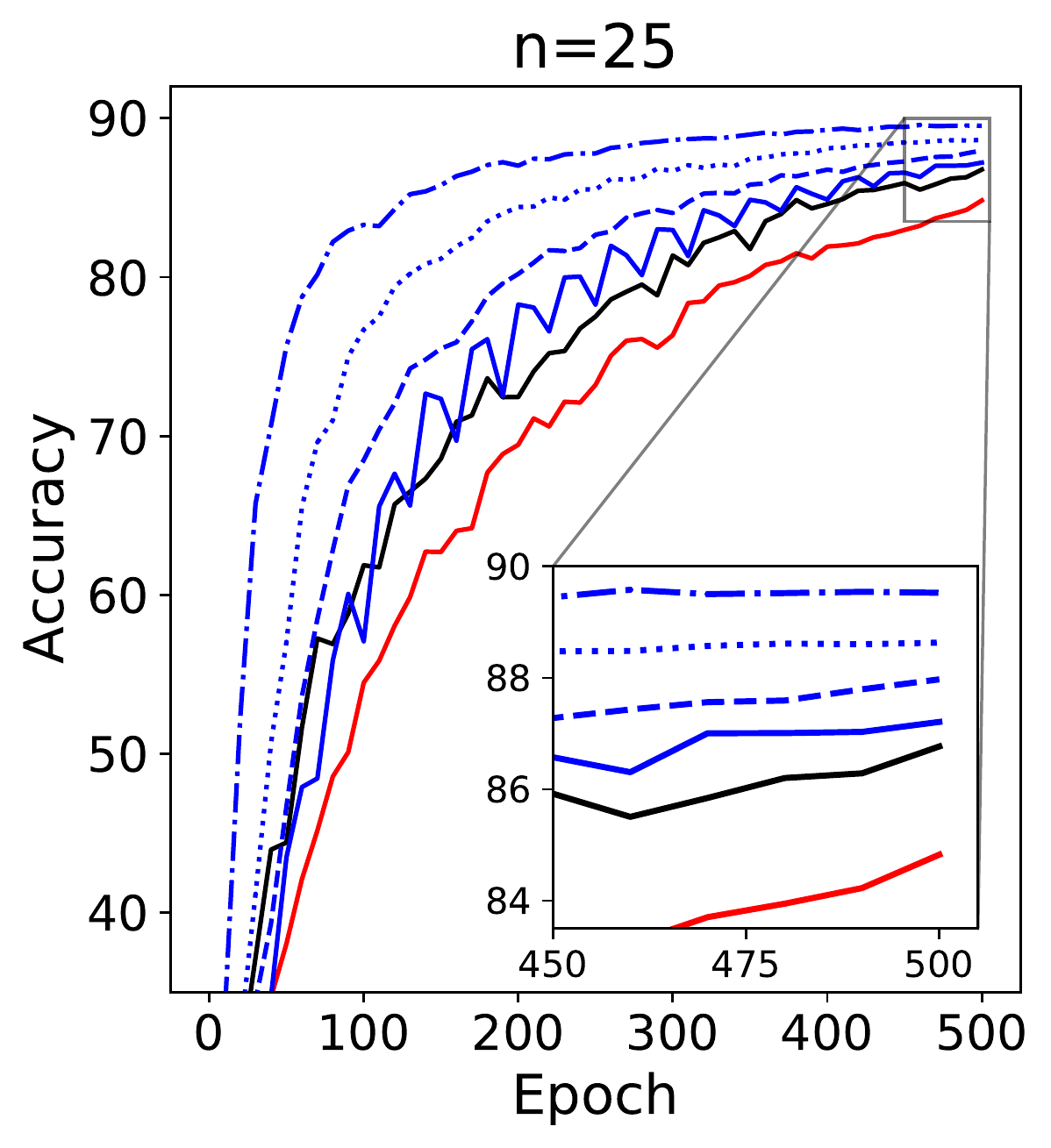}
    \vskip - 0.1 in
    \caption{Test accuracy (\%) of DSGD with CIFAR-10 and $\alpha=0.1$. The number in the bracket denotes the maximum degree of a topology. When $n=24$, we omit the results of the \proposed{$5$} because the \proposed{$5$} and \proposed{$4$} are equivalent. When $n=25$, we omit the results of the \proposed{$6$} because the \proposed{$6$} and \proposed{$5$} are equivalent.}
    \label{fig:various_number_of_nodes}
\end{figure}

\begin{figure}[h!]
    \centering
    \includegraphics[width=0.48\hsize]{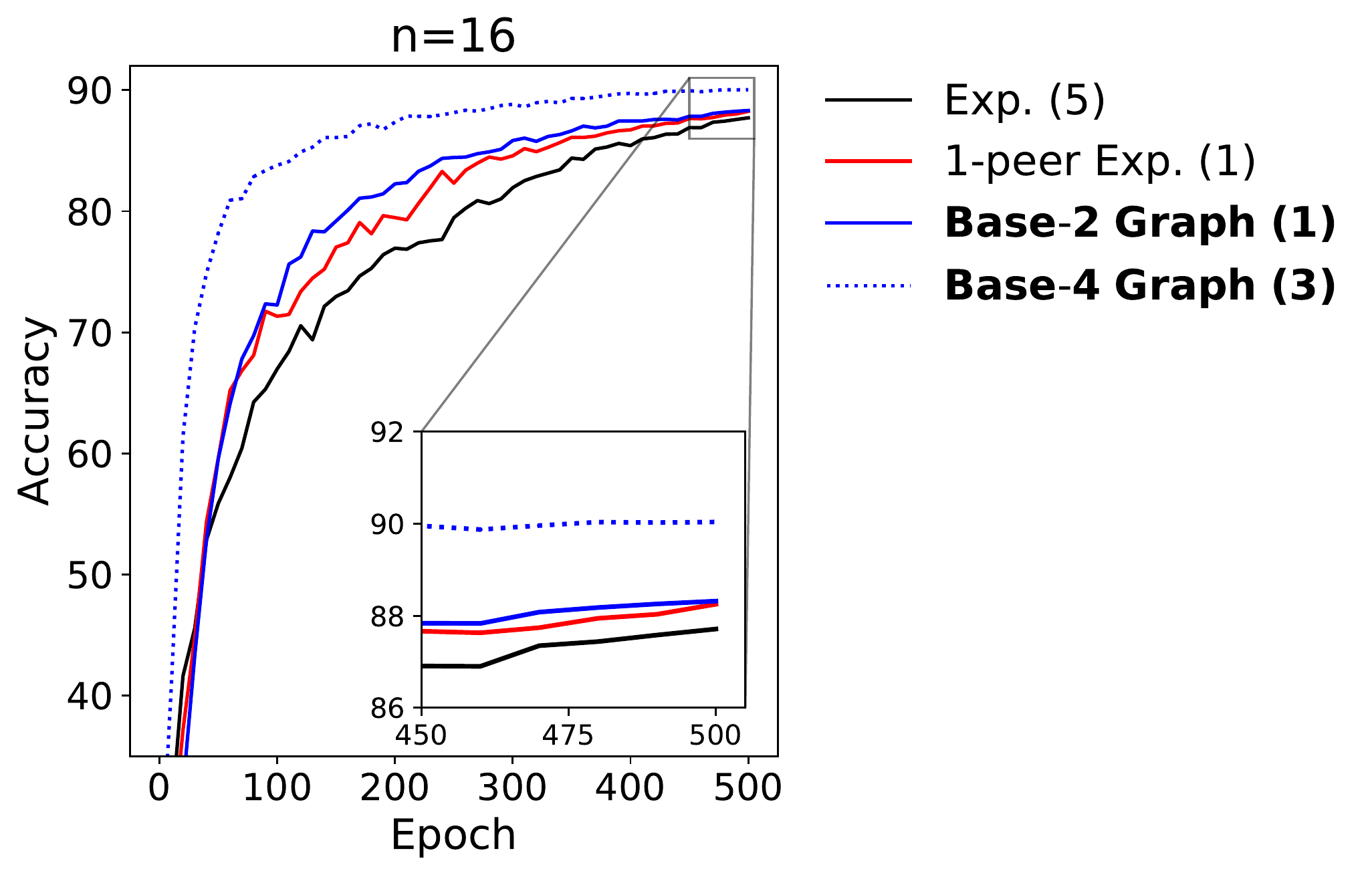}
    \caption{Test accuracy (\%) of DSGD with CIFAR-10 and $n=16$. The number in the bracket is the maximum degree of a topology. We omit the results of the \proposed{$3$} and \proposed{$5$} because these graphs are equivalent to the \proposed{$2$} and \proposed{$4$}, respectively.}
    \label{fig:results_with_16}
\end{figure}

\newpage
~
\newpage
\section{Results with Other Neural Network Architecture}
\label{sec:resnet}
In this section, we evaluate the effectiveness of the \proposed{$(k+1)$} with other neural network architecture.
Fig.~\ref{fig:resnet} shows the accuracy when we use ResNet-18 \cite{he2016deep}.
The results are consistent with the results with VGG-11 shown in Sec.~\ref{sec:experiment}.
\begin{figure}[h!]
    \centering
    \includegraphics[width=0.27\hsize]{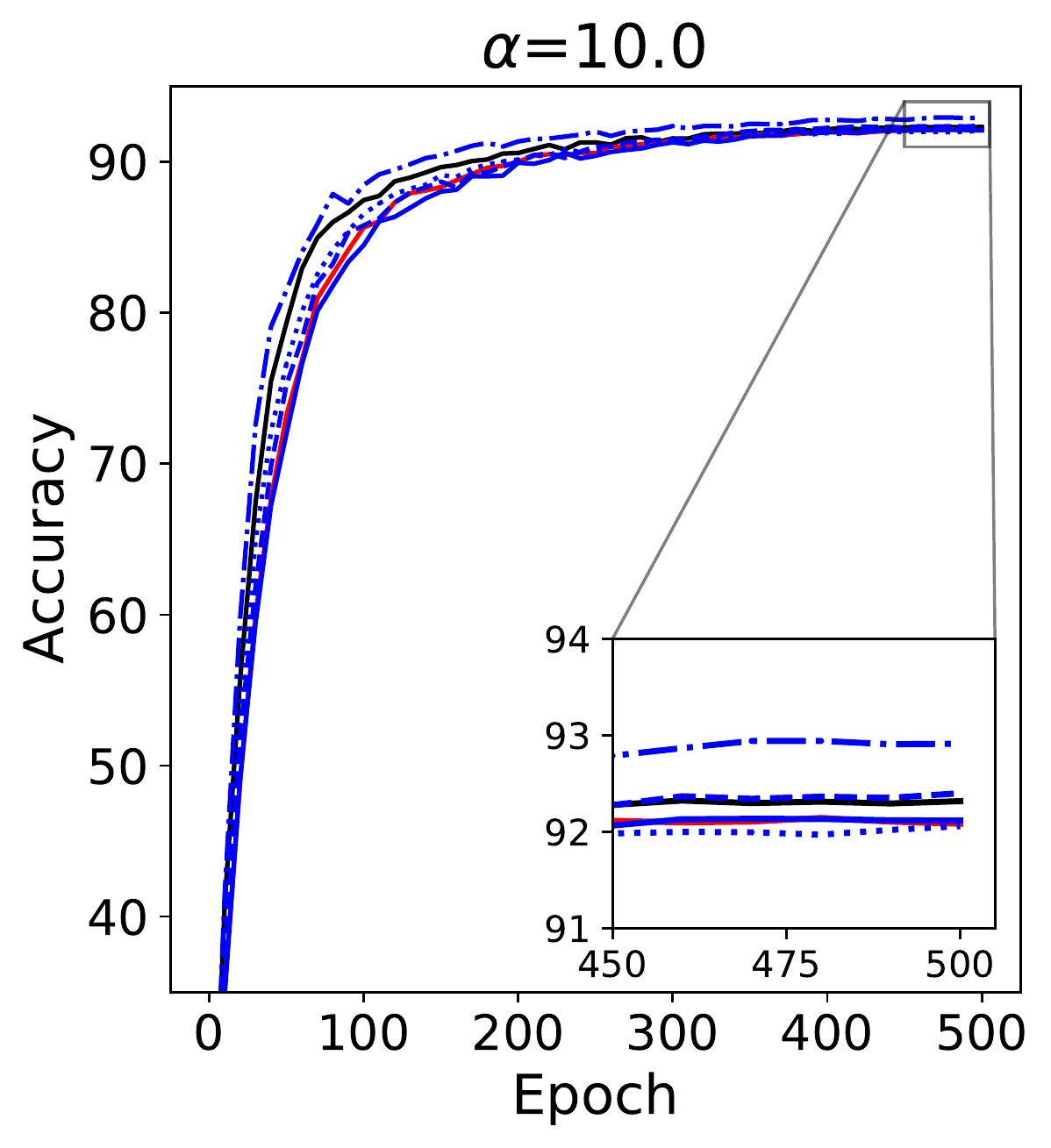}
    \includegraphics[width=0.46\hsize]{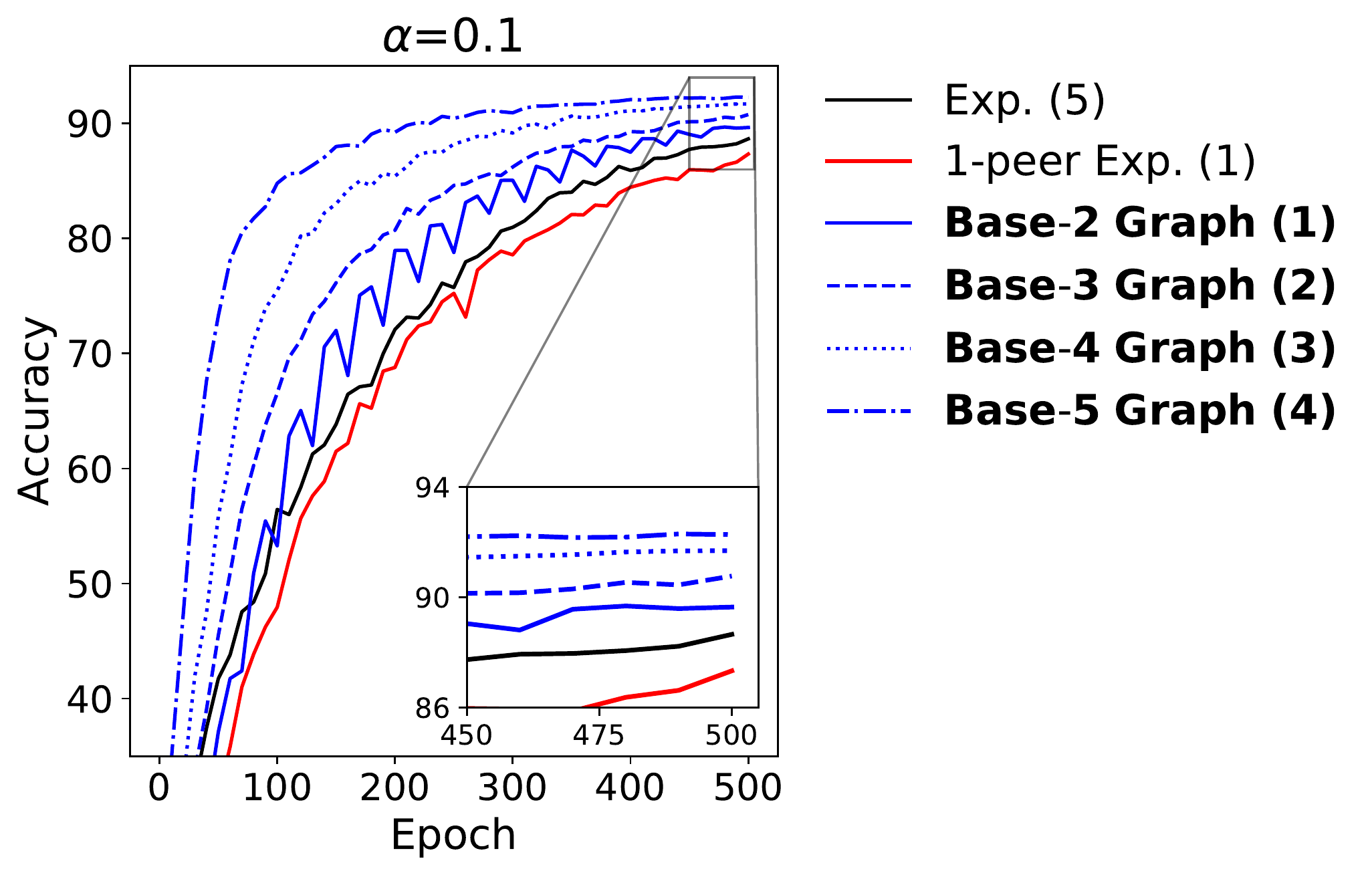}
    \vskip - 0.1 in
    \caption{Test accuracy (\%) of DSGD with $n=25$, CIFAR-10, and ResNet. The number in the bracket denotes the maximum degree of a topology.}
    \label{fig:resnet}
\end{figure}

\section{Hyperparameter Setting}
\label{sec:hyperparameter}

Tables \ref{table:hyperparameter_fashion} and \ref{table:hyperparameter_cifar} list the detailed hyperparameter settings used in Secs.~\ref{sec:experiment} and \ref{sec:additional_decentralized_learning}.
We ran all experiments on a server with eight Nvidia RTX 3090 GPUs.

\begin{table}[h!]
    \caption{Hyperparameter settings for Fashion MNIST with LeNet.}
    \label{table:hyperparameter_fashion}
    \centering
    \resizebox{0.7\linewidth}{!}{
    \begin{tabular}{ll}
    \toprule
     Dataset                     & Fashion MNIST \\
     Neural network architecture & LeNet \cite{lecun1998gradientbased} with group normalization \cite{wu2018group} \\
     \midrule
     Data augmentation & \textbf{RandomCrop} of PyTorch \\
     Step size    & Grid search over $\{ 0.1, 0.01, 0.001 \}$. \\
     Momentum     & $0.9$ \\
     Batch size   & $32$  \\
     Step size scheduler & Cosine decay \\
     Step size warmup &  10 epochs \\
     The number of epochs & $200$ \\
    \bottomrule
    \end{tabular}}
\end{table}

\begin{table}[h!]
    \caption{Hyperparameter settings for CIFAR-$\{10,100\}$ with $\{$VGG-11, ResNet-18$\}$.}
    \label{table:hyperparameter_cifar}
    \centering
    \resizebox{\linewidth}{!}{
    \begin{tabular}{ll}
    \toprule
     Dataset                     & CIFAR-$\{ 10, 100 \}$\\
     Neural network architecture & $\{$VGG-11 \cite{simonyanZ2014very}, ResNet-18 \cite{he2016deep}$\}$ with group normalization \cite{wu2018group} \\
     \midrule
     Data augmentation & \textbf{RandomCrop}, \textbf{RandomHorizontalFlip}, \textbf{RandomErasing} of PyTorch \\
     Step size    & Grid search over $\{ 0.1, 0.01, 0.001 \}$. \\
     Momentum     & $0.9$ \\
     Batch size   & $32$  \\
     Step size scheduler & Cosine decay \\
     Step size warmup &  10 epochs \\
     The number of epochs & $500$ \\
    \bottomrule
    \end{tabular}}
\end{table}

\end{document}